\newtheorem{assumption}{Assumption}
\newtheorem{lemma}{Lemma}
\newtheorem{theorem}{Theorem}
\newtheorem{remark}{Remark}
\newtheorem{example}{Example}
\begin{document}
	\title{Joint Optimization of Concave Scalarized Multi-Objective Reinforcement Learning with Policy Gradient Based Algorithm}

	\author{\name Qinbo Bai \email bai113@purdue.edu \\
		\addr Purdue University
		\AND
		\name Mridul Agarwal \email agarw180@purdue.edu \\
		\addr Purdue University, 
		\AND
		\name Vaneet Aggarwal \email vaneet@purdue.edu \\
		\addr Purdue University
	}

	\maketitle

	\begin{abstract}
Many engineering problems have multiple objectives, and the overall aim is to optimize a non-linear  function of these objectives. In this paper, we formulate the problem of maximizing a non-linear concave function of multiple long-term objectives. A policy-gradient based model-free algorithm is proposed for the problem. To compute an estimate of the gradient, an asymptotically biased estimator is proposed. The proposed algorithm is shown to achieve convergence to within an $\epsilon$ of the global optima after sampling $\mathcal{O}(\frac{M^4\sigma^2}{(1-\gamma)^8\epsilon^4})$ trajectories where $\gamma$ is the discount factor and $M$ is the number of the agents, thus achieving the same dependence on $\epsilon$ as the policy gradient algorithm for the standard reinforcement learning.\footnote{This is an updated version of the previous manuscript after addressing comments from JAIR reviewers.}

\end{abstract}
	\section{Introduction}
The standard formulation of reinforcement learning (RL), which aims to find the optimal policy to optimize the cumulative reward, has been well studied in the recent years. Compared with the model-based algorithms, model-free algorithms do not require the estimation of the transition dynamics and  can be extended to the continuous space. Value function based  algorithms such as Q-learning \cite{watkins1992q,jin2018}, SARSA \cite{rummery1994line}, Temporal Difference (TD) \cite{sutton1988learning} and policy based algorithms such as policy gradient \cite{sutton2000} and natural policy gradient \cite{Kakade2002} have been proposed based on the Bellman Equation, which is a result of the additive structure for the standard RL.

However, many applications require more general non-linear reward functions. As an example, risk-sensitive  objectives have been considered in \cite{mihatsch2002risk}. \cite{Hazan2019} studies the problem of maximizing the entropy of state-action distribution. Further, many realistic applications have multiple objectives, e.g., capacity and power usage in the communication system \cite{aggarwal2017joint}, latency and energy consumption in queueing systems \cite{badita2020optimal}, efficiency and safety in robotic systems \cite{nishimura2020l2b}. 


In this paper, we consider a setting that jointly optimizes a general concave function of the cumulative reward from multiple objectives. 
{
\begin{equation}
	\max_{\pi} f(J_1^{\pi},\cdots,J_M^{\pi})
\end{equation} 
where $J_m^{\pi}$ is the value function following policy $\pi$ for $m^{th}$ objective and $f$ is the general concave function. The detailed formulation can be found in Section \ref{sec_formulation}.} With this definition, a non-linear concave function of single objective becomes a special case. Further, fair allocation of resources among multiple users require a non-linear function of the rewards to each user (which correspond to the multiple objectives) \cite{Tian2010}, and is thus a special case of this formulation. {In the following, we provide two examples to better motivate our formulation.

\begin{example}
	(Communication System) In the communication system, there is a wireless scheduler to which $M$ users are connected. Each user can exist in two states, \textit{good} or \textit{bad}. The action is the user to which the scheduler allocates the resource. This system has $2^M$ states with $M$ actions. At time $t$, each user $m$ achieves different rates $r_{m,t}$ based on their states and resource allocation. The joint objective function is proportional fairness or sum-logarithmic utility defined as:
\begin{align}
	f(J_1^{\pi},J_2^{\pi},\cdots,J_M^{\pi}) = \sum\nolimits_{m=1}^M \log\left(J_m^\pi\right)
\end{align}
where $J_m^\pi$ is the value function of user $m$ using policy $\pi$.
\end{example}
\begin{example}
	(Queuing System) There is a server serving $M$ queues with Poisson arrivals with different arrival rates. The system state is $M$ dimensional vector of the length of the $M$ queues. The action at each time is the queue which the server serves. At time $t$, each queue $m$ achieves a reward of $1$ unit if a customer from this queue is served.  The joint objective function is $\alpha$ fairness utility (with $\alpha = 2$) defined as:
	
	\begin{align}
		f(J_1^{\pi},J_2^{\pi},\cdots,J_M^{\pi}) = -\sum\nolimits_{m=1}^M \frac{1}{J_m^\pi}
	\end{align}
where $J_m^\pi$ is the value function of queue $m$ using policy $\pi$.

\end{example}

Note that in both  the examples, the value of the function cannot be calculated using reward at time $t$ (or $f(r_{1,t},  \cdots, r_{M,t})$ cannot be used for these problems), as the users which are not allocated wireless resource or the queues which are not served receive $0$ reward and the function value is $-\infty$ . 
}

Such a setup was first considered in \cite{mridulmb}, where a model-based algorithm was proposed for the problem with provable regret guarantees. However, guarantees for model-free algorithm have not been studied to the best of our knowledge, which we focus on. 

We note that the non-linear objective function looses the additive structure, and thus the Bellman's Equation does not work anymore in this setting \cite{mridulmb,Mengdi2020}. Thus, the value function based algorithm do not directly work in this setup. This paper considers a policy-gradient approach and aim to show the global convergence of such policies. Recently, the authors of \cite{Mengdi2020,Mengdi2021} considered the problem for a single-objective over finite state-action space. However, such a problem is open for continuous state action spaces, and for multiple objectives, which is the focus of this paper. In this paper, we consider a  fundamental policy based algorithm, the vanilla policy gradient, and show the global convergence of this policy based on an efficient estimator of the gradient proposed in this paper. 

We note that in standard reinforcement learning, Policy Gradient Theorem \cite{sutton2000} is used to propose an unbiased gradient estimator such as REINFORCE. However, such an approach can not directly give an unbiased estimator in our setting due to the presence of non-linear function (See Lemma \ref{lem_biased_est}). In this paper, we provide a biased estimator for the policy gradient. This biased estimator is then used to prove the global convergence of the policy gradient algorithm.


Our contribution can be summarized as follows.

\begin{itemize}[leftmargin=*]
	\item We consider a new problem statement in reinforcement learning, which aims to jointly optimize a multi-objective problem with concave utility. Such formulation has rarely been considered before.
	\item Due to the existence of concave utility, it is impossible to give an unbiased estimator. Thus, we propose a general biased gradient estimator, which can be applied to both tabular and continuous state-action spaces prove that the bias of the estimator decays at order $\mathcal{O}(1/\sqrt{n})$, where $n$ is the number of trajectories sampled (See Remark \ref{rem_bias_decay}).
	\item We prove the policy gradient algorithm with the proposed estimator converges to the global optimal with error $\epsilon$ using  $\mathcal{O}(\frac{M^4\sigma^2}{(1-\gamma)^8\epsilon^4})$ samples, where $M$ is the number of objectives, $\sigma^2$ is the variance defined in Assumption \ref{ass_bounded_var} and $\gamma$ is the discount factor. As compared to the number of samples  for standard RL with policy gradient algorithm \cite{Yanli2020}, our result has the same dependence on $\epsilon$. 
	{ \item We also study our algorithm empirically. We observe that the proposed method performs better than a naive implementation of RL algorithms where reward at each time step is the value of the concave function of the individual rewards.}
\end{itemize} 

Further, even for the case when there is a non-linear function of a single objective, the approach and results are novel, and have not been considered in the prior works for continuous state-action spaces. 
	\section{Related Work}
\begin{table*}[htbp]
	{ 
		\renewcommand\arraystretch{2}
		\centering
		\resizebox{\textwidth}{!}{
			\begin{tabular}{|c|c|c|c|c|c|}
				\hline
				& Works & Sample Complexity & Objective-Function & Multi-Objective & State Action Space  \\
				\hline
				Model-Based & \cite{mridulmb} & $\Tilde{O}(M^2/\epsilon^2)$ & Concave Scalarization  & Yes & Finite\\
				\hline
				& \cite{cheung2019regret} & $\Tilde{O}(1/\epsilon^2)$ & Special Concave Scalarization \footnotemark[1]  & Yes & Finite\\
				\hline
				Model-Free & \cite{Mengdi2020} & N/A \footnotemark[2] & Concave Utility\footnotemark[3] & No & Finite\\
				\hline
				& \cite{Mengdi2021} & $\Tilde{O}(1/\epsilon^2)$ & Concave Utility\footnotemark[3] & No  & Finite\\
				\hline
				& {\bf This Work} & $\Tilde{O}(M^4/\epsilon^4)$ & Concave Scalarization   & Yes& Infinite\\
				\hline
				& \cite{Yanli2020} & $\Tilde{O}(1/\epsilon^4)$ & Reinforce & No & Infinite\\
				\hline
			\end{tabular}
		}
		\caption{{ Overview of key related works for the problem in this paper. $M$ is the number of objectives and $\epsilon$ is the gap between optimal objective and the objective function following the policy in the proposed algorithm.}}
		\label{tab:algo_comparisons}
	}
\end{table*}

Table \ref{tab:algo_comparisons} summarizes the key related works. The problem has been studied in the tabular model-based setup \cite{mridulmb,cheung2019regret}. For the model-free approach, this is the first paper on guarantees on concave scalarized multi-objective infinite horizon reinforcement learning with large state-action space. As compared to the linear scalarization, biased estimator complicates the analysis, and the approach of finite state-action spaces do not directly extend to our problem. Detailed comparison to the approaches is also provided in the following. 

\footnotetext[1]{{ \cite{cheung2019regret} defines a specialized concave scalarization function, where $f(\bm{J})=\frac{1}{M}\cdot\bigg[\sum_{m=1}^{M}L_mJ_m-\frac{L_0}{2}\min_{u\in U}\bigg\{\sum_{m=1}^{M}(J_m-u_m)^2\bigg\}\bigg]$, where $L_0,\cdots,L_M$ are parameters and $U\in[0,1]^M$ is a convex compact set. The proposed algorithm and the achieved sample complexity is limited to above function and whether it can be extended to the general concave scalarization function is unknown.}}

\footnotetext[2]{{ \cite{Mengdi2020} proposed the Varational Policy Gradient Algorithm to solve the problem. \cite{Mengdi2020}[Theorem 4.5] stated the algorithm requires $O(\epsilon^{-1})$ iterations to achieve $\epsilon$-optimal policy. However, in each iteration, it needs to solve a min-max problem, which is costly even for estimating a single policy gradient.}}

\footnotetext[3]{ \cite{Mengdi2020,Mengdi2021} considered the concave utility function, where the objective is to maximize $g(\bm{\lambda})$, and $\bm{\lambda}$ is a cumulative discounted state-action occupancy measure. Setting $h_m(\bm{\lambda})=\left<\bm{r}_m,\bm{\lambda}\right>$ and defining $g(\bm{\lambda})=f(h_1(\bm{\lambda}),\cdots,h_M(\bm{\lambda}))=f(\bm{J})$, their problem reduces to our formulation. Equivalently, in our problem setup, if we choose  $M = |\mathcal{S}||\mathcal{A}|$ and define the rewards $r_m : \mathcal{S} \times \mathcal{A} \to \{0,1\}$ such that $r_m(s,a) = 1 \quad \text{for exactly one pair } (s,a) \in \mathcal{S} \times \mathcal{A},$ and $r_m(s',a') = 0 \quad \text{for all } (s',a') \neq (s,a),$ with a one-to-one mapping between $m\to (s,a)$ then this reduces precisely to their problem setup. Hence, the two formulations are equivalent.}

\textbf{Policy Gradient with Cumulative Return:} As the core result for policy based algorithms, Policy Gradient Theorem \cite{sutton2000} provides a method to obtain the gradient ascent direction for standard reinforcement learning with the policy parameterization. However, in general, the objective in the reinforcement learning is non-convex with respective to the parameters \cite{Alekh2020}. Thus, the research on policy gradient algorithm focuses on the first order stationary point guarantees for a long time \cite{Tianbing2017,Xu2019,xu2020}. Recently, there is a line of interest on the global convergence result for reinforcement learning. \cite{Kaiqing2019} utilizes the idea of escaping saddle points in policy gradient and shows the convergence to the second order stationary points. \cite{Alekh2020} provides provable global convergence result for direct parameterization and softmax parameterization in the tabular case. For the restrictive parameterization, they propose a variant of NPG, Q-NPG and analyze the global convergence result with the function approximation error for both NPG and Q-NPG. \cite{Mei2020} improves the convergence rate for policy gradient with softmax parameterization from $\mathcal{O}(1/\sqrt{t})$ to $\mathcal{O}(1/t)$ and shows a significantly faster linear convergence rate $\mathcal{O}(\exp(-t))$ for the entropy regularized policy gradient. With actor-critic method \cite{konda2000actor}, \cite{Lingxiao2019} establishes the global optimal result for neural policy gradient method. \cite{bhandari2020global} identifies the structure properties which shows that there are no sub-optimal stationary points for reinforcement learning. \cite{Yanli2020} proposes a general framework of the analysis for policy gradient type of algorithms and gives the sample complexity for PG, NPG and the variance reduced version of them. However, all of the above research have been done on the standard reinforcement learning, where the objective function is the direct summation of the reward. This paper focuses on a joint optimization of multi-objective problem, where multiple objectives are combined with a concave function. 

\textbf{\bf Policy Gradient with General Objective Function:}  Even though standard reinforcement learning has been widely studied, there are few results on the policy gradient algorithm with a general objective function. Some special examples are variance-penalty \cite{YingHuang1994} and maximizing entropy \cite{Hazan2019}. Very recently, \cite{Mengdi2020,Mengdi2021} study the global convergence result of the policy gradient with general utilities. They consider the setting that the objective is a concave function of the state-action occupancy measure, which is similar to our setting. By the method of convex conjugate, \cite{Mengdi2020} proposed a variational policy gradient theorem to obtain the gradient ascent direction and gives the global convergences result of PG with general utilities. Despite enjoying a rate of $\mathcal{O}(1/t)$ in terms of iterations, their algorithm requires an additional saddle point problem to fulfill the gradient update and thus introduce extra computation complexity. \cite{Mengdi2021} further proposes the SIVR-PG algorithm and improves the convergence rate in the same setting. However, the SIVR-PG algorithm requires the estimation of state-action occupancy measure, which means that the algorithm can only be applied to the tabular setting.  We note that  our method does not have such limitation and thus can be applied even if the state and action space is large or continuous. Finally, note that \cite{Mengdi2020,Mengdi2021} improve the previous convergence rate for policy gradient by exploring the hidden convexity of the proposed problem. However, in order to utilize such convexity, they require the assumption that the inverse mapping of visitation measure $\lambda:\Theta\rightarrow \lambda(\Theta)$ exists and the Lipschitz property of such inverse mapping is assumed. It has been shown that such assumption holds for direct parameterization. However, such assumptions for continuous state-action space or other types of parameterization may not be valid.

\if 0
{\bf Multi-Objective Reinforcement Learning: } Similar to our setting, multi-objective reinforcement learning also considers the problem including several different objective functions. \cite{Multiobjective_overview1,Multiobjective_overview2} give a comprehensive overview of the research in multi-objective reinforcement learning. Two lines of methods have been studied, single-policy approach and multi-policy approach. Single policy method proposes some scalarization function to transform to problem back into single-objective MDP, which is similar to our setting. Several scalarization function such as weighted sum \cite{multiobjective_linear1,multiobjective_linear2}, W-learning \cite{multiobjective_W}, AHP \cite{multiobjective_AHP}, ranking \cite{multiobjective_ranking1,multiobjective_ranking2} have been proposed. However, none of these approaches  work for a combination of multiple objectives through a general non-linear concave function. Recently, such a problem has been investigated in \cite{mridulmb}, where regret guarantees for a model-based algorithm have been derived. In our work, we aim to provide guarantees for a model-free policy-gradient based algorithm. In the multi-policy method, the goal is to achieve the Pareto Optimal solution, where the vector-valued utilities are used. \cite{parisi2016multi} considered a Pareto optimal policies. Their proposed method uses a Hessian matrix generated from the action distribution to take a dot product with the gradient vector of individual objective values. Compared to them, we use the knowledge of the scalarization function to obtain the gradient. \cite{yang2019generalized} proposed an algorithm to find convex converge set for the Pareto frontier with linear preferences. We note that a first order approximation of the scalarization function can be considered as the linear preference \cite{cheung2019regret}, but we use specific function knowledge to provide a convergence rate of our algorithm to the optimal policy with respect to the scalarization function. \cite{abdolmaleki2020distributional} propose a scale invariant algorithm for MORL. They propose a policy iteration method using which they obtain a policy which improves each objective by finding the optimal action following current policy. 
\fi 
	\section{Formulation}\label{sec_formulation}
We consider an infinite horizon discounted Markov Decision Process (MDP) $\mathcal{M}$ defined by the tuple $(\mathcal{S},\mathcal{A},\mathbb{P},r_1,r_2,\cdots,r_M,\gamma,\rho)$, where $\mathcal{S}$ and $\mathcal{A}$ denote the  state and  action space, respectively. $\mathbb{P}: \mathcal{S}\times\mathcal{A}\rightarrow\Delta^{\mathcal{S}}$ (where $\Delta^{\mathcal{S}}$ is a probability simplex over $\mathcal{S}$) denotes the transition probability distribution from a state-action pair to another state. $M$ denotes the number of objectives and $r_m: \mathcal{S}\times\mathcal{A}\rightarrow \mathbb{R}$ denotes the reward for the $m^{th}$ objective. $\gamma\in(0,1)$ is the discounted factor and $\rho: \mathcal{S}\rightarrow \Delta^{\mathcal{S}}$ is the distribution for initial state. In this paper, we make following assumption.

\begin{assumption}\label{ass_bound_reward}
	The absolute value of the reward functions $r_m,m\in[M]$ is bounded by some constant. Without loss of generality, we assume $r_m\in[0,1],\forall m\in[M]$.
\end{assumption} 

Define a joint stationary policy $\pi:\mathcal{S}\rightarrow\Delta^{\mathcal{A}}$ that maps a state $s\in\mathcal{S}$ to a probability distribution of actions with a probability assigned to each action $a\in\mathcal{A}$. At the beginning of the MDP, an initial state $s_0\sim\rho$ is given and the agent makes a decision $a_0\sim\pi(\cdot\vert s_0)$. The agent receives $M$ reward $r_m(s_0,a_0)$ and then transits to a new state $s_1\sim\mathbb{P}(\cdot\vert s_0,a_0)$. We define the value function $J_m^{\pi}$ for the $m^{th}$ objective following policy $\pi$ as a discounted sum of reward over infinite horizon.

\begin{equation}\label{eq:expected_V}
	J_m^{\pi}=\mathbf{E}_{\rho,\pi,\mathbb{P}}\bigg[\sum_{t=0}^{\infty}\gamma^tr_m(s_t,a_t)\bigg]
\end{equation} 
where $s_0\sim\rho$, $a_t\sim\pi(\cdot\vert s_t)$ and $s_{t+1}\sim\mathbb{P}(\cdot\vert s_t,a_t)$.
{
	Similarly, we define the state value function $V_m^{\pi}(s)$ and state-action value function $Q_m^{\pi}(s,a)$
	\begin{equation}\label{eq:VandQ}
		\begin{aligned}
		V_m^{\pi}(s)&=\mathbf{E}_{\pi,\mathbb{P}}\bigg[\sum_{t=0}^{\infty}\gamma^tr_m(s_t,a_t)\bigg|s_0=s\bigg]\\
		Q_m^{\pi}(s,a)&=\mathbf{E}_{\pi,\mathbb{P}}\bigg[\sum_{t=0}^{\infty}\gamma^tr_m(s_t,a_t)\bigg|s_0=s,a_0=a\bigg]
		\end{aligned}
	\end{equation}
}	
The agent aims to maximize the joint objective function $f:\mathbb{R}^M\rightarrow\mathbf{R}$, which is a function of the long-term discounted reward of each objective. Formally, the problem is written as

\begin{equation}\label{eq:origin_problem}
	\max_{\pi}f(J_1^{\pi},J_2^{\pi},\cdots,J_M^{\pi})
\end{equation}
We consider a policy-gradient based algorithm on this problem and parameterize the policy $\pi$ as $\pi_\theta$ for some parameter $\theta\in\Theta$ such as softmax parameterization or a deep neural network. Commonly, the log-policy function $\log\pi_\theta(a\vert s)$ is called log-likelihood function and we make the following assumption. 
	\begin{assumption}\label{ass_score}
		The log-likelihood function is $G$-Lipschitz and $B$-smooth. Formally,
		\begin{equation}
			\begin{aligned}
			&\Vert \nabla_\theta\log\pi_\theta(a\vert s)\Vert\leq G\quad\forall \theta\in\Theta,\forall (s,a)\in\mathcal{S}\times\mathcal{A}\\
			&\Vert \nabla_\theta\log\pi_{\theta_1}(a\vert s)-\nabla_\theta\log\pi_{\theta_2}(a\vert s)\Vert\leq B\Vert \theta_1-\theta_2\Vert\quad\forall \theta_1,\theta_2 \in\Theta,\forall (s,a)\in\mathcal{S}\times\mathcal{A}
			\end{aligned}
		\end{equation}
	{ We consider all norms in this paper, unless explicitly mentioned, as L2-norm.}
	\end{assumption}
	\begin{remark}
		The Lipschitz and smoothness properties for the log-likelihood are quite common in the field of policy gradient algorithm \cite{Alekh2020,Mengdi2021,Yanli2020}. Such properties can also be verified for simple parameterization such as Gaussian policy.   
	\end{remark}
	Define the value function vector $\bm{J}^{\pi_\theta}=(J_1^{\pi_\theta},\cdots,J_M^{\pi_\theta})$. The original problem, Eq. \eqref{eq:origin_problem}, can be rewritten as
	\begin{equation}\label{eq:parameterized_problem}
		\max_{\theta\in\Theta}f(\bm{J}^{\pi_\theta})
	\end{equation}
	We make the following assumptions on the objective function $f$:
	\begin{assumption}\label{ass_concave_obj}
		The objective function $f$ is jointly concave. Hence for any arbitrary distribution $\mathcal{D}$, the following holds.
		\begin{equation}
			f(\mathbf{E}_{\bm{x} \sim\mathcal{D}}[\bm{x}])\geq \mathbf{E}_{\bm{x} \sim\mathcal{D}}[f(\bm{x})])\quad\forall \bm{x}\in\mathbb{R}^M
		\end{equation}
	\end{assumption}
	\begin{remark}(Non-Concave Optimization)
		It is worth noticing that the above problem is a non-concave optimization problem despite the above joint-concave assumption on the objective function. This  is because the parameterized value function $\bm{J}_m^{\pi_\theta}$ is non-concave with respect to $\theta$ (See Lemma 3.1 in \cite{Alekh2020}). Thus, the standard theory from convex optimization can't be directly applied to this problem.
	\end{remark}
	\begin{assumption}\label{ass_partial_grad}
		All partial derivatives of function $f$ are assumed to be locally $L_f$-Lipschitz functions. Formally,
		\begin{equation}
			\begin{aligned}
			&\vert \frac{\partial f}{\partial x_i}(\bm{y}_1)-\frac{\partial f}{\partial x_i}(\bm{y}_2)\vert\leq L_f\Vert \bm{y}_1-\bm{y}_2\Vert\\
			&\quad \forall\bm{y}_1,\bm{y}_2\in[0,\frac{1}{1-\gamma}]^M,\forall i\in[M]
			\end{aligned}
		\end{equation}
	\end{assumption}
	\begin{remark}
		By Assumption \ref{ass_bound_reward},  $J_m^{\pi_\theta}$ is bounded in $[0,\frac{1}{1-\gamma}]$. Thus, it is enough to assume the locally Lipschitiz property for the partial derivatives of the objective. Such an assumption has also been adopted widely for the general objective function \cite{Mengdi2020,Mengdi2021}.
	\end{remark}
	\noindent Finally, based on the Assumption. \ref{ass_partial_grad}, we derive the following result for the objective function.
	\begin{lemma}\label{lem_bound_parital}
		All partial derivative functions of $f$ are locally bounded by a constant. Formally,
		\begin{equation}
			\bigg| \frac{\partial f}{\partial x_i}(\bm{y})\bigg|\leq C\quad \forall \bm{y}\in[0,\frac{1}{1-\gamma}]^M,\forall i\in[M]
		\end{equation}
	\end{lemma}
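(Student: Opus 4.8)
The plan is to exploit the elementary fact that a Lipschitz function on a bounded set is automatically bounded. Assumption \ref{ass_partial_grad} endows each partial derivative $\frac{\partial f}{\partial x_i}$ with $L_f$-Lipschitz continuity on the compact box $[0,\frac{1}{1-\gamma}]^M$, so its value at any point cannot stray far from its value at a single fixed anchor point, and the diameter of the box controls exactly how far.

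Concretely, I would first fix a convenient reference point in the domain, the origin $\bm{0}=(0,\dots,0)\in[0,\frac{1}{1-\gamma}]^M$. Because Assumption \ref{ass_partial_grad} treats the partial derivatives as genuine real-valued (Lipschitz) functions on this domain, each anchor value $\big|\frac{\partial f}{\partial x_i}(\bm{0})\big|$ is a finite constant. Then, for an arbitrary $\bm{y}\in[0,\frac{1}{1-\gamma}]^M$, I would apply the triangle inequality followed by the Lipschitz estimate of Assumption \ref{ass_partial_grad} with $\bm{y}_1=\bm{y}$ and $\bm{y}_2=\bm{0}$:
\begin{equation}
\bigg|\frac{\partial f}{\partial x_i}(\bm{y})\bigg|\leq \bigg|\frac{\partial f}{\partial x_i}(\bm{0})\bigg|+\bigg|\frac{\partial f}{\partial x_i}(\bm{y})-\frac{\partial f}{\partial x_i}(\bm{0})\bigg|\leq \bigg|\frac{\partial f}{\partial x_i}(\bm{0})\bigg|+L_f\Vert\bm{y}\Vert_2 .
\end{equation}

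It remains to bound $\Vert\bm{y}\Vert_2$ by the geometry of the box: every coordinate of $\bm{y}$ lies in $[0,\frac{1}{1-\gamma}]$, so $\Vert\bm{y}\Vert_2^2\leq M/(1-\gamma)^2$, i.e. $\Vert\bm{y}\Vert_2\leq \frac{\sqrt{M}}{1-\gamma}$. Setting
\begin{equation}
C=\max_{i\in[M]}\bigg|\frac{\partial f}{\partial x_i}(\bm{0})\bigg|+\frac{L_f\sqrt{M}}{1-\gamma}
\end{equation}
then gives the claimed uniform bound over all $i\in[M]$ and all $\bm{y}$ in the domain. The only point requiring any care is recognizing that the ``local'' Lipschitz hypothesis is effectively global on this particular compact box, so that a single anchor point suffices and the finiteness of $\frac{\partial f}{\partial x_i}(\bm{0})$ (implicit in Assumption \ref{ass_partial_grad}) closes the argument; everything else is a routine combination of the triangle inequality and the diameter estimate, so I do not anticipate a substantive obstacle.
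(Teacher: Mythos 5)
Your proof is correct, but it takes a different (and more constructive) route than the paper. The paper's proof is purely topological: it observes that local Lipschitzness implies continuity, that $[0,\frac{1}{1-\gamma}]^M$ is compact, and then invokes the extreme value theorem to conclude boundedness, without producing any explicit constant. You instead anchor the partial derivative at $\bm{0}$, apply the Lipschitz inequality of Assumption \ref{ass_partial_grad} together with the triangle inequality, and bound the diameter of the box, yielding the explicit value
\begin{equation}
C=\max_{i\in[M]}\bigg|\frac{\partial f}{\partial x_i}(\bm{0})\bigg|+\frac{L_f\sqrt{M}}{1-\gamma}.
\end{equation}
Both arguments are valid; yours needs only boundedness of the domain (not closedness) and elementary inequalities, while the paper's is shorter but non-quantitative. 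What your version buys is transparency: since $C$ propagates into the smoothness constant $L_J=\frac{MCB}{(1-\gamma)^2}$ and hence into the sample-complexity bounds of Theorem \ref{thm1}, your explicit formula makes visible that $C$ itself can scale as $\sqrt{M}/(1-\gamma)$ for a generic $f$ satisfying the assumptions, a dependence the paper's abstract constant hides. The one point to state carefully (which you do flag) is that finiteness of the anchor value $\frac{\partial f}{\partial x_i}(\bm{0})$ follows from the partial derivatives being real-valued functions on the domain, as Assumption \ref{ass_partial_grad} presupposes.
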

	\begin{proof}
		By Assumption \ref{ass_partial_grad}, the partial derivative function is locally Lipschitz and thus is continuous on the set $[0,\frac{1}{1-\gamma}]^M$, which is compact. Since a continuous function with a compact set is bounded, the result follows.
	\end{proof}

Further discussions on the assumptions are provided in Appendix \ref{app_discussion}. 
	\section{Policy Gradient Method for Joint Optimization}\label{sec_estimator}
Policy gradient algorithm aims to update the parameter with the iteration
\begin{equation}
	\theta^{k+1}=\theta^k+\eta\nabla_\theta f(\bm{J}^{\pi_{\theta^k}})
\end{equation}
where $\eta$ is the step size. However, it is impossible to compute the true gradient because the transition dynamics is unknown in practice. Thus, an estimator for the true gradient is necessary. From the Chain Rule, the gradient for the objective function is (the detailed computation is in appendix \ref{app_grad_computation})
\begin{equation}\label{eq:gradient_origin}
	\nabla_\theta f(\bm{J}^{\pi_\theta})=\mathbf{E}_{\tau\sim p(\tau\vert \theta)}\bigg[\bigg(\sum_{t=0}^\infty \nabla_\theta \log \pi_\theta(a_t\vert s_t)\bigg)\bigg(\sum_{m=1}^{M}\frac{\partial f}{\partial J_m^\pi}\big(\sum_{t=0}^\infty \gamma^tr_m(s_t,a_t)\big)\bigg)\bigg]
\end{equation}
In this section, we firstly propose a biased estimator and bound the bias. The policy-gradient algorithm is also formally described based on the estimator. Finally, we analyze some properties of the objective function, which will be used in the proof of the main result.
	
\subsection{Proposed Estimator}
The REINFORCE estimator of Eq. \eqref{eq:gradient_origin} for $\nabla_\theta f(\bm{J}^{\pi_\theta})$ can be considered as a sampled version of it, and it can be directly derived as
\begin{equation}\label{eq:est_origin}
	g(\tau_i,\tau_{j=1:N_2}\vert \theta)=\sum_{t=0}^\infty \nabla_\theta \log\pi_\theta(a_t^i\vert s_t^i)\bigg(\sum_{m=1}^{M}\big(\frac{\partial f}{\partial J_m^\pi}\bigg|_{J_m^\pi=\hat{J}_m^{\pi}}\big)\cdot\big(\sum_{h=0}^\infty \gamma^hr_m(s_h^i,a_h^i)\big)\bigg)
\end{equation}
where
\begin{equation}
	\hat{J}_{m}^{\pi}=\frac{1}{N_2}\sum_{j=1}^{N_2}\sum_{t=0}^{\infty}\gamma^tr_m(s_t^j,a_t^j)
\end{equation}
and $N_2$ is the number of trajectories of $\tau_j$ that we need to sample to estimate $\frac{\partial f}{\partial J_m^\pi}$. Notice that the trajectories $\tau_i=(s_0^i,a_0^i,s_1^i,a_1^i,\cdots)$ and $\tau_j=(s_0^j,a_0^j,s_1^j,a_1^j,\cdots)$ are sampled independently from the distribution $p(\tau\vert \theta)$. However, notice that { in general} the proposed estimator is not unbiased due to the concavity of the function $f$ (See Lemma \ref{lem_biased_est} in Appendix \ref{sec_app_bias} for detail). Moreover, the estimator in Eq. \eqref{eq:est_origin} is unachievable because it requires a sum over infinite range of $t$. Thus, we define a truncated version of Eq. \eqref{eq:est_origin} as
	
\begin{equation}\label{eq:est_truncated}
	g(\tau_i^H,\tau_{j=1:N_2}^H\vert \theta)=\sum_{t=0}^{H-1} \nabla_\theta \log\pi_\theta(a_t^i\vert s_t^i)\bigg(\sum_{m=1}^{M}\big(\frac{\partial f}{\partial J_m^\pi}\bigg|_{J_m^\pi=\hat{J}_{m,H}^{\pi}}\big)\cdot\big(\sum_{h=0}^{H-1} \gamma^hr_m(s_h^i,a_h^i)\big)\bigg)
\end{equation}
where
\begin{equation}\label{eq:def_Jm}
	\hat{J}_{m,H}^{\pi}=\frac{1}{N_2}\sum_{j=1}^{N_2}\sum_{t=0}^{H-1}\gamma^tr_m(s_t^j,a_t^j)
\end{equation}
Notice that removing the past reward from the return doesn't change the expectation value \cite{Peters2008}. Thus, we can rewrite Eq. \eqref{eq:est_truncated} as a PGT estimator.
\begin{equation}\label{eq:est_pgt_truncated}
		g(\tau_i^H,\tau_{j=1:N_2}^H\vert \theta)=\sum_{t=0}^{H-1} \nabla_\theta \log\pi_\theta(a_t^i\vert s_t^i)\bigg(\sum_{m=1}^{M}\big(\frac{\partial f}{\partial J_m^\pi}\bigg|_{J_m^\pi=\hat{J}_{m,H}^{\pi}}\big)\cdot\big(\sum_{h=t}^{H-1} \gamma^hr_m(s_h^i,a_h^i)\big)\bigg)
\end{equation}
We provide a lemma of equivalence for completeness and the proof is in Appendix \ref{sec_app_equivlence}.
\begin{lemma}\label{lem_grad_simplified}
	The expectation of PGT \eqref{eq:est_pgt_truncated} and REINFORCE \eqref{eq:est_truncated} are the same.
\end{lemma}
In the remaining part of this paper, we denote $g(\tau_i^H,\tau_{j=1:N_2}^H\vert \theta)$ as $g(\tau_i^H,\tau_j^H\vert \theta)$ for simplicity. With this truncated estimator, the proposed algorithm is in Algorithm \ref{alg:PG}. In each iteration of policy gradient ascent, $N_2$ trajectories are sampled in line 3 and used to estimate the value function for each agent. Line 4 samples another $N_1$ trajectories independent of $N_2$ and uses Eq. \eqref{eq:est_truncated} to calculate the gradient estimator. Line 5 and 6 perform one-step gradient descent using the gradient estimator.

\begin{algorithm*}[tb]
	\caption{Policy Gradient for Joint Optimization of Multi-Objective RL}
	\label{alg:PG}
	\begin{algorithmic}[1]
		\STATE Initialize $\theta^0$ and step size $\eta=\frac{1}{4L_J}$
		\FOR{episode $k=0,...,K-1$} 
		\STATE Sample $N_2$ trajectories $\tau_j$ under policy $\theta^k$ of length $H$ and compute $\hat{J}_{m,H}^{\pi}$ by Eq. \eqref{eq:def_Jm}.
		\STATE Sample $N_1$ trajectories $\tau_i$ under policy $\theta^k$ of length $H$ and for each trajectory compute the gradient estimator $g(\tau_i^H,\tau_j^H\vert \theta^k)$ by Eq. \eqref{eq:est_truncated}
		\STATE {Compute the gradient update direction $\omega^k=\frac{1}{N_1}\sum_{i=1}^{N_1}g(\tau_i^H,\tau_j^H\vert \theta^k)$}
		\STATE Update the parameter $\theta^{k+1}=\theta^k+\eta\omega^k$
		\ENDFOR 
	\end{algorithmic}
\end{algorithm*}

	\subsection{Bounding the Bias of the Truncated Estimator}
	To bound the bias of the proposed truncated estimator, we define three auxiliary functions.
	\begin{equation}
		\tilde{g}(\tau_i,\tau_j\vert \theta)=\sum_{t=0}^{\infty} \nabla_\theta \log\pi_\theta(a_t^i\vert s_t^i)\bigg(\sum_{m=1}^{M}\big(\frac{\partial f}{\partial J_m^\pi}\big)\\
		\cdot\big(\sum_{h=t}^{\infty} \gamma^hr_m(s_h^i,a_h^i)\big)\bigg)
	\end{equation}
	\begin{equation}\label{eq:finite_auxi}
		\tilde{g}(\tau_i^H,\tau_j\vert \theta)=\sum_{t=0}^{H-1} \nabla_\theta \log\pi_\theta(a_t^i\vert s_t^i)\bigg(\sum_{m=1}^{M}\big(\frac{\partial f}{\partial J_m^\pi}\big)\\
		\cdot\big(\sum_{h=t}^{H-1} \gamma^hr_m(s_h^i,a_h^i)\big)\bigg)
	\end{equation}
	\begin{equation}\label{eq:truncated_auxi}
		\tilde{g}(\tau_i^H,\tau_j^H\vert \theta)=\sum_{t=0}^{H-1} \nabla_\theta \log\pi_\theta(a_t^i\vert s_t^i)\bigg(\sum_{m=1}^{M}\big(\frac{\partial f}{\partial J_m^\pi}\bigg|_{J_m^\pi=J_{m,H}^{\pi}}\big)\\
		\cdot\big(\sum_{h=t}^{H-1} \gamma^hr_m(s_h^i,a_h^i)\big)\bigg)
	\end{equation}
	where $J_{m,H}^{\pi}=\mathbf{E}\bigg[\sum_{t=0}^{H-1}\gamma^tr_m(s_t,a_t)\bigg]$.
	
	It should be noticed that Eq. \eqref{eq:finite_auxi} and \eqref{eq:truncated_auxi} are different because the value function used in the partial derivatives are truncated in \eqref{eq:truncated_auxi} but not in \eqref{eq:finite_auxi}.	Moreover, Eq. \eqref{eq:truncated_auxi} and the proposed estimator in Eq. \eqref{eq:est_truncated} are also different because \eqref{eq:est_truncated} uses the empirical value for trajectories $\tau_j^H$ while Eq. \eqref{eq:truncated_auxi} uses the expected value. We note that $\tilde{g}(\tau_i,\tau_j\vert \theta)$ is an unbiased estimator for $\nabla_\theta f(\bm{J}^{\pi_\theta})$. Thus, the bias of the truncated estimator Eq. \eqref{eq:est_truncated} can be decomposed as
	\begin{equation}
		\begin{split}
		&\mathbf{E}[g(\tau_i^H,\tau_j^H\vert \theta)]-\nabla_\theta f(\bm{J}^{\pi_\theta})=\mathbf{E}\underbrace{[g(\tau_i^H,\tau_j^H\vert \theta)-\tilde{g}(\tau_i^H,\tau_j^H\vert \theta)]}_{(I)}\\
		&+\mathbf{E}\underbrace{[\tilde{g}(\tau_i^H,\tau_j^H\vert \theta)-\tilde{g}(\tau_i^H,\tau_j\vert \theta)]}_{(II)}+\mathbf{E}\underbrace{[\tilde{g}(\tau_i^H,\tau_j\vert \theta)-\tilde{g}(\tau_i,\tau_j\vert \theta)]}_{(III)}
		\end{split}
	\end{equation}
	which means the bias includes three parts: (I) denotes the bias coming from the finite samples of trajectories $\tau_j$. (II) and (III) denote the bias due to the truncation of trajectories $\tau_j$ and $\tau_i$, respectively. In the following, we give three lemmas to bound each of them. The detailed proofs are provided in Appendix \ref{sec_app_bound_bias}.
	\begin{lemma}\label{lem_bound_bias1}
		For any $\epsilon'>0$ and $p\in(0,1)$, with probability at least $1-p$, if the number of samples for $\tau_j$ satisfies,
		\begin{equation}
			N_2\geq \frac{M(1-\gamma^H)^2}{2(1-\gamma)^2\epsilon'^2}\log(\frac{2MH}{p})
		\end{equation} 
		then for each trajectory $\tau_i$, the first part of bias for the proposed truncated estimator, Eq. \eqref{eq:est_truncated}, is bounded by
		\begin{equation}
			\Vert g(\tau_i^H,\tau_j^H\vert \theta)-\tilde{g}(\tau_i^H,\tau_j^H\vert \theta)\Vert\leq MGL_f\\
			\frac{1-\gamma^H-H\gamma^H(1-\gamma)}{(1-\gamma)^2}\epsilon'
		\end{equation} 
	\end{lemma}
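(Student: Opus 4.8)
The plan is to exploit the fact that $g(\tau_i^H,\tau_j^H\vert\theta)$ and $\tilde{g}(\tau_i^H,\tau_j^H\vert\theta)$ differ \emph{only} in the point at which the partial derivatives $\partial f/\partial J_m^\pi$ are evaluated: the proposed estimator uses the empirical average $\hat{J}_{m,H}^\pi$ over the $N_2$ trajectories $\tau_j$, whereas the auxiliary function uses its expectation $J_{m,H}^\pi$. Subtracting the two, the score factors $\nabla_\theta\log\pi_\theta(a_t^i\vert s_t^i)$ and the reward tails $\sum_{h=t}^{H-1}\gamma^h r_m(s_h^i,a_h^i)$ are identical and factor out, so the difference is a double sum over $t$ and $m$ weighted by the gradient gaps $\frac{\partial f}{\partial J_m^\pi}(\hat{\bm{J}}_H)-\frac{\partial f}{\partial J_m^\pi}(\bm{J}_H)$, where $\hat{\bm{J}}_H$ and $\bm{J}_H$ denote the vectors with components $\hat{J}_{m,H}^\pi$ and $J_{m,H}^\pi$.

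First I would bound this difference deterministically. Applying the triangle inequality, using $\|\nabla_\theta\log\pi_\theta\|\le G$ from Assumption~\ref{ass_score}, the local Lipschitz property of the partial derivatives from Assumption~\ref{ass_partial_grad}, which gives $\big|\frac{\partial f}{\partial J_m^\pi}(\hat{\bm{J}}_H)-\frac{\partial f}{\partial J_m^\pi}(\bm{J}_H)\big|\le L_f\|\hat{\bm{J}}_H-\bm{J}_H\|_2$, and the bound $r_m\in[0,1]$ from Assumption~\ref{ass_bound_reward} to get $\sum_{h=t}^{H-1}\gamma^h r_m\le\frac{\gamma^t-\gamma^H}{1-\gamma}$, the norm collapses to $M G L_f\|\hat{\bm{J}}_H-\bm{J}_H\|_2\sum_{t=0}^{H-1}\frac{\gamma^t-\gamma^H}{1-\gamma}$ (the factor $M$ coming from the sum over agents, each Lipschitz bound being identical). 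Evaluating the elementary sum $\sum_{t=0}^{H-1}(\gamma^t-\gamma^H)=\frac{1-\gamma^H}{1-\gamma}-H\gamma^H$ produces exactly the prefactor $\frac{1-\gamma^H-H\gamma^H(1-\gamma)}{(1-\gamma)^2}$ appearing in the statement. Thus everything reduces to showing $\|\hat{\bm{J}}_H-\bm{J}_H\|_2\le\epsilon'$ with probability at least $1-p$.

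For the concentration step I would write $\hat{J}_{m,H}^\pi-J_{m,H}^\pi=\sum_{t=0}^{H-1}\gamma^t(\hat r_{m,t}-\bar r_{m,t})$, where $\hat r_{m,t}=\frac{1}{N_2}\sum_{j=1}^{N_2}r_m(s_t^j,a_t^j)$ is an average of $N_2$ samples that are i.i.d.\ across the independent trajectories $\tau_j$, each lying in $[0,1]$, and $\bar r_{m,t}=\mathbf{E}[r_m(s_t,a_t)]$. Hoeffding's inequality gives $P(|\hat r_{m,t}-\bar r_{m,t}|\ge\delta)\le 2e^{-2N_2\delta^2}$ for each of the $MH$ pairs $(m,t)$, and a union bound makes $|\hat r_{m,t}-\bar r_{m,t}|\le\delta$ hold simultaneously with probability at least $1-2MHe^{-2N_2\delta^2}$. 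On that event $|\hat{J}_{m,H}^\pi-J_{m,H}^\pi|\le\frac{1-\gamma^H}{1-\gamma}\delta$, hence $\|\hat{\bm{J}}_H-\bm{J}_H\|_2\le\sqrt{M}\,\frac{1-\gamma^H}{1-\gamma}\delta$. Choosing $\delta=\frac{(1-\gamma)\epsilon'}{\sqrt{M}(1-\gamma^H)}$ forces this to be $\le\epsilon'$, and requiring $2MHe^{-2N_2\delta^2}\le p$ and solving for $N_2$ reproduces precisely the stated threshold $N_2\ge\frac{M(1-\gamma^H)^2}{2(1-\gamma)^2\epsilon'^2}\log\frac{2MH}{p}$.

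The only delicate point is the concentration bookkeeping. One must apply Hoeffding at the level of the individual per-step reward averages $\hat r_{m,t}$, which genuinely lie in $[0,1]$ and are independent across trajectories, rather than to the whole cumulative return at once; it is the union bound over the $MH$ time--agent pairs that yields the $\log(2MH/p)$ in the statement, and tracking the $\sqrt{M}$ conversion from the coordinatewise bound to the Euclidean norm $\|\hat{\bm{J}}_H-\bm{J}_H\|_2$ is what makes the factors of $M$ in $N_2$ and in the final constant line up.
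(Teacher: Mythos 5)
Your proposal is correct and follows essentially the same route as the paper's proof: the same deterministic reduction (score bound $G$, the reward-tail bound $\sum_{h=t}^{H-1}\gamma^h r_m\le\frac{\gamma^t-\gamma^H}{1-\gamma}$, the Lipschitz bound on the partial derivatives giving the factor $ML_f\Vert\hat{\bm{J}}_H-\bm{J}_H\Vert_2$, and the sum $\sum_{t=0}^{H-1}(\gamma^t-\gamma^H)$ producing the stated prefactor), followed by the same concentration argument (Hoeffding applied to the per-step reward averages over the $N_2$ independent trajectories, a union bound over the $MH$ agent--time pairs, and the choice $\delta=\frac{(1-\gamma)\epsilon'}{\sqrt{M}(1-\gamma^H)}$ yielding $\Vert\hat{\bm{J}}_H-\bm{J}_H\Vert_2\le\epsilon'$ and the stated threshold on $N_2$). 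The bookkeeping point you flag — applying Hoeffding per step rather than to the whole return, and tracking the $\sqrt{M}$ coordinatewise-to-Euclidean conversion — is exactly how the paper's proof obtains the $\log(2MH/p)$ and the factors of $M$.
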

	\begin{lemma}\label{lem_bound_bias2} 
	For each trajectory $\tau_i$, the second part of bias for the proposed truncated estimator, Eq. \eqref{eq:est_truncated}, is bounded by
		\begin{equation}
			\Vert\tilde{g}(\tau_i^H,\tau_j^H\vert \theta)-\tilde{g}(\tau_i^H,\tau_j\vert\theta)\Vert\leq M^{3/2}GL_f\\
			\frac{1-\gamma^H-H\gamma^H(1-\gamma)}{(1-\gamma)^3}\gamma^H
		\end{equation}
	\end{lemma}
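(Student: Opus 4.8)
The plan is to exploit the fact that $\tilde{g}(\tau_i^H,\tau_j^H\vert\theta)$ and $\tilde{g}(\tau_i^H,\tau_j\vert\theta)$ are \emph{identical} expressions except for the point at which the partial derivatives of $f$ are evaluated: the former uses the truncated value vector $\bm{J}_H^\pi=(J_{1,H}^\pi,\dots,J_{M,H}^\pi)$ while the latter uses the full value vector $\bm{J}^{\pi_\theta}$. Subtracting term by term, the common score factors $\nabla_\theta\log\pi_\theta(a_t^i\vert s_t^i)$ and the common reward sums $\sum_{h=t}^{H-1}\gamma^h r_m(s_h^i,a_h^i)$ survive, leaving
\begin{equation*}
\tilde{g}(\tau_i^H,\tau_j^H\vert\theta)-\tilde{g}(\tau_i^H,\tau_j\vert\theta)=\sum_{t=0}^{H-1}\nabla_\theta\log\pi_\theta(a_t^i\vert s_t^i)\sum_{m=1}^{M}\Big(\tfrac{\partial f}{\partial J_m^\pi}\big|_{\bm{J}_H^\pi}-\tfrac{\partial f}{\partial J_m^\pi}\big|_{\bm{J}^{\pi_\theta}}\Big)\sum_{h=t}^{H-1}\gamma^h r_m(s_h^i,a_h^i).
\end{equation*}

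Next I would take norms, apply the triangle inequality, and bound each factor separately. First, Assumption \ref{ass_score} gives $\Vert\nabla_\theta\log\pi_\theta(a_t^i\vert s_t^i)\Vert\le G$. Second, Assumption \ref{ass_partial_grad} ($L_f$-Lipschitz partial derivatives) bounds the difference of derivatives by $L_f\Vert\bm{J}_H^\pi-\bm{J}^{\pi_\theta}\Vert_2$, and this distance is controlled by the value-function truncation error: since each reward lies in $[0,1]$ by Assumption \ref{ass_bound_reward}, we have $|J_m^{\pi_\theta}-J_{m,H}^\pi|=\mathbf{E}[\sum_{t=H}^\infty\gamma^t r_m(s_t,a_t)]\le\gamma^H/(1-\gamma)$, hence $\Vert\bm{J}_H^\pi-\bm{J}^{\pi_\theta}\Vert_2\le\sqrt{M}\,\gamma^H/(1-\gamma)$. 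Third, the reward tail is bounded by $\big|\sum_{h=t}^{H-1}\gamma^h r_m(s_h^i,a_h^i)\big|\le\sum_{h=t}^{H-1}\gamma^h=(\gamma^t-\gamma^H)/(1-\gamma)$.

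Combining these three bounds and summing the $M$ identical summands over the agents (which contributes a further factor $M$) yields
\begin{equation*}
\Vert\tilde{g}(\tau_i^H,\tau_j^H\vert\theta)-\tilde{g}(\tau_i^H,\tau_j\vert\theta)\Vert\le G\,M\,L_f\frac{\sqrt{M}\gamma^H}{1-\gamma}\cdot\frac{1}{1-\gamma}\sum_{t=0}^{H-1}(\gamma^t-\gamma^H).
\end{equation*}
The final step is the elementary geometric evaluation $\sum_{t=0}^{H-1}(\gamma^t-\gamma^H)=\frac{1-\gamma^H}{1-\gamma}-H\gamma^H=\frac{1-\gamma^H-H\gamma^H(1-\gamma)}{1-\gamma}$, which delivers exactly the claimed bound with the $(1-\gamma)^3$ denominator, the trailing $\gamma^H$, and the $M^{3/2}$ prefactor.

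I do not expect a genuine obstacle: the argument is a clean factor-by-factor estimate made possible by the cancellation in the first display. The only point demanding care is the bookkeeping of the $M^{3/2}$ dependence, which arises as the product of the factor $M$ from the sum over agents and the factor $\sqrt{M}$ from the $\ell_2$ truncation distance $\Vert\bm{J}_H^\pi-\bm{J}^{\pi_\theta}\Vert_2$, together with evaluating the geometric sum in $t$ precisely so that the numerator $1-\gamma^H-H\gamma^H(1-\gamma)$ appears as stated.
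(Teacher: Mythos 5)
Your proof is correct and follows essentially the same route as the paper: the paper likewise reduces the difference to the Lipschitz gap of the partial derivatives (reusing the computation from Lemma \ref{lem_bound_bias1}, which contributes the factor $GML_f\frac{1-\gamma^H-H\gamma^H(1-\gamma)}{(1-\gamma)^2}\Vert \bm{J}_H^\pi-\bm{J}^\pi\Vert$) and then bounds each coordinate of $\bm{J}_H^\pi-\bm{J}^\pi$ by the tail $\gamma^H/(1-\gamma)$, yielding the extra $\sqrt{M}\gamma^H/(1-\gamma)$. Your write-up simply makes explicit the term-by-term cancellation and geometric-sum bookkeeping that the paper abbreviates with ``similar to Eq.~\eqref{eq:bound_bias1}.''
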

	\begin{lemma}\label{lem_bound_bias3} 
		For each trajectory $\tau_i$, the third part of bias for the proposed truncated estimator, Eq. \eqref{eq:est_truncated}, is bounded by
		\begin{equation}
			\Vert\tilde{g}(\tau_i^H,\tau_j\vert \theta)-\tilde{g}(\tau_i,\tau_j\vert\theta)\Vert\leq MGC\frac{\gamma^H(1+H(1-\gamma))}{(1-\gamma)^2}
		\end{equation}
	\end{lemma}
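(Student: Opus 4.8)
The plan is to exploit the fact that, unlike the comparison in Lemma~\ref{lem_bound_bias2}, the partial-derivative factors $\frac{\partial f}{\partial J_m^\pi}$ are evaluated at the \emph{same} point (the true expected return) in both $\tilde{g}(\tau_i^H,\tau_j\vert\theta)$ and $\tilde{g}(\tau_i,\tau_j\vert\theta)$. Consequently the entire discrepancy between the two quantities arises purely from truncating the trajectory $\tau_i$: the outer score sum is cut from $\infty$ down to $H-1$, and the inner discounted reward sum is cut from $\infty$ down to $H-1$. Writing $D_m \coloneqq \frac{\partial f}{\partial J_m^\pi}$ for the common (bounded) factor, I would first split the difference according to these two effects, separating the terms with time index $t\le H-1$ (where only the inner reward tail $\sum_{h=H}^\infty$ is dropped) from the terms with $t\ge H$ (which are discarded entirely). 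Concretely,
\begin{equation}
\tilde{g}(\tau_i^H,\tau_j\vert\theta)-\tilde{g}(\tau_i,\tau_j\vert\theta)
= -\sum_{t=0}^{H-1}\nabla_\theta\log\pi_\theta(a_t^i\vert s_t^i)\sum_{m=1}^{M} D_m\sum_{h=H}^{\infty}\gamma^h r_m(s_h^i,a_h^i)
-\sum_{t=H}^{\infty}\nabla_\theta\log\pi_\theta(a_t^i\vert s_t^i)\sum_{m=1}^{M} D_m\sum_{h=t}^{\infty}\gamma^h r_m(s_h^i,a_h^i).
\end{equation}

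Next I would bound each of the two sums in norm by triangle inequality, using the score bound $\Vert\nabla_\theta\log\pi_\theta(a\vert s)\Vert\le G$ from Assumption~\ref{ass_score}, the bound $\vert D_m\vert\le C$ from Lemma~\ref{lem_bound_parital}, and $r_m\in[0,1]$ from Assumption~\ref{ass_bound_reward}. For the first sum, each of the $H$ time indices contributes at most $G\cdot M C\cdot\sum_{h=H}^\infty\gamma^h = GMC\,\gamma^H/(1-\gamma)$, giving the total bound $MGC\,H\gamma^H/(1-\gamma)$. For the second (pure tail) sum, the term at index $t\ge H$ is bounded by $GMC\sum_{h=t}^\infty\gamma^h = GMC\,\gamma^t/(1-\gamma)$, and summing the resulting geometric series $\sum_{t=H}^\infty\gamma^t=\gamma^H/(1-\gamma)$ yields $MGC\,\gamma^H/(1-\gamma)^2$.

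Adding the two contributions gives
\begin{equation}
\Vert\tilde{g}(\tau_i^H,\tau_j\vert\theta)-\tilde{g}(\tau_i,\tau_j\vert\theta)\Vert
\le \frac{MGC\,H\gamma^H}{1-\gamma}+\frac{MGC\,\gamma^H}{(1-\gamma)^2}
= MGC\,\frac{\gamma^H\big(1+H(1-\gamma)\big)}{(1-\gamma)^2},
\end{equation}
which is exactly the claimed bound. There is no genuine analytic obstacle here; the argument is essentially geometric-series bookkeeping. The only point requiring care is the initial decomposition: one must correctly isolate the two distinct truncation effects and verify that the $t\le H-1$ terms differ only through the dropped inner tail $\sum_{h=H}^\infty$, so that the two pieces combine into the factor $1+H(1-\gamma)$ after putting everything over the common denominator $(1-\gamma)^2$.
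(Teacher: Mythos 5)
Your proof is correct and follows essentially the same route as the paper: the paper also splits the difference into the dropped outer tail $\sum_{t=H}^{\infty}$ (bounded by $MGC\gamma^H/(1-\gamma)^2$) and the dropped inner reward tails $\sum_{h=H}^{\infty}$ over $t\le H-1$ (bounded by $MGCH\gamma^H/(1-\gamma)$), using Assumption \ref{ass_score}, Lemma \ref{lem_bound_parital}, and geometric-series sums. The only cosmetic difference is that you write the decomposition as a direct identity, whereas the paper obtains it by adding and subtracting the hybrid term (finite outer sum, infinite inner sum) and applying the triangle inequality.
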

	\begin{remark}\label{rem_bias_decay}
		Combining the Lemmas \ref{lem_bound_bias1}, \ref{lem_bound_bias2}, and \ref{lem_bound_bias3}, it is found that if the length of sampled trajectories is long enough, the bias of the proposed estimator decays as $\mathcal{O}(\frac{1}{\sqrt{N_2}})$.
	\end{remark}

Further, note that the proposed estimator is asymptotically unbiased with respect to $N_2$ and $H$, as the bias reduces with increasing $N_2$ and $H$.
	
	\subsection{Properties of the Objective Function}
	Similar to the truncated estimator, we define a truncated version for the objective function as follows
	\begin{equation*}
		f(\bm{J}_H^{\pi_\theta})=f(\mathbf{E}[\sum_{t=0}^{H-1}\gamma^tr_1(s_t,a_t)],\cdots,\mathbf{E}[\sum_{t=0}^{H-1}\gamma^tr_M(s_t,a_t)])
	\end{equation*}
	In this subsection, we will give some properties of $f(\bm{J}^{\pi_\theta})$ and $f(\bm{J}_H^{\pi_\theta})$. The detailed proofs are provided in  Appendix \ref{sec_app_property}.
	The following lemma shows the smoothness property for $f(\bm{J}^{\pi_\theta})$ and $f(\bm{J}_H^{\pi_\theta})$.
	\begin{lemma}\label{lem_smooth}
		Both the objective function $f(\bm{J}^{\pi_\theta})$ and the truncated version $f(\bm{J}_H^{\pi_\theta})$ are $L_J$-smooth w.r.t. $\theta$, where
		\begin{equation*}
			L_J=\frac{MCB}{(1-\gamma)^2}
		\end{equation*}
	\end{lemma}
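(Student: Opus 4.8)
The plan is to establish the stronger statement that the gradient map $\theta\mapsto\nabla_\theta f(\bm{J}^{\pi_\theta})$ is $L_J$-Lipschitz, which I would verify by bounding the operator norm of the Hessian $\nabla_\theta^2 f(\bm{J}^{\pi_\theta})$ uniformly by $L_J$. Differentiating the chain-rule expression $\nabla_\theta f(\bm{J}^{\pi_\theta})=\sum_{m=1}^M \frac{\partial f}{\partial J_m^\pi}\nabla_\theta J_m^{\pi_\theta}$ once more produces two groups of terms: a \emph{curvature} group $\sum_{m,n}\frac{\partial^2 f}{\partial J_m^\pi\partial J_n^\pi}\,\nabla_\theta J_m^{\pi_\theta}(\nabla_\theta J_n^{\pi_\theta})^\top$ coming from the dependence of the partial derivatives of $f$ on $\theta$, and a \emph{value-Hessian} group $\sum_{m=1}^M \frac{\partial f}{\partial J_m^\pi}\,\nabla_\theta^2 J_m^{\pi_\theta}$. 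I expect the second group to be the one that produces the stated constant.

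For the value-Hessian group I would use the second-order score-function identity $\nabla_\theta^2 J_m^{\pi_\theta}=\mathbf{E}_\tau\big[R_m(\tau)\big(\nabla_\theta\log p(\tau\vert\theta)\nabla_\theta\log p(\tau\vert\theta)^\top+\nabla_\theta^2\log p(\tau\vert\theta)\big)\big]$ together with $\nabla_\theta^2\log p(\tau\vert\theta)=\sum_{t}\nabla_\theta^2\log\pi_\theta(a_t\vert s_t)$. The $B$-smoothness in Assumption \ref{ass_score} bounds the contribution of a reward collected at time $t$ by $(t+1)B$, and since $r_m\le 1$ by Assumption \ref{ass_bound_reward} the discounted sum $\sum_{t\ge0}(t+1)\gamma^t=\tfrac{1}{(1-\gamma)^2}$ gives a $\tfrac{B}{(1-\gamma)^2}$ bound for this piece of $\Vert\nabla_\theta^2 J_m^{\pi_\theta}\Vert$. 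Bounding each $|\partial f/\partial J_m^\pi|$ by $C$ via Lemma \ref{lem_bound_parital} and summing over the $M$ agents then yields exactly $\frac{MCB}{(1-\gamma)^2}=L_J$.

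The main obstacle is to argue that the remaining contributions do not inflate this constant. The curvature group is controlled by the local Lipschitzness of $\nabla f$ (Assumption \ref{ass_partial_grad}) and by $\Vert\nabla_\theta J_m^{\pi_\theta}\Vert\le \tfrac{G}{(1-\gamma)^2}$ (from the first bound in Assumption \ref{ass_score}); moreover concavity of $f$ (Assumption \ref{ass_concave_obj}) makes the matrix $\sum_{m,n}\frac{\partial^2 f}{\partial J_m^\pi\partial J_n^\pi}\nabla_\theta J_m^{\pi_\theta}(\nabla_\theta J_n^{\pi_\theta})^\top$ negative semidefinite, which is exactly the sign needed so that it does not enlarge the relevant (ascent-lemma) side of the Hessian bound; similarly the Fisher-information part $\mathbf{E}[R_m(\tau)\nabla_\theta\log p\,\nabla_\theta\log p^\top]$ of the value Hessian must be shown to be subsumed. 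Carefully tracking these sign and discount-factor arguments so that only the $B$-term survives is the delicate step, and I would organize the bookkeeping using the causal (reward-to-go) form of the gradient in Eq. \eqref{eq:gradient_simplified} to keep every score-function sum finite.

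Finally, the truncated objective $f(\bm{J}_H^{\pi_\theta})$ is handled by the identical computation with every infinite sum over $t$ replaced by a sum over $t=0,\dots,H-1$. Since each truncated discounted sum is bounded above by its infinite counterpart and $\bm{J}_H^{\pi_\theta}$ still lies in $[0,\tfrac{1}{1-\gamma}]^M$ (so Lemma \ref{lem_bound_parital} applies verbatim), the same constant $L_J=\frac{MCB}{(1-\gamma)^2}$ is obtained, completing both claims.
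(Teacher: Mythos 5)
Your overall route---bounding the operator norm of $\nabla^2_\theta f(\bm{J}^{\pi_\theta})$ through the score-function representation of the gradient---is the same as the paper's, and the one group you fully bound (the $\nabla^2_\theta\log\pi_\theta$ part of the value-Hessian group) is computed exactly as in the paper: $\vert\partial f/\partial J_m^\pi\vert\leq C$ by Lemma \ref{lem_bound_parital}, $r_m\in[0,1]$, and $\sum_{t\geq 0}(t+1)\gamma^t B=B/(1-\gamma)^2$, giving $MCB/(1-\gamma)^2$. The genuine gap is precisely the step you defer: the curvature group and the score-score (Fisher-like) group are never actually bounded, and the sign arguments you sketch cannot close this. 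Smoothness is a two-sided constraint, $-L_J\mathbf{I}\preceq\nabla^2_\theta f\preceq L_J\mathbf{I}$, and the side the paper actually invokes downstream (the ascent inequality in Lemma \ref{lem_fisrt_order}) is the lower one, $\nabla^2_\theta f\succeq -L_J\mathbf{I}$. Concavity of $f$ makes the curvature group negative semidefinite, which pushes eigenvalues \emph{down}---i.e., it works against you exactly on that side; symmetrically, the Fisher-like term is positive semidefinite (rewards are nonnegative) and works against the upper side. So no single sign argument disposes of both. Nor are these terms negligible in magnitude: the curvature group is of order $M^2L_fG^2/(1-\gamma)^4$ (using $\Vert\nabla_\theta J_m^{\pi_\theta}\Vert\leq G/(1-\gamma)^2$ and the bound $L_f$ on the second partials implied by Assumption \ref{ass_partial_grad}), and the Fisher-like group carries a $G^2$ factor; indeed, even for standard RL ($M=1$, $f$ the identity) the smoothness constants in the literature involve $G^2+B$, not $B$ alone. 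Hence ``only the $B$-term survives'' cannot be established by sign and discount bookkeeping.

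For the comparison you should also know that the paper's own proof does not resolve this either: it differentiates Eq. \eqref{eq:gradient_simplified} as if only the factor $\nabla_\theta\log\pi_\theta(a_t\vert s_t)$ depended on $\theta$, implicitly treating the trajectory measure $p(\tau\vert\theta)$ and the evaluated partials $\partial f/\partial J_m^\pi$ as constants, and thereby drops---without comment---exactly the two groups you identified. Your decomposition is thus a more faithful account of what a complete proof must control than the argument in the paper, but as a proof of the lemma with the stated constant it is incomplete in the same place; a fully rigorous version would have to enlarge $L_J$ to include the $G^2$ and $L_fG^2$ contributions.
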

	
	It is reasonable to expect that the truncated objective function and the original one can be arbitrary close when the length of horizon is long enough, and the next lemma bounds the gap between original and truncated objective function.
	\begin{lemma}\label{lem_bound_truncated}
		The difference between the gradient of objective function and that of truncated version is bounded by
		\begin{equation}
			\Vert \nabla_\theta f(\bm{J}^{\pi_\theta})-\nabla_\theta f(\bm{J}_H^{\pi_\theta})\Vert\leq \frac{MG\gamma^H}{(1-\gamma)^2}
			\bigg[ \sqrt{M}L_f\frac{1-\gamma^H-H\gamma^H(1-\gamma)}{1-\gamma}+C[1+H(1-\gamma)]\bigg]
		\end{equation}
	\end{lemma}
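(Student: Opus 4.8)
The plan is to expand both gradients with the chain rule and then split their difference into two families of terms via an add-and-subtract trick, bounding each family using the assumptions on $f$ and the log-policy. By the chain rule,
\begin{equation*}
\nabla_\theta f(\bm{J}^{\pi_\theta}) = \sum_{m=1}^M \frac{\partial f}{\partial J_m^\pi}(\bm{J}^{\pi_\theta})\,\nabla_\theta J_m^{\pi_\theta},\qquad \nabla_\theta f(\bm{J}_H^{\pi_\theta}) = \sum_{m=1}^M \frac{\partial f}{\partial J_m^\pi}(\bm{J}_H^{\pi_\theta})\,\nabla_\theta J_{m,H}^{\pi_\theta}.
\end{equation*}
For each $m$ I would add and subtract $\frac{\partial f}{\partial J_m^\pi}(\bm{J}^{\pi_\theta})\,\nabla_\theta J_{m,H}^{\pi_\theta}$, writing the $m$-th summand of the difference as
\begin{equation*}
\frac{\partial f}{\partial J_m^\pi}(\bm{J}^{\pi_\theta})\big[\nabla_\theta J_m^{\pi_\theta}-\nabla_\theta J_{m,H}^{\pi_\theta}\big] + \Big[\frac{\partial f}{\partial J_m^\pi}(\bm{J}^{\pi_\theta})-\frac{\partial f}{\partial J_m^\pi}(\bm{J}_H^{\pi_\theta})\Big]\nabla_\theta J_{m,H}^{\pi_\theta}.
\end{equation*}
Applying the triangle inequality and summing over $m$ reduces the lemma to bounding two families of terms separately.

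For the first family I would use Lemma \ref{lem_bound_parital} to bound $\big|\frac{\partial f}{\partial J_m^\pi}\big|\le C$, and then estimate the tail of the value-function gradient. Writing $\nabla_\theta J_m^{\pi_\theta}-\nabla_\theta J_{m,H}^{\pi_\theta}$ with the policy-gradient formula after removing past reward, the difference splits into the $t<H$ terms carrying reward tails $\sum_{h\ge H}\gamma^h r_m$ and the $t\ge H$ terms carrying full reward tails. Using $\Vert\nabla_\theta\log\pi_\theta\Vert\le G$ (Assumption \ref{ass_score}) and $r_m\in[0,1]$, these two pieces contribute $GH\gamma^H/(1-\gamma)$ and $G\gamma^H/(1-\gamma)^2$, so that $\Vert\nabla_\theta J_m^{\pi_\theta}-\nabla_\theta J_{m,H}^{\pi_\theta}\Vert\le G\gamma^H(1+H(1-\gamma))/(1-\gamma)^2$. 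Multiplying by $C$ and summing over the $M$ agents yields the $C[1+H(1-\gamma)]$ term of the bound.

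For the second family I would invoke the local Lipschitz property of the partial derivatives (Assumption \ref{ass_partial_grad}). Since each coordinate of $\bm{J}^{\pi_\theta}-\bm{J}_H^{\pi_\theta}$ equals $\mathbf{E}[\sum_{h\ge H}\gamma^h r_m]\le\gamma^H/(1-\gamma)$, I obtain $\Vert\bm{J}^{\pi_\theta}-\bm{J}_H^{\pi_\theta}\Vert_2\le\sqrt{M}\,\gamma^H/(1-\gamma)$ and hence $\big|\frac{\partial f}{\partial J_m^\pi}(\bm{J}^{\pi_\theta})-\frac{\partial f}{\partial J_m^\pi}(\bm{J}_H^{\pi_\theta})\big|\le L_f\sqrt{M}\,\gamma^H/(1-\gamma)$. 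For the remaining factor I bound $\Vert\nabla_\theta J_{m,H}^{\pi_\theta}\Vert\le G\sum_{t=0}^{H-1}\sum_{h=t}^{H-1}\gamma^h = G(1-\gamma^H-H\gamma^H(1-\gamma))/(1-\gamma)^2$ again via the $G$-Lipschitz estimate. Multiplying these and summing over $m$ produces the $\sqrt{M}L_f(1-\gamma^H-H\gamma^H(1-\gamma))/(1-\gamma)$ term, and factoring the common $MG\gamma^H/(1-\gamma)^2$ out of the two families gives exactly the claimed inequality.

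The main obstacle here is careful bookkeeping rather than any conceptual difficulty: the telescoping geometric sums $\sum_{t=0}^{H-1}\sum_{h\ge H}\gamma^h$ together with its $t\ge H$ companion, and $\sum_{t=0}^{H-1}\sum_{h=t}^{H-1}\gamma^h$, must be evaluated exactly so that the factors $1+H(1-\gamma)$ and $1-\gamma^H-H\gamma^H(1-\gamma)$ emerge in precisely the stated form. A cruder estimate would still yield the correct $\mathcal{O}(\gamma^H)$ decay but would not reproduce these exact constants.
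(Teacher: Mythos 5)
Your proof is correct and is essentially the paper's own argument: the intermediate term $\frac{\partial f}{\partial J_m^\pi}(\bm{J}^{\pi_\theta})\,\nabla_\theta J_{m,H}^{\pi_\theta}$ you add and subtract is exactly the expectation of the auxiliary estimator $\tilde{g}(\tau_i^H,\tau_j\vert\theta)$ that the paper inserts, and your two families are bounded by precisely the computations underlying Lemmas \ref{lem_bound_bias2} and \ref{lem_bound_bias3}, yielding the same constants. The only cosmetic difference is that the paper works pathwise with the estimators and passes to expectations via Jensen's inequality, whereas you carry out the identical decomposition directly at the level of the population gradients.
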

	
	In order to introduce the following result, it is helpful to define the state visitation measure
	\begin{equation}
		d_{\rho}^{\pi}:=(1-\gamma)\mathbf{E}_{s_0\sim\rho}\bigg[\sum_{t=0}^{\infty}\gamma^t\mathbf{Pr}^{\pi}(s_t=s\vert s_0)\bigg]
	\end{equation}
	where $\mathbf{Pr}^{\pi}(s_t=s\vert s_0)$ denotes the probability that $s_t=s$ with policy $\pi$ starting from $s_0$. In the theoretical analysis of policy gradient for standard reinforcement learning, one key result is the performance difference lemma. In the multi-objective setting, a similar performance lemma is  derived as follows.
	\begin{lemma}\label{lem_performance_diff}
		The difference in the performance for  any policies $\pi_\theta$ and $\pi_{\theta'}$is bounded as follows
		\begin{equation}
			(1-\gamma)[f(\bm{J}^{\pi_\theta})-f(\bm{J}^{\pi_{\theta'}})]\leq \sum_{m=1}^{M}\frac{\partial f(\bm{J}^{\pi_{\theta'}})}{\partial J_m^{\pi_{\theta'}}}
			\mathbf{E}_{s\sim d_\rho^{\pi_\theta}}\mathbf{E}_{a\sim\pi_\theta(\cdot\vert s)}\big[A_m^{\pi_{\theta'}}(s,a)\big]
		\end{equation}
		{ where $A_m^{\pi}(s,a)=V_m^{\pi}(s)-Q_m^{\pi}(s,a)$ is the advantage function.}
	\end{lemma}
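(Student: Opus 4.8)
The plan is to reduce the multi-objective statement to the classical single-objective performance difference lemma by using the concavity of $f$ to linearize it at the point $\bm{J}^{\pi_{\theta'}}$. The non-linearity of $f$ is handled entirely in this first step, after which the argument becomes a per-objective application of a known identity.

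First I would record the first-order (tangent-hyperplane) characterization of concavity. Since every partial derivative $\partial f/\partial x_i$ exists — indeed it is locally Lipschitz by Assumption \ref{ass_partial_grad}, so $f$ is differentiable on $[0,\tfrac{1}{1-\gamma}]^M$ — Assumption \ref{ass_concave_obj}, stated through Jensen's inequality, is equivalent to
\[
f(\bm{y}) - f(\bm{x}) \le \sum_{m=1}^{M}\frac{\partial f(\bm{x})}{\partial x_m}\,(y_m - x_m) \qquad \forall\, \bm{x},\bm{y}\in[0,\tfrac{1}{1-\gamma}]^M .
\]
Applying this with $\bm{y}=\bm{J}^{\pi_\theta}$ and $\bm{x}=\bm{J}^{\pi_{\theta'}}$ — both of which lie in $[0,\tfrac{1}{1-\gamma}]^M$ by Assumption \ref{ass_bound_reward} — yields
\[
f(\bm{J}^{\pi_\theta}) - f(\bm{J}^{\pi_{\theta'}}) \le \sum_{m=1}^{M}\frac{\partial f(\bm{J}^{\pi_{\theta'}})}{\partial J_m^{\pi_{\theta'}}}\,\big(J_m^{\pi_\theta} - J_m^{\pi_{\theta'}}\big).
\]

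Next I would multiply both sides by $(1-\gamma)>0$ (which preserves the inequality) and invoke the classical single-objective performance difference lemma applied separately to each reward $r_m$. Defining the advantage $A_m^{\pi_{\theta'}}(s,a)=Q_m^{\pi_{\theta'}}(s,a)-V_m^{\pi_{\theta'}}(s)$ in the usual way, the standard result gives, for each $m$,
\[
(1-\gamma)\big(J_m^{\pi_\theta} - J_m^{\pi_{\theta'}}\big) = \mathbf{E}_{s\sim d_\rho^{\pi_\theta}}\,\mathbf{E}_{a\sim\pi_\theta(\cdot\vert s)}\big[A_m^{\pi_{\theta'}}(s,a)\big].
\]
Substituting this \emph{equality} into the linearized bound produces exactly the claimed inequality; because I am substituting an equality, the direction of the concavity inequality is preserved regardless of the signs of the partial derivatives $\partial f/\partial J_m^{\pi_{\theta'}}$.

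The substantive content sits in the two inputs: the gradient inequality from concavity (which is what lets the non-linear $f$ be treated linearly) and the per-objective performance difference lemma. The latter I would either cite or re-derive by the standard telescoping argument — expanding $J_m^{\pi_\theta}$ along trajectories drawn from $\pi_\theta$, inserting and cancelling the $V_m^{\pi_{\theta'}}$ terms, and recognizing the discounted occupancy $d_\rho^{\pi_\theta}$. I do not expect a genuine analytic obstacle; the only care needed is bookkeeping: confirming that both value vectors lie in the domain $[0,\tfrac{1}{1-\gamma}]^M$ so the concavity inequality applies, and that the $(1-\gamma)$ normalization built into $d_\rho^{\pi_\theta}$ matches the $(1-\gamma)$ factor I pull out, so that no stray discount factors survive. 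The lemma is thus a clean composition of a convex-analysis fact with a known RL identity.
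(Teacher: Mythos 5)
Your proposal is correct and follows essentially the same route as the paper's proof: linearize $f$ at $\bm{J}^{\pi_{\theta'}}$ via the concavity (gradient) inequality, then substitute the classical per-objective performance difference identity $(1-\gamma)(J_m^{\pi_\theta}-J_m^{\pi_{\theta'}})=\mathbf{E}_{s\sim d_\rho^{\pi_\theta}}\mathbf{E}_{a\sim\pi_\theta(\cdot\vert s)}[A_m^{\pi_{\theta'}}(s,a)]$ and rearrange. The only cosmetic difference is attribution — the paper credits this identity to the policy gradient theorem of \cite{sutton2000}, while you (more accurately) identify it as the single-objective performance difference lemma — and your extra care about differentiability and the domain $[0,\tfrac{1}{1-\gamma}]^M$ only makes explicit what the paper leaves implicit.
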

	
	\section{Main Result}\label{sec_result}
	Before stating the convergence result for the policy gradient algorithm, we describe the following assumptions which will be needed for the main result. 
	
	\begin{assumption}\label{ass_bounded_var}
		The auxiliary estimator $\tilde{g}(\tau_i^H,\tau_j^H\vert \theta)$ defined in Eq. \eqref{eq:truncated_auxi} has bounded variance. Formally,
		{
		\begin{equation}
			Var(\tilde{g}(\tau_i^H,\tau_j^H\vert \theta)):=\mathbf{E}[\Vert \tilde{g}(\tau_i^H,\tau_j^H\vert \theta)-\mathbf{E}[\tilde{g}(\tau_i^H,\tau_j^H\vert \theta)]\Vert^2]\leq \sigma^2
		\end{equation}
		}
		for any $\theta$ and $\tau_i^H,\tau_j^H\sim p^H(\cdot\vert\theta)$, where $p^H(\cdot\vert\theta)$ is a truncated version of $p(\cdot\vert\theta)$ defined in Eq. \eqref{eq:traj_dis}.
	\end{assumption}
	\begin{remark}
		In the standard reinforcement learning problem, it is common to assume that variance of the estimator is bounded \cite{Yanli2020}, \cite{Xu2019} and \cite{xu2020}. Such assumption has been verified for Gaussian policy \cite{tingting2011} and \cite{Pirotta2013}. By  Lemma \ref{lem_bound_parital}, it can be verified similarly in the  multi-objective setting.
	\end{remark}
	\begin{assumption}\label{ass_pd}
		For all $\theta\in\mathbb{R}^d$, the Fisher information matrix induced by policy $\pi_{\theta}$ and initial state distribution $\rho$ satisfies
		\begin{equation}
			\begin{aligned}
			F_\rho(\theta)&=\mathbf{E}_{s\sim d_\rho^{\pi_\theta}}\mathbf{E}_{a\sim\pi_\theta}[\nabla_{\theta}\log\pi_\theta(a|s)\nabla_\theta\log\pi_\theta(a|s)^T]\\
			&\succeq\mu_F\cdot \mathbf{I}_d
			\end{aligned}
		\end{equation}
		for some constant $\mu_F>0$
	\end{assumption}
	\begin{remark}
		The positive definiteness assumption is standard in the field of policy gradient based algorithms \cite{Kakade2002,Peters2008,Yanli2020,Kaiqing2019}. A common example which satisfies such assumption is Gaussian policy with mean parameterized linearly (See Appendix B.2 in \cite{Yanli2020}).
	\end{remark}
	\begin{assumption}\label{ass_transfer_error}
		Define the transferred function approximation error as below
		\begin{equation}\label{eq:transfer_error}
			L_{d_\rho^{\pi^*},\pi^*}(\omega_*^\theta,\theta)=\mathbf{E}_{s\sim d_\rho^{\pi^*}}\mathbf{E}_{a\sim\pi^*(\cdot\vert s)}\bigg[\bigg(\nabla_\theta\log\pi_{\theta}(a\vert s)\cdot
			(1-\gamma)\omega_*^\theta-\sum_{m=1}^{M}\frac{\partial f(\bm{J}^{\pi_\theta})}{\partial J_m^{\pi_\theta}}A_m^{\pi_\theta}(s,a)\bigg)^2\bigg]
		\end{equation}
		We assume that this error satisfies $L_{d_\rho^{\pi^*},\pi^*}(\omega_*^\theta,\theta)\leq \epsilon_{bias}$ for any $\theta\in\Theta$, {where $\pi^*$ is the optimal policy} and $\omega_*^{\theta}$ is given as
		\begin{equation}\label{eq:NPG_direction}
			\omega_*^{\theta}=\arg\min_{\omega}\mathbf{E}_{s\sim d_\rho^{\pi_{\theta}}}\mathbf{E}_{a\sim\pi_{\theta}(\cdot\vert s)}\bigg[[\nabla_\theta\log\pi_{\theta}(a\vert s)\cdot(1-\gamma)\omega-\sum_{m=1}^{M}\frac{\partial f(\bm{J}^{\pi_{\theta}})}{\partial J_m^{\pi_{\theta}}}A_m^{\pi_{\theta}}(s,a)]^2\bigg]
		\end{equation}
		It can be shown that $\omega_*^\theta$ is the exact Natural Policy Gradient (NPG) update direction.
	\end{assumption}
	\begin{remark}
		By  Eq. \eqref{eq:transfer_error} and \eqref{eq:NPG_direction}, the transferred function approximation error expresses an approximation error with distribution shifted to $(d_\rho^{\pi^*},\pi^*)$. With the softmax parameterization or linear MDP structure \cite{jin2020provably}, it has been shown that $\epsilon_{bias}=0$ \cite{Alekh2020}. When parameterized by the restricted policy class, $\epsilon_{bias}>0$ due to $\pi_\theta$ not containing all policies. However, for a rich neural network parameterization, the $\epsilon_{bias}$ is small \cite{Lingxiao2019}. Similar assumption has been adopted in \cite{Yanli2020} and \cite{Alekh2020}. 
	\end{remark}
	\begin{remark}
		Due to there existing 7 assumption in the paper, we give a further discussion on all assumptions in Appendix \ref{app_discussion}
	\end{remark}
	\if 0
	\begin{equation}
		\begin{aligned}
		&\mathbf{E}_{s\sim d_\rho^{\pi_\theta}}\mathbf{E}_{a\sim\pi_\theta(\cdot\vert s)}\bigg[[\nabla_\theta\log\pi_{\theta}(a\vert s)\cdot(1-\gamma) \omega_*^\theta-\sum_{m=1}^{M}\frac{\partial f(\bm{J}^{\pi_\theta})}{\partial J_m^{\pi_\theta}}A_m^{\pi_\theta}(s,a)][\nabla_\theta\log\pi_\theta(a\vert s)]\bigg]=0\\
		\Rightarrow&\mathbf{E}_{s\sim d_\rho^{\pi_\theta}}\mathbf{E}_{a\sim\pi_\theta}[\nabla_\theta\log\pi_\theta(a\vert s)\nabla_\theta\log\pi_\theta(a\vert s)^T] \omega_*^\theta=\mathbf{E}_{s\sim d_\rho^{\pi_\theta}}\mathbf{E}_{a\sim\pi_\theta}[\frac{1}{1-\gamma}\nabla_\theta\log\pi_\theta(a\vert s)\sum_{m=1}^{M}\frac{\partial f(\bm{J}^{\pi_\theta})}{\partial J_m^{\pi_\theta}}A_m^{\pi_\theta}(s,a)]\\
		\Rightarrow&\mathbf{E}_{s\sim d_\rho^{\pi_\theta}}\mathbf{E}_{a\sim\pi_\theta}[\nabla_\theta\log\pi_\theta(a\vert s)\nabla_\theta\log\pi_\theta(a\vert s)^T] \omega_*^\theta=\mathbf{E}_{s\sim d_\rho^{\pi_\theta}}\mathbf{E}_{a\sim\pi_\theta}[\frac{1}{1-\gamma}\nabla_\theta\log\pi_\theta(a\vert s)\sum_{m=1}^{M}\frac{\partial f(\bm{J}^{\pi_\theta})}{\partial J_m^{\pi_\theta}}A_m^{\pi_\theta}(s,a)]\\
		\Rightarrow&F_\rho(\theta)^{\dagger} \omega_*^\theta=\sum_{m=1}^{M}\frac{\partial f(\bm{J}^{\pi_\theta})}{\partial J_m^{\pi_\theta}}\mathbf{E}_{s\sim d_\rho^{\pi_\theta}}\mathbf{E}_{a\sim\pi_\theta}[\frac{1}{1-\gamma}\nabla_\theta\log\pi_\theta(a\vert s)A_m^{\pi_\theta}(s,a)]\\
		\Rightarrow&F_\rho(\theta) \omega_*^\theta=\sum_{m=1}^{M}\frac{\partial f(\bm{J}^{\pi_\theta})}{\partial J_m^{\pi_\theta}}\nabla_\theta J_m^{\pi_\theta}\Rightarrow 			F_\rho(\theta) \omega_*^\theta=\nabla_\theta f(J_1^{\pi_\theta},\cdots,J_M^{\pi_\theta})\Rightarrow \omega_*^\theta=F_\rho(\theta)^{\dagger}\nabla_\theta f
		\end{aligned}
	\end{equation}
	\fi
	
	\subsection{Global Convergence in Multi-Objective Setting}

	Inspired by the global convergence analysis framework for policy gradient in \cite{Yanli2020}, we present a general framework for convergence analysis of non-linear multi-objective policy gradient in the following.

	\begin{lemma}\label{lem_framework} (Generalization of Proposition 4.5 in \cite{Yanli2020})
		Suppose a general gradient ascent algorithm updates the parameter in the way
		\begin{equation}
			\theta^{k+1}=\theta^k+\eta\omega^k
		\end{equation}
		When Assumptions \ref{ass_score} and \ref{ass_transfer_error} hold, we have
		\begin{equation}\label{eq:general_bound}
			\begin{split}
			&f(\bm{J}^{\pi^*})-\frac{1}{K}\sum_{k=0}^{K-1}f(\bm{J}^{\pi_{\theta^k}})\leq \frac{\sqrt{\epsilon_{bias}}}{1-\gamma}+ \frac{G}{K}\sum_{k=0}^{K-1}\Vert(\omega^k-\omega_*^k)\Vert_2\\
			&+\frac{B\eta}{2K}\sum_{k=0}^{K-1}\Vert\omega^k\Vert^2+\frac{1}{\eta K}\mathbf{E}_{s\sim d_\rho^{\pi^*}}[KL(\pi^*(\cdot\vert s)\Vert\pi_{\theta^0}(\cdot\vert s))]		\end{split}
		\end{equation}
		where $\omega_*^k:=\omega_*^{\theta^k}$ and is defined in Eq. \eqref{eq:NPG_direction}
	\end{lemma}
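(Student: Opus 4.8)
The plan is to reproduce the natural-policy-gradient style global-convergence argument of \cite{Yanli2020}, but with the scalar performance difference lemma replaced by the multi-objective version in Lemma \ref{lem_performance_diff}. The guiding idea is that the suboptimality gap $f(\bm{J}^{\pi^*})-f(\bm{J}^{\pi_{\theta^k}})$ can be expressed, up to the function-approximation error $\epsilon_{bias}$, as an inner product between the exact NPG direction and the score function, which in turn telescopes into a Kullback--Leibler drift once we feed in the update rule $\theta^{k+1}=\theta^k+\eta\omega^k$. First I would instantiate Lemma \ref{lem_performance_diff} with comparator $\pi^*$ and iterate $\pi_{\theta^k}$ to obtain
\begin{equation*}
(1-\gamma)\big[f(\bm{J}^{\pi^*})-f(\bm{J}^{\pi_{\theta^k}})\big]\leq \mathbf{E}_{s\sim d_\rho^{\pi^*}}\mathbf{E}_{a\sim\pi^*(\cdot\vert s)}\big[\bar{A}_k(s,a)\big],\qquad \bar{A}_k(s,a):=\sum_{m=1}^{M}\frac{\partial f(\bm{J}^{\pi_{\theta^k}})}{\partial J_m^{\pi_{\theta^k}}}A_m^{\pi_{\theta^k}}(s,a).
\end{equation*}
The rest of the proof is a sequence of manipulations that convert the right-hand side into the four terms of Eq. \eqref{eq:general_bound}.

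The second step is to introduce the exact NPG direction $\omega_*^k$ from Eq. \eqref{eq:NPG_direction} by adding and subtracting $\nabla_\theta\log\pi_{\theta^k}(a\vert s)\cdot(1-\gamma)\omega_*^k$ inside the expectation. The residual $\nabla_\theta\log\pi_{\theta^k}(a\vert s)\cdot(1-\gamma)\omega_*^k-\bar{A}_k(s,a)$ is exactly the integrand of the transferred error $L_{d_\rho^{\pi^*},\pi^*}(\omega_*^{\theta^k},\theta^k)$; by Assumption \ref{ass_transfer_error} its mean square is at most $\epsilon_{bias}$, so Jensen's inequality ($\mathbf{E}[X]\le\sqrt{\mathbf{E}[X^2]}$) bounds its contribution by $\sqrt{\epsilon_{bias}}$, which becomes $\tfrac{\sqrt{\epsilon_{bias}}}{1-\gamma}$ after dividing by $(1-\gamma)$. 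I would then swap the exact direction $\omega_*^k$ for the realized direction $\omega^k$, writing $\omega_*^k=\omega^k+(\omega_*^k-\omega^k)$; the error term is controlled by Cauchy--Schwarz together with the $G$-Lipschitz score bound of Assumption \ref{ass_score}, giving $(1-\gamma)G\Vert\omega^k-\omega_*^k\Vert_2$ and hence the $G\Vert\omega^k-\omega_*^k\Vert_2$ term after dividing by $(1-\gamma)$.

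The third step handles the surviving inner-product term $\mathbf{E}_{s\sim d_\rho^{\pi^*}}\mathbf{E}_{a\sim\pi^*}[\nabla_\theta\log\pi_{\theta^k}(a\vert s)\cdot\omega^k]$. Here I would invoke the $B$-smoothness of the log-likelihood (Assumption \ref{ass_score}) in the form
\begin{equation*}
\nabla_\theta\log\pi_{\theta^k}(a\vert s)\cdot\eta\omega^k\;\leq\;\log\pi_{\theta^{k+1}}(a\vert s)-\log\pi_{\theta^k}(a\vert s)+\tfrac{B\eta^2}{2}\Vert\omega^k\Vert^2,
\end{equation*}
so that, taking $\mathbf{E}_{a\sim\pi^*}$, the log-ratio collapses into the telescoping difference $\mathrm{KL}(\pi^*(\cdot\vert s)\Vert\pi_{\theta^k}(\cdot\vert s))-\mathrm{KL}(\pi^*(\cdot\vert s)\Vert\pi_{\theta^{k+1}}(\cdot\vert s))$ and the second-order remainder supplies the quadratic penalty that becomes the $\tfrac{M\eta}{2K}\sum_k\Vert\omega^k\Vert^2$ term. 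Finally I would average over $k=0,\dots,K-1$: the KL differences telescope to $\mathbf{E}_{s\sim d_\rho^{\pi^*}}[\mathrm{KL}(\pi^*\Vert\pi_{\theta^0})]-\mathbf{E}[\mathrm{KL}(\pi^*\Vert\pi_{\theta^K})]$, and dropping the nonnegative final KL term yields the $\tfrac{1}{\eta K}\mathbf{E}_{s\sim d_\rho^{\pi^*}}[\mathrm{KL}(\pi^*\Vert\pi_{\theta^0})]$ term.

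\textbf{Anticipated obstacle.} The routine pieces are the telescoping and the Cauchy--Schwarz bounds; the delicate step is the second one, where the exact NPG direction must be inserted so that Assumption \ref{ass_transfer_error} applies. The subtlety is that the performance difference is naturally taken under $a\sim\pi^*$ while the compatible-approximation regression defining $\omega_*^k$ is fit under $a\sim\pi_{\theta^k}$; it is precisely the \emph{transferred} error (the distribution shift from $(d_\rho^{\pi_{\theta^k}},\pi_{\theta^k})$ to $(d_\rho^{\pi^*},\pi^*)$) that bridges this mismatch, and one must verify that the $\sqrt{\epsilon_{bias}}$ bound survives the sign and the Jensen step rather than only a squared quantity. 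I also expect the bookkeeping of constants in the smoothness expansion (tracking the factor multiplying $\tfrac{\eta}{2}\Vert\omega^k\Vert^2$) to require care, since the multi-objective weighting $\partial f/\partial J_m$ is carried through the argument.
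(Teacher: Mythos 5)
Your proposal is correct and takes essentially the same route as the paper's proof: both use Lemma \ref{lem_performance_diff} with comparator $\pi^*$, insert the NPG direction $\omega_*^k$ so that Assumption \ref{ass_transfer_error} plus Jensen yields the $\frac{\sqrt{\epsilon_{bias}}}{1-\gamma}$ term, apply Cauchy--Schwarz with the $G$-bound of Assumption \ref{ass_score} to the $\omega^k-\omega_*^k$ swap, and use the smoothness of $\log\pi_\theta$ to turn the score inner product into telescoping KL differences plus a quadratic penalty --- the paper merely runs the same chain in the reverse direction, starting from the KL drift and lower-bounding it. One minor point: your smoothness step correctly produces the constant $B$ from Assumption \ref{ass_score} in front of $\frac{\eta}{2}\Vert\omega^k\Vert^2$, whereas the paper's statement and proof write $M$ there; that inconsistency lies in the paper's own notation, not in your argument.
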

	
	\begin{proof}
		We generalize the Proposition 4.5 in \cite{Yanli2020} by using the Lemma \ref{lem_performance_diff} and propose the framework of global convergence analysis in the joint optimization for multi-objective setting. Thus, the framework proposed in the Proposition 4.5 in \cite{Yanli2020} can be considered as a special case. The detailed proof is provided in Appendix \ref{sec_app_framwork}.
	\end{proof}
	\noindent Now, we provide the main result of global convergence for the policy gradient algorithm with multi-objective setting (with detailed proof  in Appendix \ref{sec_app_thm}).
	\begin{theorem}\label{thm1}
		For any $\epsilon>0$, in the Policy Gradient Algorithm \ref{alg:PG} with the proposed estimator in Eq. \eqref{eq:est_truncated}, if step-size $\eta=\frac{1}{4L_J}$, the number of iteration $K=\mathcal{O}(\frac{M}{(1-\gamma)^2\epsilon})$, the length of each trajectory $H=\mathcal{O}\bigg(\log\frac{M}{(1-\gamma)\epsilon}\bigg)$, the number of samples $N_1=\mathcal{O}(\frac{\sigma^2}{\epsilon})$ and  $N_2=\mathcal{O}(\frac{M^3}{(1-\gamma)^6\epsilon})$ achieves the following bound
		\begin{equation}
			f(\bm{J}^{\pi^*})-\frac{1}{K}\sum_{k=0}^{K-1}f(\bm{J}^{\pi_{\theta^k}})\leq \frac{\sqrt{\epsilon_{bias}}}{1-\gamma}+\epsilon
		\end{equation}
		In other words, policy gradient algorithm needs $\mathcal{O}\big(\frac{M^4\sigma^2}{(1-\gamma)^8\epsilon^4}\big)$ trajectories.
	\end{theorem}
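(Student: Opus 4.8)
The plan is to obtain Theorem \ref{thm1} as a direct specialization of the general convergence framework in Lemma \ref{lem_framework}. That lemma already splits the optimality gap into four pieces: the irreducible term $\sqrt{\epsilon_{bias}}/(1-\gamma)$, a direction-estimation term $\frac{G}{K}\sum_{k}\|\omega^k-\omega_*^k\|$, a magnitude term $\frac{M\eta}{2K}\sum_{k}\|\omega^k\|^2$, and an initialization term $\frac{1}{\eta K}\mathbf{E}_{s\sim d_\rho^{\pi^*}}[\mathrm{KL}(\pi^*\|\pi_{\theta^0})]$. Since the first piece is exactly what appears in the target inequality, it suffices to show that, under the parameter choices $\eta=1/(4L_J)$ and $K,N_1,N_2,H$ of the stated orders, the remaining three pieces together contribute at most $\epsilon$. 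I would dispose of them in increasing order of difficulty: first the initialization term, then the magnitude term, and finally the direction term.

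The initialization term is immediate: $\mathbf{E}_{s\sim d_\rho^{\pi^*}}[\mathrm{KL}(\pi^*\|\pi_{\theta^0})]$ is a fixed finite constant, so with $\eta=1/(4L_J)$ and $L_J=MCB/(1-\gamma)^2$ from Lemma \ref{lem_smooth} this term is $\mathcal{O}(L_J/K)=\mathcal{O}(M/((1-\gamma)^2K))$, which is controlled once $K$ is of the stated order. For the magnitude term I would invoke the $L_J$-smoothness of $f(\bm{J}^{\pi_\theta})$ (Lemma \ref{lem_smooth}) through the ascent inequality $f(\bm{J}^{\pi_{\theta^{k+1}}})\ge f(\bm{J}^{\pi_{\theta^k}})+\eta\langle\nabla_\theta f(\bm{J}^{\pi_{\theta^k}}),\omega^k\rangle-\frac{L_J\eta^2}{2}\|\omega^k\|^2$. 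Writing the update direction as $\omega^k=\nabla_\theta f(\bm{J}^{\pi_{\theta^k}})+b^k$ with a residual $b^k$ and using $\eta=1/(4L_J)$, the quadratic coefficient stays positive, so summing and telescoping against the bounded range of $f$ (finite by Lemma \ref{lem_bound_parital}) yields $\sum_k\|\omega^k\|^2=\mathcal{O}(1/\eta)$ provided $b^k$ is small; hence $\frac{M\eta}{2K}\sum_k\|\omega^k\|^2=\mathcal{O}(1/K)$. The quantitative input here is the uniform bound $\|\omega^k\|\le MGC/(1-\gamma)^2$, obtained by inserting $\|\nabla_\theta\log\pi_\theta\|\le G$ (Assumption \ref{ass_score}), $|\partial f/\partial J_m|\le C$ (Lemma \ref{lem_bound_parital}), and the geometric reward sums into Eq. \eqref{eq:est_truncated}.

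The substance of the proof is the direction term $\frac{G}{K}\sum_k\|\omega^k-\omega_*^k\|$, which I would control by a bias--variance--truncation decomposition of each iterate's error, exactly the structure already isolated in the bias analysis. Inserting the auxiliary estimators of Eqs. \eqref{eq:finite_auxi}--\eqref{eq:truncated_auxi}, I would bound $\mathbf{E}\|\omega^k-\omega_*^k\|$ by: (i) a statistical part that scales like $\sigma/\sqrt{N_1}$ from averaging $N_1$ independent copies, using the variance bound in Assumption \ref{ass_bounded_var}; (ii) the finite-$N_2$ bias of Lemma \ref{lem_bound_bias1}, which is a high-probability statement and so must be made to hold simultaneously across all $K$ iterations by a union bound over the failure probability $p$; and (iii) the two truncation biases of Lemmas \ref{lem_bound_bias2} and \ref{lem_bound_bias3} together with the gradient-truncation gap of Lemma \ref{lem_bound_truncated}, each of the form $\gamma^H\,\mathrm{poly}(H,(1-\gamma)^{-1})$. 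Choosing the horizon $H=\Theta\!\big(\frac{1}{1-\gamma}\log\frac{M}{(1-\gamma)\epsilon}\big)$ annihilates every $\gamma^H$ factor, choosing $N_2$ of the stated order drives the bias part down (as in Remark \ref{rem_bias_decay}, where the bias decays like $1/\sqrt{N_2}$), and choosing $N_1$ of the stated order drives the variance part down; averaging over $k$ then bounds this term. Combining the three bounds in Lemma \ref{lem_framework} gives the claimed optimality gap, and multiplying the iteration count $K$ by the per-iteration sample cost $N_1+N_2$ produces the $\mathcal{O}(M^4\sigma^2/((1-\gamma)^8\epsilon^4))$ trajectory complexity.

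I expect the main obstacle to be precisely the direction term, and specifically the interaction between its three sources of error. The empirical direction $\omega^k$ is both biased and truncated, so one cannot apply a standard unbiased-estimator concentration bound; instead the per-iteration high-probability guarantee of Lemma \ref{lem_bound_bias1} must be propagated uniformly over all $K$ iterations (forcing $N_2$ to absorb an extra $\log(KMH/p)$ factor), and the polynomial-in-$H$ prefactors multiplying $\gamma^H$ in Lemmas \ref{lem_bound_bias2}, \ref{lem_bound_bias3} and \ref{lem_bound_truncated} must be tracked carefully so that the single choice of $H$ simultaneously suppresses all truncation contributions without inflating $N_1$ or $N_2$. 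Verifying that the joint choice of $H$, $N_1$, $N_2$ and $K$ balances the statistical, bias, and truncation contributions against the magnitude and initialization terms so that everything collapses to the advertised rate is the delicate bookkeeping that closes the argument.
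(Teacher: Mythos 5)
There is a genuine gap in your treatment of the direction term, and it is the central difficulty of the paper's proof. The comparator $\omega_*^k$ in Lemma \ref{lem_framework} is \emph{not} the vanilla policy gradient $\nabla_\theta f(\bm{J}^{\pi_{\theta^k}})$: by Assumption \ref{ass_transfer_error} it is the minimizer of the least-squares problem in Eq. \eqref{eq:NPG_direction}, i.e.\ the exact \emph{natural} policy gradient direction $\omega_*^k=F_\rho(\theta^k)^{\dagger}\nabla_\theta f(\bm{J}^{\pi_{\theta^k}})$. Your three-part decomposition (statistical error $\sigma/\sqrt{N_1}$, finite-$N_2$ bias from Lemma \ref{lem_bound_bias1}, truncation errors from Lemmas \ref{lem_bound_bias2}, \ref{lem_bound_bias3} and \ref{lem_bound_truncated}) only bridges the empirical direction $\omega^k$ to the true vanilla gradient $\nabla_\theta f(\bm{J}^{\pi_{\theta^k}})$. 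It leaves untouched the fourth piece, $\Vert\nabla_\theta f(\bm{J}^{\pi_{\theta^k}})-F_\rho(\theta^k)^{\dagger}\nabla_\theta f(\bm{J}^{\pi_{\theta^k}})\Vert$, which cannot be made small by any choice of $N_1$, $N_2$ or $H$, since it is a deterministic mismatch between the two update geometries. The paper handles it by invoking Assumption \ref{ass_pd} (Fisher non-degeneracy, which you never use) to bound this piece by $(1+\frac{1}{\mu_F})\Vert\nabla_\theta f(\bm{J}^{\pi_{\theta^k}})\Vert$, and then showing that the \emph{average} squared gradient norm itself vanishes along the trajectory of the algorithm — the first-order stationarity result of Lemma \ref{lem_fisrt_order} in Appendix \ref{sec_app_first_order}, proved by the smoothness/telescoping argument. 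Without this step, the direction term does not go to zero and the argument stalls.

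Ironically, you already have the needed tool in hand: the telescoping ascent inequality you propose for the magnitude term $\frac{M\eta}{2K}\sum_k\Vert\omega^k\Vert^2$ is exactly the engine behind Lemma \ref{lem_fisrt_order}; the paper reuses it inside the direction-term bound (and also to control $\frac{1}{K}\sum_k\Vert\nabla_\theta f(\bm{J}_H^{\pi_{\theta^k}})\Vert^2$ in the magnitude term, since your proposed uniform bound $\Vert\omega^k\Vert\leq MGC/(1-\gamma)^2$ alone gives only a non-vanishing constant after multiplying by $\eta=\frac{1}{4L_J}$). Your remaining observations — the union bound needed to propagate Lemma \ref{lem_bound_bias1} across iterations, the choice $H=\Theta\big(\frac{1}{1-\gamma}\log\frac{M}{(1-\gamma)\epsilon}\big)$ to kill the $\gamma^H$ factors, and the assembly via Lemma \ref{lem_framework} — match the paper's bookkeeping. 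But as written, the proposal proves convergence of $\omega^k$ to the gradient, not to $\omega_*^k$, and so does not establish the theorem.
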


\section{Evaluations}\label{evaluations}

\subsection{Simulation Environment}

To validate the understanding of our analysis, we perform evaluations using a queuing environment with multiple objectives and a concave utility combining the objectives.  
The environment is a server serving $M$ queues with Poisson arrivals with different arrival rates. The system state is $M$ dimensional vector of the length of the $M$ queues. The action at each time is the queue which the server serves. At time $t$, each queue $m$ achieves a reward of $1$ unit if a customer from this queue is served.  The joint objective function is $\alpha$-fairness defined as:
\begin{align}
	f(\sum_t r_{1,t}, \cdots, \sum_t r_{K,t}) = -\sum\nolimits_{m=1}^M \frac{H}{\sum_{t=1}^H \gamma^{t-1}r_{m,t}},
\end{align}
where $H$ is the length of the episode set to $500$ steps.

For our queuing environment, we consider a server serving customers coming from $M$ queues. Each queue follows Poisson arrivals with different arrival rates given in Table \ref{tab:arrival_rates}. The server has access to the length of the queues. On observing the length of the queue, the server selects a queue to process. If the a customer from a queue is served, the queue gets a reward of $1$ unit.

\begin{table}[ht]
	\centering
	\begin{tabular}{|c|c|c|c|c|c|c|c|c|}
		\hline
		$M$&$\lambda_1$ &$\lambda_2$ & $\lambda_3$& $\lambda_4$&$\lambda_5$ &$\lambda_6$ & $\lambda_7$& $\lambda_8$  \\
		\hline
		$2$&$0.16$ & $0.64$ & $-$ & $-$& $-$ & $-$& $-$ & $-$\\
		\hline
		$4$&$0.08$ & $0.16$ & $0.24$ & $0.32$& $-$ & $-$& $-$ & $-$\\                
		\hline
		$8$&$0.0125$ & $0.0375$ & $0.0625$ & $0.0875$&$0.1125$ & $0.1375$ & $0.1625$ & $0.1875$\\
		\hline
	\end{tabular}
	\caption{Arrival rates of the multiple queues for Queuing system environment}
	\label{tab:arrival_rates}
\end{table}

\subsection{Simulation Setup}
We use softmax parameterization for implementing our policies. Further, we use PyTorch version 1.0.1 to implement the policies and perform gradient ascent. The experiments are run on a machine with Intel i9 processor with $36$ logical cores running at $3.00$ GHz each. The machines are equipped with Nvidia GeForce RTX 2080 GPU. Each of the $10$ independent runs for both environment took about $500$ seconds to finish. For the gradient ascent of objective, we used PyTorch's Adam \cite{kingma2015adam} optimization with learning rate of $0.005$. Finally, we use the value of $\gamma$ to be $0.9999$.

\subsection{Simulation Results}
We study the impact of the number of trajectories used for gradient estimation. We keep the number of trajectories $N_1 = N_2 = N$ and vary $N$ from $ 4, 16, 64$, and $256$. We also vary the number of objectives $M$ as $2, 4$, and $8$. We observe the convergence rates for a softmax policy parameterization. We also compare our algorithm with a policy gradient algorithm which trains the actor using reward function $r_{train}(t)$ at each time $t$ defined as,
\begin{align}
r_{train}(t) = -\sum_{m=1}^M\frac{t}{\sum_{\tau = 1}^t \gamma^{\tau-1}r_{k,\tau}}.
\end{align}
To implement the policy gradient, we use the REINFORCE algorithm  \cite{williams1992simple}.

We plot the behavior of the policy gradient for joint optimization for different values of $N$ in Figure \ref{fig:convergence_plots}. We  run $10$ independent iterations and plot the mean in solid lines and the shaded region is $\pm$ standard deviation. In Figure \ref{fig:convergence_plots}, for all values of $M$, we find that increasing $N$, the number of trajectories used for sampling gradient of the function, leads to faster convergence of the joint reward objective. We note that the objective value are in different scales, and hence we cannot directly compare the objective values for different $M$.

For $M=2$ (Figure \ref{fig:latency_M_2}), we note that the performance of $N=256, 64$, and $N=16$ are almost similar; but as compared to $N=4$, the performance is significantly better. When $M$ is increased to $4$ (in Figure \ref{fig:latency_M_4}), we observe that $N=256$ and $N=64$ are similar and $N=256$ performs only marginally better as compared to $N=64$. However, now $N=16$ does not perform as well as $N=256$ and $N=64$ but the algorithm is still able to converge to the optimal policy with $N=16$. Finally, for $M=8$, we note that $N=256$ again performs better than $N=64$ and $N=16$ with a lesser variance in the performance. However, for $M=8$, the algorithm with $N=16$ is not able to converge to the optimal policy. We infer that for joint optimization of multiple objectives, it is  necessary to increase the number of trajectories as the number of objectives increase.


\begin{figure}[htbp]
	\centering
	\subfigure[$M = 2$]{
		\includegraphics[trim=0.15in 0.05in 0.5in 0.45in, clip, width=0.47\textwidth]{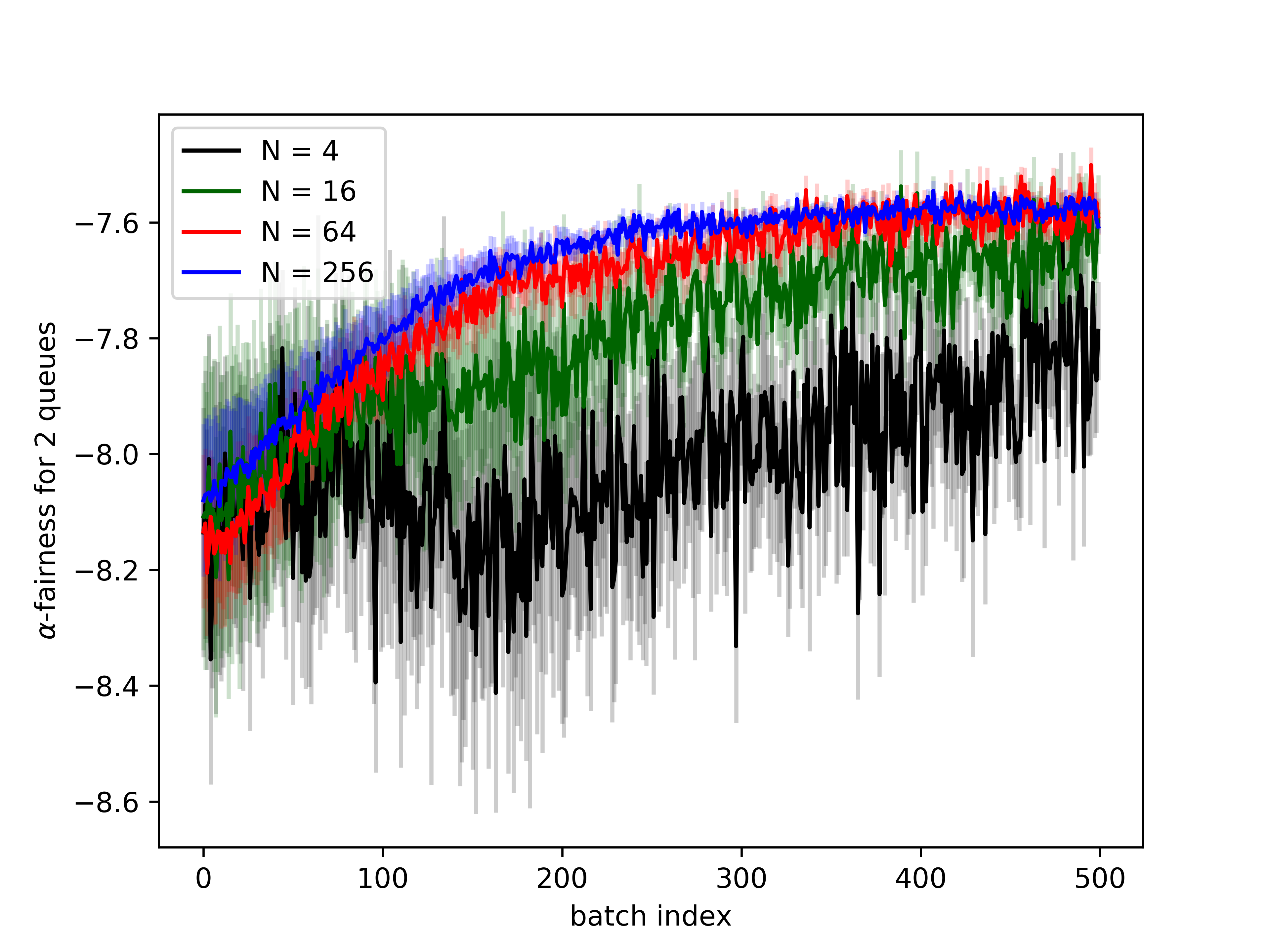}
		\label{fig:latency_M_2}
	}\hspace{-0.1in}
	\subfigure[$M = 4$]{
		\includegraphics[trim=0in 0.05in 0.5in 0.45in, clip, width=0.47\textwidth]{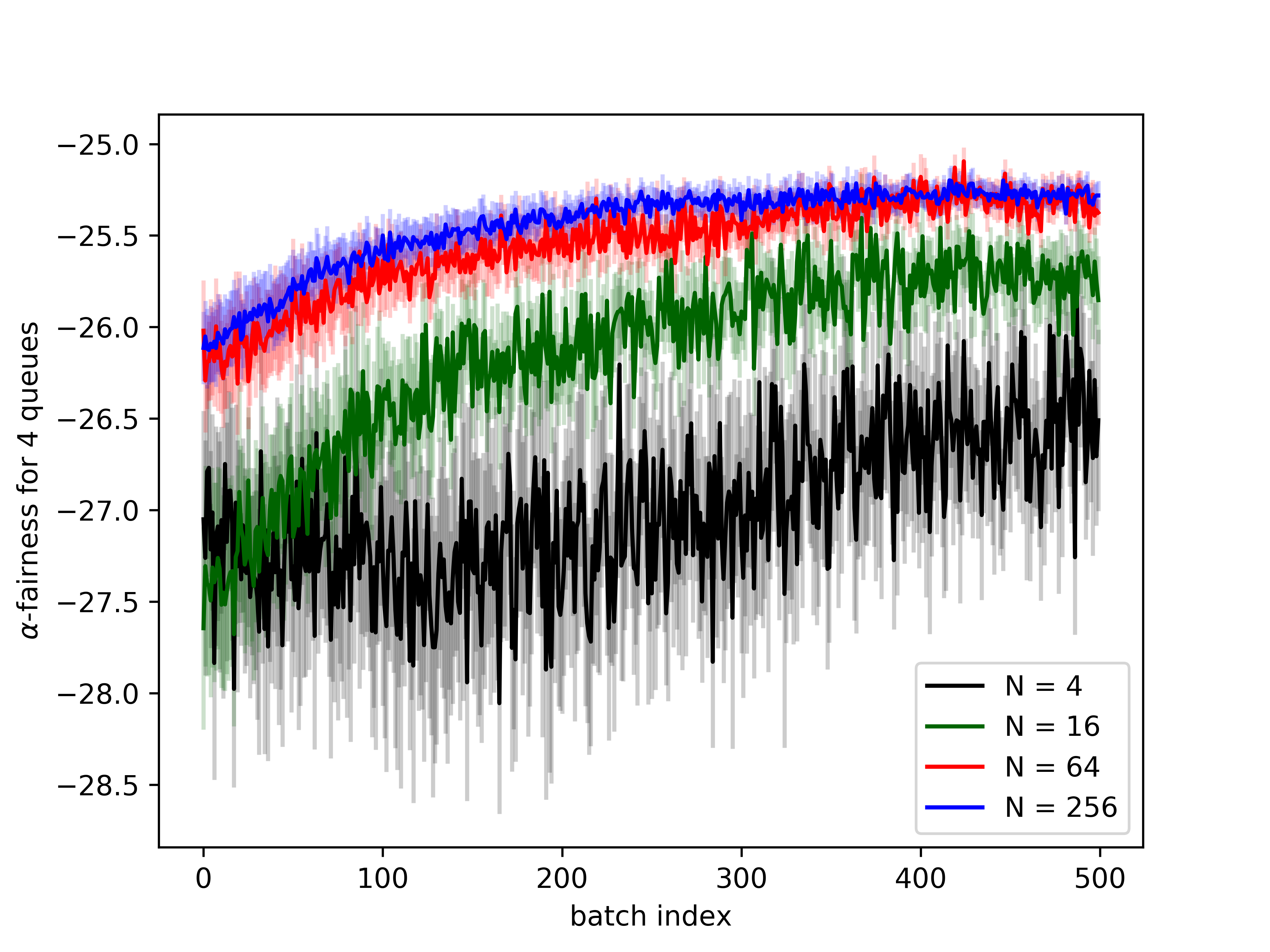}
		\label{fig:latency_M_4}
	}
	\subfigure[$M = 8$]{
		\includegraphics[trim=0in 0.05in 0.5in 0.45in, clip, width=0.47\textwidth]{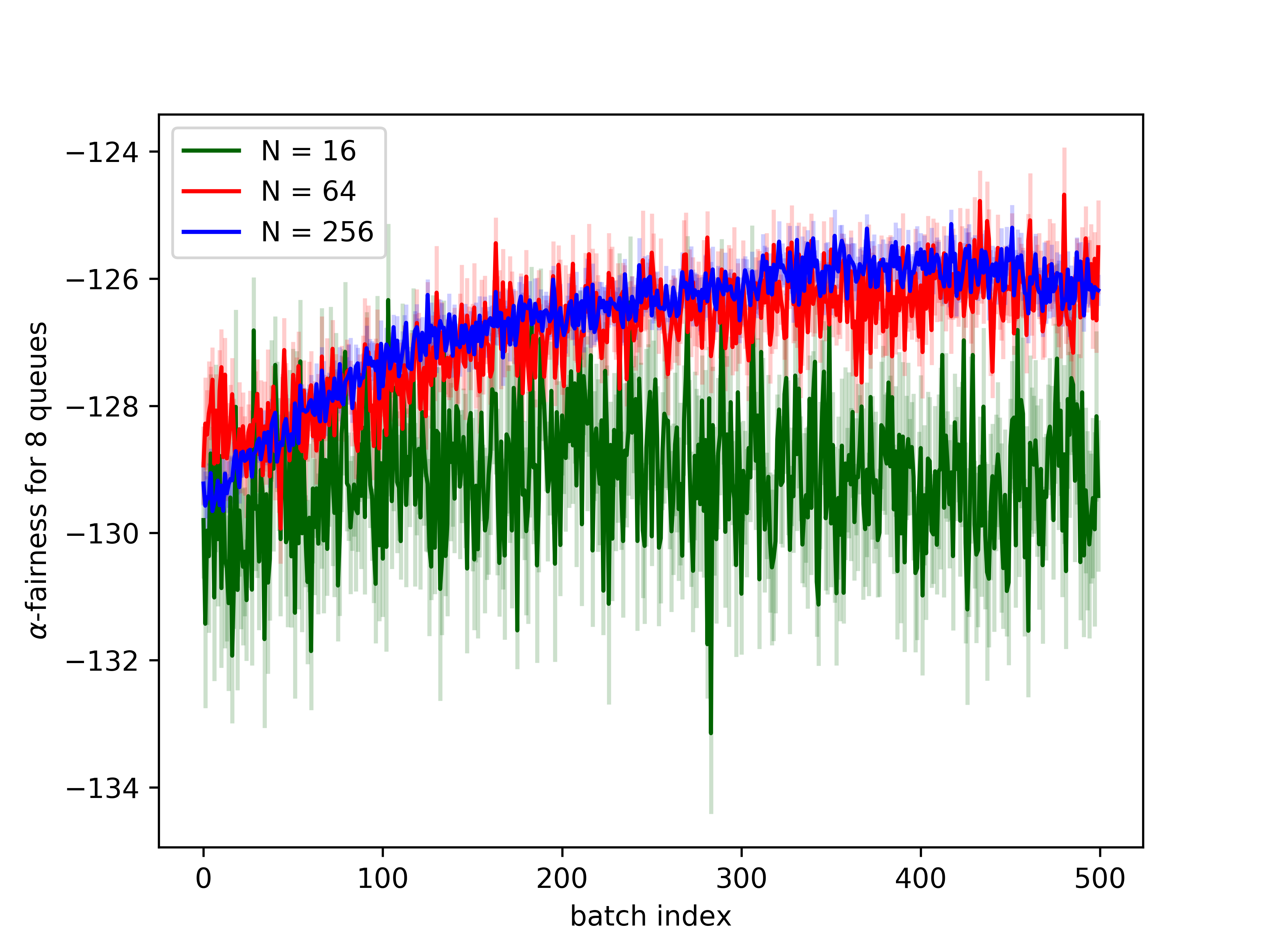}
		\label{fig:latency_M_8}
	}	
	\caption{ Convergence plots for the joint objective policy gradient algorithms for increasing number of queues $M$. As the number of trajectories $N$ used for sampling gradient of the function increase, the convergence becomes steeper. Further, as the number of queues $M$ increase, number of trajectories $N$	is also required to increase to achieve similar performance of levels.}
	\label{fig:convergence_plots}
\end{figure}

We now compare the performance of our proposed algorithm with the REINFORCE algorithm. We present the results in Figure \ref{fig:comparison_with_reinforce}, where we compare the REINFORCE algorithm with varying values for $N$ to compute gradient estimate. We note that the REINFORCE algorithm does not learn a policy which maximizes the objective because the reward at each time step does not provide correct gradient estimate. The performance of the REINFORCE gradient estimate improves with increase in the number of trajectories $N$, but for same number of trajectories, the proposed Algorithm \ref{alg:PG} performs significantly better.  Based on the comparisons, we infer that  using the proposed gradient estimator enables learning optimal policy which maximizes the function $f$. 

\begin{figure}[htbp]
	\centering
	\subfigure[$M = 2$]{
		\includegraphics[trim=0.15in 0.05in 0.5in 0.45in, clip, width=0.47\textwidth]{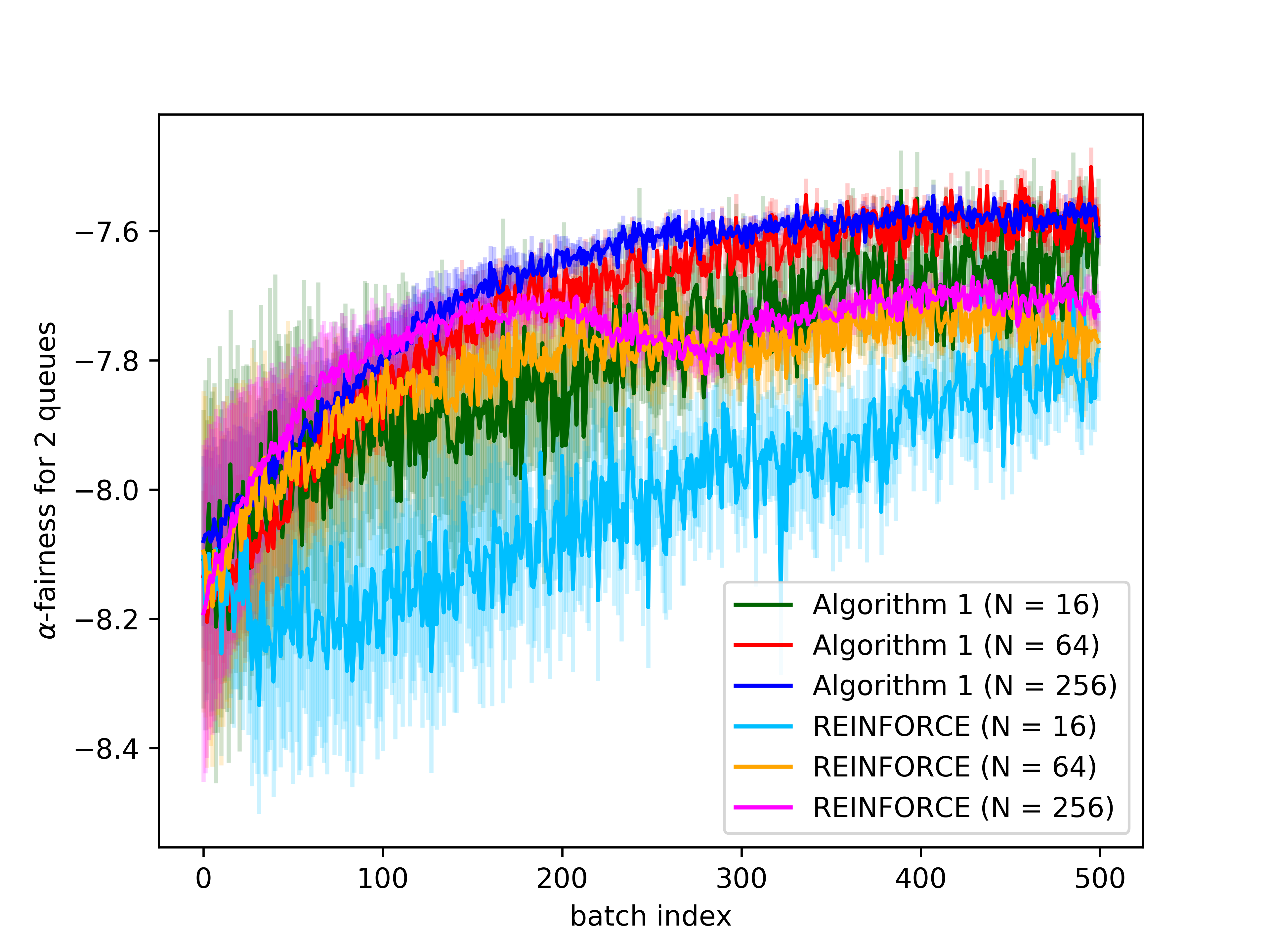}
		\label{fig:reinforce_M_2}
	}\hspace{-0.1in}
	\subfigure[$M = 4$]{
		\includegraphics[trim=0in 0.05in 0.5in 0.45in, clip, width=0.47\textwidth]{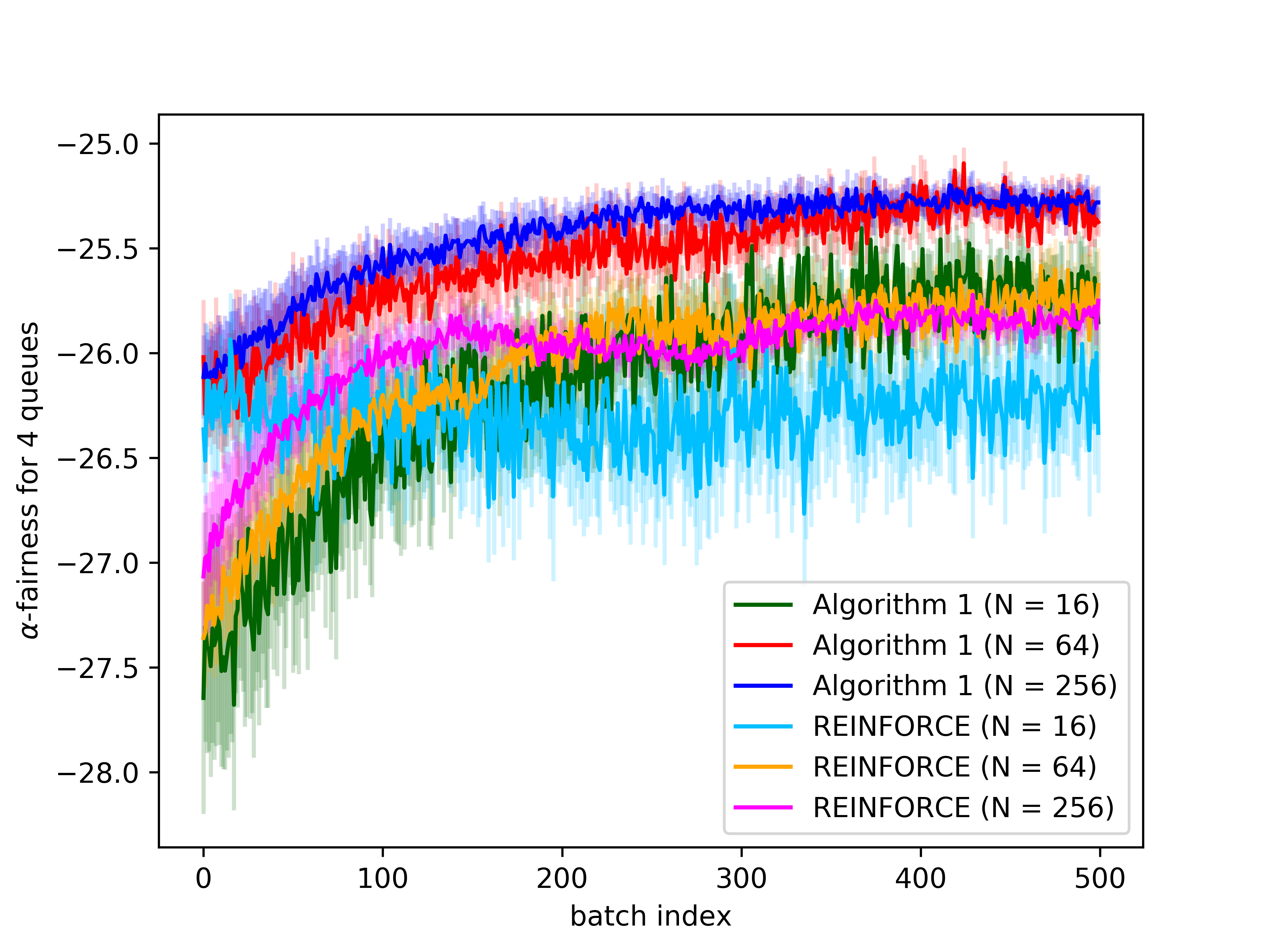}
		\label{fig:reinforce_M_4}
	}
	\subfigure[$M = 8$]{
		\includegraphics[trim=0in 0.05in 0.5in 0.45in, clip, width=0.47\textwidth]{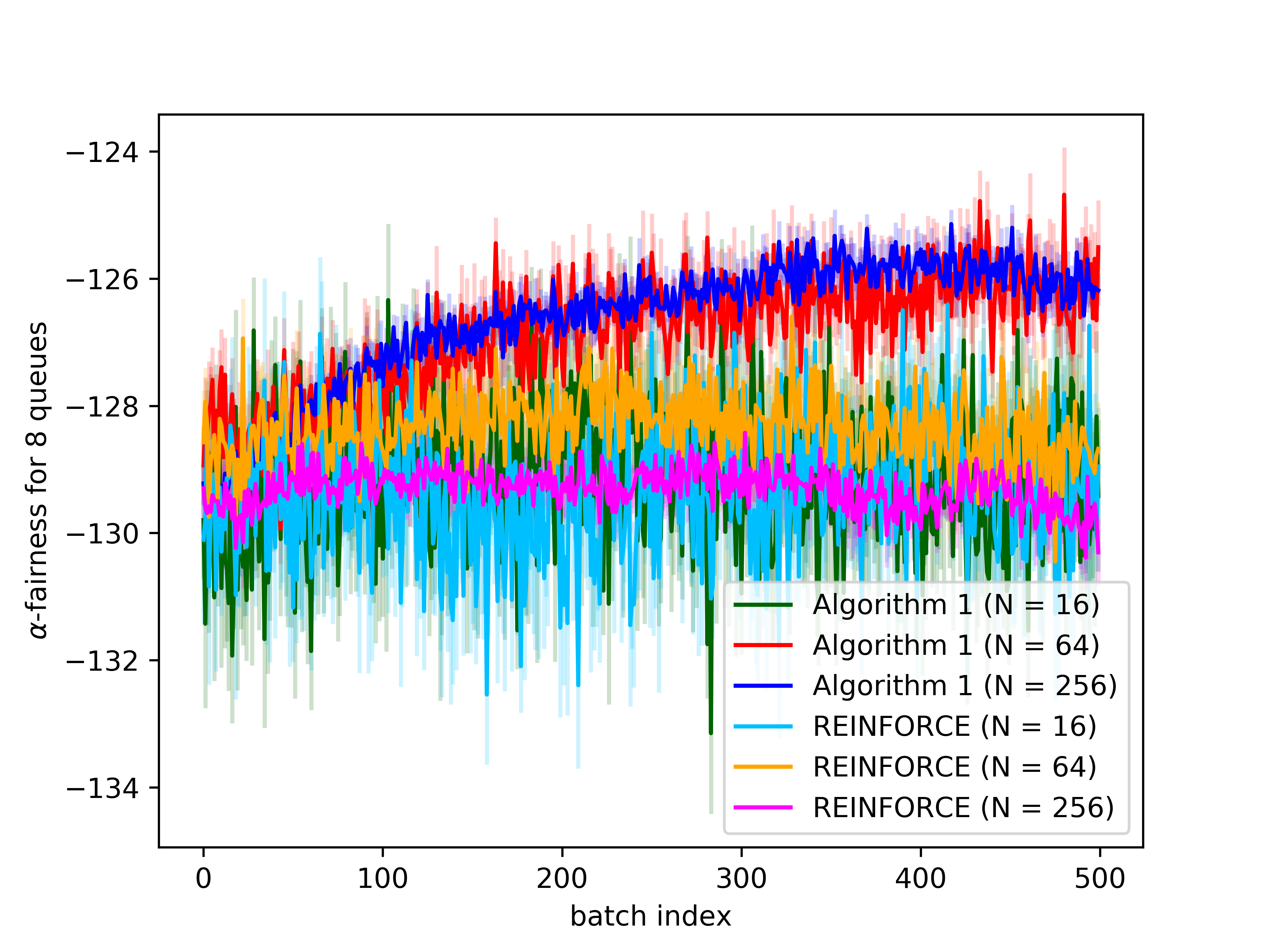}
		\label{fig:reinforce_M_8}
	}	
	\caption{ Comparison plots for the joint objective policy gradient algorithms and the REINFORCE algorithm for increasing number of queues $M$ and varied number of samples $N$ to compute gradient estimates. The REINFORCE algorithm is able to learn policies which improves the function value, but it does not achieves the policies as good as policies which our algorithm learns.}
	\label{fig:comparison_with_reinforce}
\end{figure}

    	\section{Conclusion}\label{sec_conclusion}
In this paper, we formulate a problem which  optimizes a general concave function of multiple objectives. We propose a policy-gradient based approach for the problem, where an estimator for the gradient is used.  We analyze the bias of the policy gradient estimator and show the global convergence result with a vanilla policy gradient algorithm. However, there are several limitations in the paper. Firstly, the local Lipschitz assumption for partial derivative of $f$ is a bit strong. Secondly, the convergence rates w.r.t the number of objective is $\mathcal{O}(M^4)$, while a model-based algorithm \cite{mridulmb} can achieve $\mathcal{O}(M^2)$. Further, extension of the proposed approach to evaluate the convergence rate guarantees of the Natural Policy Gradient and the variance reduced algorithms is an important future direction to reduce the sample complexity. Finally, the analysis of concave function of objectives with constraints is open for parametrized model-free setup, while has been studied for model-based setup \cite{agarwal2021concave}, for model-free tabular setup \cite{DBLP:journals/corr/abs-2109-06332}, and for parametrized model-free setup with linear function of objectives \cite{bai2022achieving}.


	\appendix	
	\onecolumn
\section{Symbol Summary}
\begin{table*}[htbp]
	{
		\centering
			\begin{tabular}{|c|c|c|}
				\hline
				Symbol & Definition & Reference  \\
				\hline
				$\mathcal{S},\mathcal{A}$  & State and Action space & Section \ref{sec_formulation}  \\
				\hline
				$\mathbb{P}$  & transition dynamics & Section \ref{sec_formulation}  \\
				\hline
				$r_m$  & reward function for $m^{th}$ objective & Section \ref{sec_formulation}  \\
				\hline
				$M$  & Number of objectives  & Section \ref{sec_formulation}  \\
				\hline
				$\gamma$  & discounted factor &  Section \ref{sec_formulation} \\
				\hline
				$\rho$  & distribution for initial state & Section \ref{sec_formulation}  \\
				\hline
				$J_m^{\pi}$  & Expected value function for $m^{th}$ objective  & Eq. \eqref{eq:expected_V}  \\
				\hline
				$V_m^{\pi}(s)$  & State value function for $m^{th}$ objective &  Eq. \eqref{eq:VandQ} \\
				\hline
				$Q_m^{\pi}(s,a)$  & State-action value function for $m^{th}$ objective &  Eq. \eqref{eq:VandQ} \\
				\hline
				$A_m^{\pi}(s,a)$  & Advantage function for $m^{th}$ objective  & Lemma \ref{lem_performance_diff} \\
				\hline
				$G$  & Lipschitz constant for log-likelihood function & Assumption \ref{ass_score}  \\
				\hline
				$B$  & smooth constant for log-likelihood function & Assumption \ref{ass_score}  \\
				\hline
				$L_f$ & Lipschitz constant for partial derivatives of function $f$ & Assumption \ref{ass_partial_grad}  \\
				\hline
				$C$ & Bound on partial derivatives of function $f$ & Lemma \ref{lem_bound_parital}  \\
				\hline
				$H$ & truncation on proposed estimator & Section \ref{sec_estimator}  \\
				\hline
				$L_J$ & smooth constant for objective function & Lemma \ref{lem_smooth}  \\
				\hline
				$\sigma^2$ & bound on variance of auxiliary estimator & Assumption \ref{ass_bounded_var}  \\
				\hline
				$\mu_F$ & positive definitive constant for Fisher information matrix & Assumption \ref{ass_pd}  \\
				\hline
				$\epsilon_{bias}$ & bias of transferred function approximation error & Assumption \ref{ass_transfer_error}  \\
				\hline
				$\eta$ & learning rate of policy gradient & Algorithm \ref{alg:PG} \\
				\hline
				$N_1,N_2$ & Number of samples for estimator & Algorithm \ref{alg:PG} \\
				\hline
				$K$ & number of iterations of policy gradient & Algorithm \ref{alg:PG} \\
				\hline
			\end{tabular}
		\caption{{ Overview of symbols defined in the paper}}
		\label{tab:symbol}
	}
\end{table*}
\section{Computation of the Gradient of Objective }\label{app_grad_computation}
\begin{equation}
	\nabla_{\theta}f(\bm{J}^{\pi_\theta})=\sum_{m=1}^{M}\frac{\partial f(\bm{J}^{\pi_\theta})}{\partial J_m^{\pi_\theta}}\nabla_\theta J_m^{\pi_\theta}
\end{equation}
Define $\tau=(s_0,a_1,s_1,a_1,s_2,a_2\cdots)$ as a trajectory, whose distribution induced by policy $\pi_\theta$ is $p(\tau\vert \theta)$ that can be expressed as
\begin{equation}\label{eq:traj_dis}
	p(\tau\vert \theta)=\rho(s_0)\prod_{t=0}^{\infty}\pi_\theta(a_t\vert s_t)P(s_{t+1}\vert s_t,a_t)
\end{equation}
Define $R_m(\tau)=\sum_{t=0}^{\infty}\gamma^tr_m(s_t,a_t)$ as the cumulative reward for $m^{th}$ objective following the trajectory $\tau$. Then, the expected return $J_m^\pi(\theta)$ can also be expressed as
\begin{equation*}
	J_m^{\pi_\theta}=\mathbf{E}_{\tau\sim p(\tau\vert\theta)}[R_m(\tau)]
\end{equation*}
and the gradient can be calculated as
\begin{equation*}
	\begin{aligned}
	\nabla_\theta J_m^\pi(\theta)&=\int_\tau R_m(\tau)p(\tau\vert \theta)d\tau =\int_{\tau}R_m(\tau)\frac{\nabla_\theta p(\tau\vert \theta)}{p(\tau\vert \theta)}p(\tau\vert \theta)d\tau\\
	&=\mathbf{E}_{\tau\sim p(\tau\vert \theta)}\big[\nabla_\theta \log p(\tau\vert\theta)R_m(\tau)\big]
	\end{aligned}
\end{equation*}
Notice that $\nabla_\theta \log p(\tau\vert \theta)$ is independent of the transition dynamics and thus 
\begin{equation}
	\nabla_\theta f(\bm{J}^{\pi_\theta})=\mathbf{E}_{\tau\sim p(\tau\vert \theta)}\bigg[\bigg(\sum_{t=0}^\infty \nabla_\theta \log \pi_\theta(a_t\vert s_t)\bigg)\bigg(\sum_{m=1}^{M}\frac{\partial f}{\partial J_m^\pi}\big(\sum_{t=0}^\infty \gamma^tr_m(s_t,a_t)\big)\bigg)\bigg]
\end{equation}
\section{Proof of Equivalence between PGT and REINFORCE Estimator in Lemma \ref{lem_grad_simplified}}\label{sec_app_equivlence}
\begin{proof}
	Notice that the difference between PGT and REINFORCE can be expressed as below
	\begin{equation}
	\sum_{t=0}^H \nabla_\theta \log(\pi_\theta(a_t^i\vert s_t^i))\bigg(\sum_{m=1}^{M}\big(\frac{\partial f}{\partial J_{m}^\pi}\bigg|_{J_m^\pi=\hat{J}_{m,H}^{\pi}}\big)\big(\sum_{h=0}^{t-1} \gamma^hr_m(s_h^i,a_h^i)\big)\bigg)
	\end{equation}
	Thus, it is sufficient to show the expectation of above equation is $\bm{0}$. Divide the trajectory $\tau_i$ into two parts $\tau_{i,1}=(s_0^i,a_0^i,\cdots,s_{t-1}^i,a_{t-1}^i)$ and $\tau_{i,2}=(s_t^i,a_t^i,\cdots)$. Then,
	\begin{equation}
	\begin{aligned}
	&\mathbf{E}_{\tau_i,\tau_j\sim p(\tau\vert \theta)}\bigg[\sum_{t=0}^H \nabla_\theta \log(\pi_\theta(a_t^i\vert s_t^i))\bigg(\sum_{m=1}^{M}\big(\frac{\partial f}{\partial J_{m}^\pi}\bigg|_{J_m^\pi=\hat{J}_{m,H}^{\pi}}\big)\big(\sum_{h=0}^{t-1} \gamma^hr_m(s_h^i,a_h^i)\big)\bigg)\bigg]\\
	&=\sum_{t=0}^H \mathbf{E}_{\tau_{i,1}}\bigg\{\mathbf{E}_{\tau_{i,2}}\bigg[\nabla_\theta \log(\pi_\theta(a_t^i\vert s_t^i))\bigg(\sum_{m=1}^{M}\mathbf{E}_{\tau_j}\big(\frac{\partial f}{\partial J_{m}^\pi}\bigg|_{J_m^\pi=\hat{J}_{m,H}^{\pi}}\big)\big(\sum_{h=0}^{t-1} \gamma^hr_m(s_h^i,a_h^i)\big)\bigg)\bigg]\bigg|\tau_{i,1}\bigg\}\\
	&=\sum_{t=0}^H \mathbf{E}_{\tau_{i,1}}\bigg\{\mathbf{E}_{\tau_{i,2}}\bigg[\nabla_\theta \log(\pi_\theta(a_t^i\vert s_t^i))\bigg]\bigg(\sum_{m=1}^{M}\mathbf{E}_{\tau_j}\big(\frac{\partial f}{\partial J_{m}^\pi}\bigg|_{J_m^\pi=\hat{J}_{m,H}^{\pi}}\big)\big(\sum_{h=0}^{t-1} \gamma^hr_m(s_h^i,a_h^i)\big)\bigg)\bigg|\tau_{i,1}\bigg\}\\
	&=\sum_{t=0}^H \mathbf{E}_{\tau_{i,1}}\bigg\{\bm{0}\cdot\bigg(\sum_{m=1}^{M}\mathbf{E}_{\tau_j}\big(\frac{\partial f}{\partial J_{m}^\pi}\bigg|_{J_m^\pi=\hat{J}_{m,H}^{\pi}}\big)\big(\sum_{h=0}^{t-1} \gamma^hr_m(s_h^i,a_h^i)\big)\bigg)\bigg|\tau_{i,1}\bigg\}=\bm{0}
	\end{aligned}
	\end{equation}
	where the first step holds because $\tau_i,\tau_j$ are dependent and the law of total expectation. The second equality holds because the summation of reward is a constant conditioned on $\tau_{i,1}$. The last step is true because 
	\begin{eqnarray}
	\mathbf{E}_{\tau_{i,2}}\bigg[\nabla_\theta \log(\pi_\theta(a_t^i\vert s_t^i))\bigg]&=&\mathbf{E}_{s_t^i}\bigg[\int_{\mathcal{A}}\nabla_\theta \log(\pi_\theta(a_t^i\vert s_t^i))\pi_\theta(a_t^i|s_t^i)da\bigg]\\
	&=&\mathbf{E}_{s_t^i}\bigg[\int_{\mathcal{A}}\nabla_\theta \pi_\theta(a_t^i|s_t^i)da\bigg]=\mathbf{E}_{s_t^i}[\nabla_\theta\mathbf{1}]=\bm{0}
	\end{eqnarray}
\end{proof}

\section{Proof for the Bias of Estimator in Eq. \eqref{eq:est_origin}}\label{sec_app_bias}
\begin{lemma}\label{lem_biased_est}
	{In general, the proposed estimator, Eq. \eqref{eq:est_origin}, is biased w.r.t. $\nabla_\theta f(\bm{J}^{\pi_\theta})$. The only exception is when the partial derivatives $\frac{\partial f}{\partial J_m^{\pi}}$ are linear w.r.t. each variable $J_n^{\pi}$ for all  $m,n\in[M]$.}
\end{lemma}
\begin{proof}
	By the law of total expectation
	\begin{equation}
		\begin{aligned}
			\mathbf{E}_{\tau_i,\tau_{j=1:N_2}}[g(\tau_i\vert \theta)]	&=\mathbf{E}_{\tau_i,\tau_{j=1:N_2}}\bigg[\sum_{t=0}^\infty \nabla_\theta \log\pi_\theta(a_t^i\vert s_t^i)\bigg(\sum_{m=1}^{M}\big(\frac{\partial f}{\partial J_m^\pi}\bigg|_{J_m^\pi=\hat{J}_{m}^{\pi}}\big)\big(\sum_{h=t}^\infty \gamma^hr_m(s_h^i,a_h^i)\big)\bigg)\bigg]\\
			&=\mathbf{E}_{\tau_i}\bigg\{\mathbf{E}_{\tau_{j=1:N_2}}\bigg[\sum_{t=0}^\infty \nabla_\theta \log\pi_\theta(a_t^i\vert s_t^i)\bigg(\sum_{m=1}^{M}\big(\frac{\partial f}{\partial J_m^\pi}\bigg|_{J_m^\pi=\hat{J}_{m}^{\pi}}\big)\big(\sum_{h=t}^\infty \gamma^hr_m(s_h^i,a_h^i)\big)\bigg)\bigg]\bigg|\tau_i\bigg\}\\
			&=\mathbf{E}_{\tau_i}\bigg\{\sum_{t=0}^\infty \nabla_\theta \log\pi_\theta(a_t^i\vert s_t^i)\bigg(\sum_{m=1}^{M}\mathbf{E}_{\tau_{j=1:N_2}}\bigg[\frac{\partial f}{\partial J_m^\pi}\bigg|_{J_m^\pi=\hat{J}_{m}^{\pi}}\bigg]\big(\sum_{h=t}^\infty \gamma^hr_m(s_h^i,a_h^i)\big)\bigg)\bigg|\tau_i\bigg\}\\
			&\overset{(*)}\neq\mathbf{E}_{\tau_i}\bigg\{\sum_{t=0}^\infty \nabla_\theta \log\pi_\theta(a_t^i\vert s_t^i)\bigg(\sum_{m=1}^{M}\frac{\partial f}{\partial J_m^\pi}\big(\sum_{h=t}^\infty \gamma^hr_m(s_h^i,a_h^i)\big)\bigg)\bigg\}\\
			&=\nabla_\theta f(J_1^\pi(s),J_2^\pi(s),\cdots,J_M^\pi(s))
		\end{aligned}
	\end{equation}
	Notice that the key step (*) holds because
	\begin{equation}
		\begin{aligned}
		\mathbf{E}_{\tau_{j=1:N_2}}\bigg[\frac{\partial f}{\partial J_m^\pi}\bigg|_{J_m^\pi=\hat{J}_{m}^{\pi}}\bigg]&=\mathbf{E}_{\tau_{j=1:N_2}}\bigg[\frac{\partial f}{\partial J_m^\pi}(\frac{1}{N_2}\sum_{j=1}^{N_2}\sum_{t=0}^{\infty}\gamma^tr_1(s_t^j,a_t^j),\cdots, \frac{1}{N_2}\sum_{j=1}^{N_2}\sum_{t=0}^{\infty}\gamma^tr_M(s_t^j,a_t^j))\bigg]\\
		&\neq \frac{\partial f}{\partial J_m^\pi}(\mathbf{E}_{\tau_{j=1:N_2}}\bigg[\frac{1}{N_2}\sum_{j=1}^{N_2}\sum_{t=0}^{\infty}\gamma^tr_1(s_t^j,a_t^j)\bigg],\cdots,\mathbf{E}_{\tau_{j=1:N_2}}\bigg[\frac{1}{N_2}\sum_{j=1}^{N_2}\sum_{t=0}^{\infty}\gamma^tr_M(s_t^j,a_t^j)\bigg])\\
		&=\frac{\partial f}{\partial J_m^\pi}(J_1^\pi,\cdots,J_M^\pi)
		\end{aligned}
	\end{equation}
	Eq. 39 cannot hold with an equality except when the partial derivatives are linear. However, this doesn't hold for any general concave function.
\end{proof}

\section{Bounding the Bias for the Proposed Estimator}\label{sec_app_bound_bias}
\subsection{Proof for Lemma \ref{lem_bound_bias1}}
\begin{proof}
	By the triangle inequality, Assumptions \ref{ass_bound_reward} and \ref{ass_score}, we have
	\begin{equation}\label{eq:bound_bias1}
		\begin{split}
		\Vert g(\tau_i^H,\tau_j^H\vert \theta)-\tilde{g}(\tau_i^H,\tau_j^H\vert \theta)\Vert&=\bigg\Vert\sum_{t=0}^{H-1} \nabla_\theta \log\pi_\theta(a_t^i\vert s_t^i)\bigg(\sum_{m=1}^{M}\big(\frac{\partial f}{\partial J_m^\pi}\bigg|_{J_m^\pi=\hat{J}_{m,H}^\pi}-\frac{\partial f}{\partial J_m^\pi}\bigg|_{J_m^\pi=J_{m,H}^\pi}\big)\\
		&\big(\sum_{h=t}^{H-1} \gamma^hr_m(s_h^i,a_h^i)\big)\bigg)\bigg\Vert\\
		&\leq\frac{G}{1-\gamma}\bigg|\sum_{t=0}^{H-1}(\gamma^t-\gamma^H) \bigg(\sum_{m=1}^{M}\big(\frac{\partial f}{\partial J_m^\pi}\bigg|_{J_m^\pi=\hat{J}_{m,H}^\pi}-\frac{\partial f}{\partial J_m^\pi}\bigg|_{J_m^\pi=J_{m,H}^\pi}\big)\bigg)\bigg|\\
		&\leq G\frac{1-\gamma^H-H\gamma^H(1-\gamma)}{(1-\gamma)^2} \sum_{m=1}^{M}\bigg|\frac{\partial f(\hat{J}_{m,H}^\pi)}{\partial J_m^\pi}-\frac{\partial f(J_{m,H}^\pi)}{\partial J_m^\pi}\bigg|\\
		&\leq GML_f\frac{1-\gamma^H-H\gamma^H(1-\gamma)}{(1-\gamma)^2} \Vert \hat{\bm{J}}_H^\pi-\bm{J}_H^\pi\Vert\\
		\end{split}
	\end{equation}
	where the last step follows from Assumption \ref{ass_partial_grad}. Moreover, an entry  in the difference $\hat{\bm{J}}_H^\pi-\bm{J}_H^\pi$ can be bounded as 
	\begin{equation}\label{eq:bound_JmH}
	\begin{split}
	\vert \hat{J}_{m,H}^\pi-J_{m,H}^\pi\vert&=\bigg|\frac{1}{N_2}\sum_{j=1}^{N_2}\sum_{t=0}^{H-1}\gamma^tr_m(s_t,a_t)-\mathbf{E}\big[\sum_{t=0}^{H-1}\gamma^tr_m(s_t,a_t)\big]\bigg|\\
	&\leq \sum_{t=0}^{H-1}\gamma^t\bigg|\frac{1}{N_2}\sum_{j=1}^{N_2}r_m(s_t,a_t)-\mathbf{E}[r_m(s_t,a_t)]\bigg|
	\end{split}
	\end{equation}
	By Hoeffding Lemma, if we have $N_2\geq \frac{M(1-\gamma^H)^2}{2(1-\gamma)^2\epsilon'^2}\log(\frac{2MH}{p})$, then
	\begin{equation}
	\begin{split}
	P\bigg(\bigg|\frac{1}{N_2}\sum_{j=1}^{N_2}r_m(s_t,a_t)-\mathbf{E}[r_m(s_t,a_t)]\bigg|\geq \frac{(1-\gamma)\epsilon'}{(1-\gamma^H)\sqrt{M}}\bigg)&\leq 2\exp(-\frac{2N_2^2\frac{(1-\gamma)^2\epsilon'^2}{(1-\gamma^H)^2 M}}{\sum_{j=1}^{N_2}(1-0)^2})\leq\frac{p}{MH}
	\end{split}
	\end{equation}
	Finally, by using an union bound, with probability at least $1-p$, we have
	\begin{equation}\label{eq:hoeffding_rm}
	\bigg|\frac{1}{N_2}\sum_{j=1}^{N_2}r_m(s_t,a_t)-\mathbf{E}[r_m(s_t,a_t)]\bigg|\leq \frac{(1-\gamma)\epsilon'}{(1-\gamma^H)\sqrt{M}}\quad \forall m\in[M],\forall t\in[0,H-1]
	\end{equation}
	Substituting Eq. \eqref{eq:hoeffding_rm} back into \eqref{eq:bound_JmH}, we have $\vert \hat{J}_{m,H}^\pi-J_{m,H}^\pi\vert\leq\frac{\epsilon'}{\sqrt{M}}$ and thus $\Vert \bm{J}_H^\pi-\hat{\bm{J}}_H^\pi\Vert_2\leq \epsilon'$, which gives the result in the statement of the Lemma.	
\end{proof}

\subsection{Proof for Lemma \ref{lem_bound_bias2}}
\begin{proof}
	Similar to Eq. \eqref{eq:bound_bias1}, we have
	\begin{equation}\label{eq:bound_bias2}
	\Vert \tilde{g}(\tau_i^H,\tau_j^H\vert \theta)-\tilde{g}(\tau_i^H,\tau_j\vert \theta)\Vert\leq GML_f\frac{1-\gamma^H-H\gamma^H(1-\gamma)}{(1-\gamma)^2} \Vert \bm{J}_H^\pi-\bm{J}^\pi\Vert\\
	\end{equation}
	By triangle inequality, the element of $\bm{J}_H^\pi-\bm{J}^\pi$ can be bounded by
	\begin{equation}\label{eq:bound_Jm}
	\begin{split}
	\vert J_{m,H}^\pi-J_m^\pi\vert&\leq \bigg|\mathbf{E}\big[\sum_{t=0}^{\infty}\gamma^tr_m(s_t,a_t)\big]-\mathbf{E}\big[\sum_{t=0}^{H-1}\gamma^tr_m(s_t,a_t)\big]\bigg|\\
	&\leq\sum_{t=H}^{\infty}\gamma^t\bigg|\mathbf{E}[r_m(s_t,a_t)]\bigg|\leq \frac{\gamma^H}{1-\gamma}
	\end{split}
	\end{equation}
	where the last step holds by Assumption \ref{ass_bound_reward}. Substituting Eq \eqref{eq:bound_Jm} back into \eqref{eq:bound_bias2} gives the result in the statement of the Lemma.
\end{proof}

\subsection{Proof for Lemma \ref{lem_bound_bias3}}
\begin{proof}
	By the triangle inequality,
	\begin{equation}
	\begin{aligned}
	&\Vert \tilde{g}(\tau_i^H,\tau_j\vert \theta)-g(\tau_i,\tau_j\vert \theta)\Vert\\
	&=\Vert\sum_{t=0}^\infty \nabla_\theta \log\pi_\theta(a_t^i\vert s_t^i)\bigg(\sum_{m=1}^{M}\frac{\partial f}{\partial J_m^\pi}\big(\sum_{h=t}^{\infty} \gamma^hr_m(s_h^i,a_h^i)\big)\bigg)\nonumber\\
	& -\sum_{t=0}^{H-1} \nabla_\theta \log\pi_\theta(a_t^i\vert s_t^i)\bigg(\sum_{m=1}^{M}\frac{\partial f}{\partial J_m^\pi}\big(\sum_{h=t}^{\infty} \gamma^hr_m(s_h^i,a_h^i)\big)\bigg)\\
	&+\sum_{t=0}^{H-1} \nabla_\theta \log\pi_\theta(a_t^i\vert s_t^i)\bigg(\sum_{m=1}^{M}\frac{\partial f}{\partial J_m^\pi}\big(\sum_{h=t}^{\infty} \gamma^hr_m(s_h^i,a_h^i)\big)\bigg)\nonumber\\
	&-\sum_{t=0}^{H-1} \nabla_\theta \log\pi_\theta(a_t^i\vert s_t^i)\bigg(\sum_{m=1}^{M}\frac{\partial f}{\partial J_m^\pi}\big(\sum_{h=t}^{H-1} \gamma^hr_m(s_h^i,a_h^i)\big)\bigg)\Vert\\
	&\leq \Vert\sum_{t=H}^\infty \nabla_\theta \log\pi_\theta(a_t^i\vert s_t^i)\bigg(\sum_{m=1}^{M}\frac{\partial f}{\partial J_m^\pi}\big(\sum_{h=t}^{\infty} \gamma^hr_m(s_h^i,a_h^i)\big)\bigg)\Vert\\
	&+\Vert\sum_{t=0}^{H-1} \nabla_\theta \log\pi_\theta(a_t^i\vert s_t^i)\bigg(\sum_{m=1}^{M}\frac{\partial f}{\partial J_m^\pi}\big(\sum_{h=H}^{\infty} \gamma^hr_m(s_h^i,a_h^i)\big)\bigg)\Vert\\
	&\leq \frac{MGC\gamma^H}{(1-\gamma)^2}+\frac{MGCH\gamma^H}{(1-\gamma)}=MGC\frac{\gamma^H(1+H(1-\gamma))}{(1-\gamma)^2}
	\end{aligned}
	\end{equation}
	where the last inequality holds by  Lemma \ref{lem_bound_parital} and Assumption \ref{ass_score}.
\end{proof}
\section{Proof for Properties of the Objective Function}\label{sec_app_property}
\subsection{Proof for Lemma \ref{lem_smooth}}
\begin{proof}
	In order to show the smoothness, it is sufficient to bound  $\Vert\nabla^2_\theta f(\bm{J}^{\pi_\theta})\Vert$ and $\Vert\nabla^2_\theta f(\bm{J}_H^{\pi_\theta})\Vert$. By  Eq. \eqref{eq:gradient_origin}, we have
	\begin{equation}
	\begin{split}
	\Vert\nabla_\theta^2 f(\bm{J}^{\pi_\theta})\Vert&=\Vert\mathbf{E}_{\tau\sim p(\tau\vert \theta)}\bigg[\sum_{t=0}^{\infty}\nabla_\theta^2\log\pi_\theta(a_t\vert s_t)\bigg(\sum_{m=1}^{M}\frac{\partial f}{\partial J_m^\pi}\big(\sum_{h=t}^{\infty}\gamma^hr_m(s_h,a_h)\big)\bigg)\bigg]\Vert\\
	&\leq \frac{MC}{(1-\gamma)}\sum_{t=0}^{\infty}\gamma^t\Vert\nabla_\theta^2\log\pi_\theta(a_t\vert s_t)\Vert\leq \frac{MCB}{(1-\gamma)^2}
	\end{split}
	\end{equation}
	where the last inequality holds by the Assumption \ref{ass_score}. The smoothness property for the truncated version $f(\bm{J}_H^{\pi_\theta})$ can be proved similarly.
\end{proof}
\subsection{Proof for Lemma \ref{lem_bound_truncated}}
\begin{proof}
	Notice that $\tilde{g}(\tau_i,\tau_j\vert \theta)$ is an unbiased estimator for $\nabla_\theta f(\bm{J}^{\pi_\theta})$. Moreover, $\tilde{g}(\tau_i^H,\tau_j^H)$ is an unbiased estimator for $\nabla_\theta f(\bm{J}_H^{\pi_\theta})$. Thus,
	\begin{equation}
		\begin{split}
		\Vert\nabla_\theta f(\bm{J}^{\pi_\theta})-\nabla_\theta f(\bm{J}_H^{\pi_\theta})\Vert&\overset{(a)}=\Vert\mathbf{E}[\tilde{g}(\tau_i,\tau_j\vert \theta)-\tilde{g}(\tau_i^H,\tau_j^H\vert \theta)]\Vert\leq \mathbf{E}\Vert\tilde{g}(\tau_i,\tau_j\vert \theta)-\tilde{g}(\tau_i^H,\tau_j^H\vert \theta)\Vert\\
		&\overset{(b)}\leq \mathbf{E}\Vert\tilde{g}(\tau_i,\tau_j\vert \theta)-\tilde{g}(\tau_i^H,\tau_j\vert \theta)\Vert+\mathbf{E}\Vert\tilde{g}(\tau_i^H,\tau_j\vert \theta)-\tilde{g}(\tau_i^H,\tau_j^H\vert \theta)\Vert\\
		&\overset{(c)}\leq M^{3/2}GL_f\frac{1-\gamma^H-H\gamma^H(1-\gamma)}{(1-\gamma)^3}\gamma^H+ MGC\frac{\gamma^H[1+H(1-\gamma)]}{(1-\gamma)^2}
		\end{split}
	\end{equation}
	where the step (a) and (b) hold by the triangle inequality. Step (c) holds by the Lemma \ref{lem_bound_bias2} and \ref{lem_bound_bias3}
\end{proof}	

\subsection{Proof for Lemma \ref{lem_performance_diff}}
\begin{proof}
	By the concavity of the function $f$, we have
	\begin{equation}
	\begin{aligned}
	f(\bm{J}^{\pi_\theta})&\leq  f(\bm{J}^{\pi_{\theta'}})+\nabla_{\bm{J}^{\pi_{\theta'}}} f(\bm{J}^{\pi_{\theta'}})^T(\bm{J}^{\pi_\theta}-\bm{J}^{\pi_{\theta'}})\\
	&=f(\bm{J}^{\pi_{\theta'}})+\sum_{m=1}^{M}\frac{\partial f(\bm{J}^{\pi_{\theta'}})}{\partial J_m^{\pi_{\theta'}}}(J_m^{\pi_\theta}-J_m^{\pi_{\theta'}})\\
	&=f(\bm{J}^{\pi_{\theta'}})+\sum_{m=1}^{M}\frac{\partial f(\bm{J}^{\pi_{\theta'}})}{\partial J_m^{\pi_{\theta'}}}\frac{1}{1-\gamma}\mathbf{E}_{s\sim d_\rho^{\pi_\theta}}\mathbf{E}_{a\sim\pi_\theta(\cdot\vert s)}\big[A_m^{\pi_{\theta'}}(s,a)\big]
	\end{aligned}
	\end{equation}
	where the last step comes from the policy gradient theorem \cite{sutton2000} for the standard reinforcement learning. Finally,  we get the desired result by rearranging terms.
\end{proof}
\section{Proof of Lemma \ref{lem_framework}}\label{sec_app_framwork}
\begin{proof}
	Starting with the definition of KL divergence,
	\begin{equation}
	\begin{aligned}
	&\mathbf{E}_{s\sim d_\rho^{\pi^*}}[KL(\pi^*(\cdot\vert s)\Vert\pi_{\theta^k}(\cdot\vert s))-KL(\pi^*(\cdot\vert s)\Vert\pi_{\theta^{k+1}}(\cdot\vert s))]\\
	=&\mathbf{E}_{s\sim d_\rho^{\pi^*}}\mathbf{E}_{a\sim\pi^*(\cdot\vert s)}\bigg[\log\frac{\pi_{\theta^{k+1}(a\vert s)}}{\pi_{\theta^k}(a\vert s)}\bigg]\\
	\overset{(a)}\geq&\mathbf{E}_{s\sim d_\rho^{\pi^*}}\mathbf{E}_{a\sim\pi^*(\cdot\vert s)}[\nabla_\theta\log\pi_{\theta^k}(a\vert s)\cdot(\theta^{k+1}-\theta^k)]-\frac{B}{2}\Vert\theta^{k+1}-\theta^k\Vert^2\\
	=&\eta\mathbf{E}_{s\sim d_\rho^{\pi^*}}\mathbf{E}_{a\sim\pi^*(\cdot\vert s)}[\nabla_\theta\log\pi_{\theta^k}(a\vert s)\cdot\omega^k]-\frac{B\eta^2}{2}\Vert\omega^k\Vert^2\\
	=&\eta\mathbf{E}_{s\sim d_\rho^{\pi^*}}\mathbf{E}_{a\sim\pi^*(\cdot\vert s)}[\nabla_\theta\log\pi_{\theta^k}(a\vert s)\cdot\omega_*^k]+\eta\mathbf{E}_{s\sim d_\rho^{\pi^*}}\mathbf{E}_{a\sim\pi^*(\cdot\vert s)}[\nabla_\theta\log\pi_{\theta^k}(a\vert s)\cdot(\omega^k-\omega_*^k)]-\frac{B\eta^2}{2}\Vert\omega^k\Vert^2\\
	=&\eta[f(\bm{J}^{\pi^*})-f(\bm{J}^{\pi_{\theta^k}})]+\eta\mathbf{E}_{s\sim d_\rho^{\pi^*}}\mathbf{E}_{a\sim\pi^*(\cdot\vert s)}[\nabla_\theta\log\pi_{\theta^k}(a\vert s)\cdot\omega_*^k]-\eta[f(\bm{J}^{\pi^*})-f(\bm{J}^{\pi_{\theta^k}})]\\
	&+\eta\mathbf{E}_{s\sim d_\rho^{\pi^*}}\mathbf{E}_{a\sim\pi^*(\cdot\vert s)}[\nabla_\theta\log\pi_{\theta^k}(a\vert s)\cdot(\omega^k-\omega_*^k)]-\frac{B\eta^2}{2}\Vert\omega^k\Vert^2\\		\overset{(b)}=&\eta[f(\bm{J}^{\pi^*})-f(\bm{J}^{\pi_{\theta^k}})]+\frac{\eta}{1-\gamma}\mathbf{E}_{s\sim d_\rho^{\pi^*}}\mathbf{E}_{a\sim\pi^*(\cdot\vert s)}\bigg[\nabla_\theta\log\pi_{\theta^k}(a\vert s)\cdot(1-\gamma)\omega_*^k-\sum_{m=1}^{B}\frac{\partial f(\bm{J}^{\pi_{\theta^k}})}{\partial J_m^{\pi_{\theta^k}}}A_m^{\pi_{\theta^k}}(s,a)\bigg]\\
	&+\eta\mathbf{E}_{s\sim d_\rho^{\pi^*}}\mathbf{E}_{a\sim\pi^*(\cdot\vert s)}[\nabla_\theta\log\pi_{\theta^k}(a\vert s)\cdot(\omega^k-\omega_*^k)]-\frac{B\eta^2}{2}\Vert\omega^k\Vert^2\\
	\overset{(c)}\geq&\eta[f(\bm{J}^{\pi^*})-f(\bm{J}^{\pi_{\theta^k}})]\\
	&-\frac{\eta}{1-\gamma}\sqrt{\mathbf{E}_{s\sim d_\rho^{\pi^*}}\mathbf{E}_{a\sim\pi^*(\cdot\vert s)}\bigg[\bigg(\nabla_\theta\log\pi_{\theta^k}(a\vert s)\cdot(1-\gamma)\omega_*^k-\sum_{m=1}^{B}\frac{\partial f(\bm{J}^{\pi_{\theta^k}})}{\partial J_m^{\pi_{\theta^k}}}A_m^{\pi_{\theta^k}}(s,a)\bigg)^2\bigg]}\\
	&-\eta\mathbf{E}_{s\sim d_\rho^{\pi^*}}\mathbf{E}_{a\sim\pi^*(\cdot\vert s)}\Vert\nabla_\theta\log\pi_{\theta^k}(a\vert s)\Vert_2\Vert(\omega^k-\omega_*^k)\Vert-\frac{B\eta^2}{2}\Vert\omega^k\Vert^2\\
	\overset{(d)}\geq&\eta[f(\bm{J}^{\pi^*})-f(\bm{J}^{\pi_{\theta^k}})]-\frac{\eta\sqrt{\epsilon_{bias}}}{1-\gamma}-\eta G\Vert(\omega^k-\omega_*^k)\Vert-\frac{B\eta^2}{2}\Vert\omega^k\Vert^2\\
	\end{aligned}	
	\end{equation}
	where the step (a) holds by Assumption \ref{ass_score} and step (b) holds by Lemma \ref{lem_performance_diff}. Step (c) uses the convexity of the function $f(x)=x^2$. Finally, step (d) comes from the Assumption \ref{ass_transfer_error}. Rearranging items, we have
	\begin{equation}
	\begin{split}
	f(\bm{J}^{\pi^*})-f(\bm{J}^{\pi_{\theta^k}})&\leq \frac{\sqrt{\epsilon_{bias}}}{1-\gamma}+ G\Vert(\omega^k-\omega_*^k)\Vert+\frac{B\eta}{2}\Vert\omega^k\Vert^2\\
	&+\frac{1}{\eta}\mathbf{E}_{s\sim d_\rho^{\pi^*}}[KL(\pi^*(\cdot\vert s)\Vert\pi_{\theta^k}(\cdot\vert s))-KL(\pi^*(\cdot\vert s)\Vert\pi_{\theta^{k+1}}(\cdot\vert s))]
	\end{split}
	\end{equation}
	Summing from $k=0$ to $K-1$ and dividing by $K$, we get the desired result.
\end{proof}
	\section{Proof for Theorem \ref{thm1}}\label{sec_app_thm}
	In this part, we prove the Theorem \ref{thm1} by bounding the three terms on the right hand side of Eq. \eqref{eq:general_bound}. These terms are:  the difference between the update direction $\frac{G}{K}\sum_{k=0}^{K-1}\Vert(\omega^k-\omega_*^k)\Vert$, norm of estimated gradient $\frac{M\eta}{2K}\sum_{k=0}^{K-1}\Vert\omega^k\Vert^2$, and the term about KL divergence $\frac{1}{\eta K}\mathbf{E}_{s\sim d_\rho^{\pi^*}}[KL(\pi^*(\cdot\vert s)\Vert\pi_{\theta^0}(\cdot\vert s))]$
	
	\subsection{Bounding the Difference Between the Update Directions}
	Recall the estimated policy gradient update direction is
	\begin{equation}\label{eq:def_omega^k}
	\omega^k=\frac{1}{N_1}\sum_{i=1}^{N_1}g(\tau_i^H,\tau_j^H\vert \theta)
	\end{equation}
	and the true natural policy gradient update direction is
	\begin{equation}
	\omega_*^k=F_\rho(\theta_k)^{\dagger}\nabla_\theta f(\bm{J}^{\pi_\theta})
	\end{equation}
	We define an auxiliary update direction as
	\begin{equation}\label{eq:def_tilde_omega^k}
		\tilde{\omega}^k=\frac{1}{N_1}\sum_{i=1}^{N_1}\tilde{g}(\tau_i^H,\tau_j^H\vert \theta)
	\end{equation}
	Thus, we can decompose the difference as 
	\begin{equation}
	\begin{split}
	&\bigg(\frac{1}{K}\sum_{k=0}^{K-1}\mathbf{E}\Vert\omega^k-\omega_*^k\Vert\bigg)^2\leq\frac{1}{K}\sum_{k=0}^{K-1}\bigg(\mathbf{E}\Vert\omega^k-\omega_*^k\Vert\bigg)^2\leq \frac{1}{K}\sum_{k=0}^{K-1}\mathbf{E}\bigg[\Vert\omega_k-\omega_*^k\Vert^2\bigg]\\
	&=\frac{1}{K}\sum_{k=0}^{K-1}\mathbf{E}\bigg[\Vert(\omega^k-\tilde{\omega}^k)+(\tilde{\omega}^k-\nabla_\theta f(\bm{J}_H^{\pi_\theta}))+(\nabla_\theta f(\bm{J}_H^{\pi_\theta})-\nabla_\theta f(\bm{J}^{\pi_\theta}))+(\nabla_\theta f(\bm{J}^{\pi_\theta})-F_\rho(\theta^k)^\dagger\nabla_\theta f(\bm{J}^{\pi_\theta}))\Vert^2\bigg]\\
	&\leq \frac{4}{K}\sum_{k=0}^{K-1}\mathbf{E}\bigg[\Vert\omega^k-\tilde{\omega}^k\Vert^2\bigg]+\frac{4}{K}\sum_{k=0}^{K-1}\mathbf{E}\bigg[\Vert\tilde{\omega}^k-\nabla_\theta f(\bm{J}_H^{\pi_\theta})\Vert^2\bigg]+\frac{4}{K}\sum_{k=0}^{K-1}\mathbf{E}\bigg[\Vert\nabla_\theta f(\bm{J}^{\pi_\theta})-\nabla_\theta f(\bm{J}_H^{\pi_\theta})\Vert^2\bigg]\\
	&+\frac{4}{K}\sum_{k=0}^{K-1}\mathbf{E}\bigg[\Vert\nabla_\theta f(\bm{J}^{\pi_\theta})-F_\rho(\theta^k)^\dagger\nabla_\theta f(\bm{J}^{\pi_\theta})\Vert^2\bigg]
	\end{split}
	\end{equation}
The different terms in the above are bounded as follows:

	\begin{itemize}
		\item Bounding $\mathbf{E}\bigg[\Vert\omega^k-\tilde{\omega}^k\Vert^2\bigg]$:  By  Lemma \ref{lem_bound_bias1}, with $N_2$ large enough, for any $\tau_i$ and $\theta$, we have
		\begin{equation}
			\Vert 		g(\tau_i^H\vert\theta)-\tilde{g}(\tau_i^H\vert\theta)\Vert\leq MGL_f\frac{1-\gamma^H-H\gamma^H(1-\gamma)}{(1-\gamma)^2}\epsilon'
		\end{equation}
		Thus,
		\begin{equation}
		\begin{split}
		\Vert\omega^k-\tilde{\omega}^k\Vert&=\Vert\frac{1}{N_1}\sum_{i=1}^{N_1}(g(\tau_i^H\vert \theta^k)-\tilde{g}(\tau_i^H\vert \theta^k))\Vert\leq \frac{1}{N_1}\sum_{i=1}^{N_1}\Vert(g(\tau_i^H\vert \theta^k)-\tilde{g}(\tau_i^H\vert \theta^k))\Vert\\
		&\leq MGL_f\frac{1-\gamma^H-H\gamma^H(1-\gamma)}{(1-\gamma)^2}\epsilon'\leq \frac{MGL_f}{(1-\gamma)^2}\epsilon'
		\end{split}
		\end{equation}
		Thus,
		\begin{equation}
			\mathbf{E}\bigg[\Vert\omega^k-\tilde{\omega}^k\Vert^2\bigg]\leq \frac{M^2G^2L_f^2}{(1-\gamma)^4}\epsilon'^2
		\end{equation}
		\item Bounding $\mathbf{E}\bigg[\Vert\tilde{\omega}^k-\nabla_\theta f(\bm{J}_H^{\pi_\theta})\Vert^2\bigg]$: 	Notice that $\tilde{g}(\tau^H\vert\theta)$ is an unbiased estimator for $\nabla_\theta f(\bm{J}_H^{\pi_\theta})$ and thus by  Assumption \ref{ass_bounded_var}, we have $
		\mathbf{E}\bigg[\Vert\omega^k-\tilde{\omega}^k\Vert^2\bigg]\leq \frac{\sigma^2}{N_1}$
		\item Bounding $\mathbf{E}\bigg[\Vert\nabla_\theta f(\bm{J}^{\pi_\theta})-\nabla_\theta f(\bm{J}_H^{\pi_\theta})\Vert^2\bigg]$: 
		By Lemma \ref{lem_bound_truncated}, we have
		\begin{equation}
			\mathbf{E}\bigg[\Vert\nabla_\theta f(\bm{J}^{\pi_\theta})-\nabla_\theta f(\bm{J}_H^{\pi_\theta})\Vert^2\bigg] \leq \frac{M^2G^2\gamma^{2H}}{(1-\gamma)^4}\bigg[ \sqrt{M}L_f+C[1+H(1-\gamma)]\bigg]^2
		\end{equation}
		\item Bounding $\mathbf{E}\bigg[\Vert\nabla_\theta f(\bm{J}^{\pi_\theta})-F_\rho(\theta^k)^\dagger\nabla_\theta f(\bm{J}^{\pi_\theta})\Vert^2\bigg]$:	By  Assumption \ref{ass_pd}, we have
		\begin{equation}
			\begin{split}
			&\mathbf{E}\bigg[\Vert\nabla_\theta f(\bm{J}^{\pi_\theta})-F_\rho(\theta^k)^\dagger\nabla_\theta f(\bm{J}^{\pi_\theta})\Vert^2\bigg]\leq (1+\frac{1}{\mu_F})^2\mathbf{E}[\Vert \nabla_\theta f(\bm{J}^{\pi_k})\Vert^2]\\
			&\leq (1+\frac{1}{\mu_F})^2\bigg(2\mathbf{E}[\Vert \nabla_\theta f(\bm{J}_H^{\pi_k})\Vert^2]+2\mathbf{E}[\Vert \nabla_\theta f(\bm{J}^{\pi_k})-\nabla_\theta f(\bm{J}_H^{\pi_k})\Vert^2]\bigg)\\
			&\leq (1+\frac{1}{\mu_F})^2\bigg(2\mathbf{E}[\Vert \nabla_\theta f(\bm{J}^{\pi_k})]+\frac{2M^2G^2\gamma^{2H}}{(1-\gamma)^4}\bigg[ \sqrt{M}L_f+C[1+H(1-\gamma)]\bigg]^2\bigg)
		\end{split}
		\end{equation}
	\end{itemize}
	Finally, we obtain the bound
	\begin{equation}
	\begin{split}
	&\bigg(\frac{1}{K}\sum_{k=0}^{K-1}\mathbf{E}\Vert\omega^k-\omega_*^k\Vert\bigg)^2\leq 4\frac{M^2G^2L_f^2}{(1-\gamma)^4}\epsilon'^2+4\frac{\sigma^2}{N_1}+4\frac{M^2G^2\gamma^{2H}}{(1-\gamma)^4}\bigg[ \sqrt{M}L_f+C[1+H(1-\gamma)]\bigg]^2\\
	&+4(1+\frac{1}{\mu_F})^2\bigg(\frac{2}{K}\sum_{k=0}^{K-1}\mathbf{E}[\Vert \nabla_\theta f(\bm{J}^{\pi_k})]+\frac{2M^2G^2\gamma^{2H}}{(1-\gamma)^4}\bigg[ \sqrt{M}L_f+C[1+H(1-\gamma)]\bigg]^2\bigg)\\
	&\overset{(a)}=(1+2(1+\frac{1}{\mu_F})^2)4\frac{M^2G^2\gamma^{2H}}{(1-\gamma)^4}\bigg[ \sqrt{M}L_f+C[1+H(1-\gamma)]\bigg]^2+4\frac{M^2G^2L_f^2}{(1-\gamma)^4}\epsilon'^2+4\frac{\sigma^2}{N_1}\\
	&+8(1+\frac{1}{\mu_F})^2\frac{\frac{\mathbf{E}[f(\bm{J}_H(\theta^K))-f(\bm{J}_H(\theta^0))]}{K}+(\eta+2L_J\eta^2)[\frac{M^2G^2L_f^2}{(1-\gamma)^4}\epsilon'^2+\frac{\sigma^2}{N_1}]}{\frac{\eta}{2}-L_J\eta^2}\\
	&=(1+2(1+\frac{1}{\mu_F})^2)4\frac{M^2G^2\gamma^{2H}}{(1-\gamma)^4}\bigg[ \sqrt{M}L_f+C[1+H(1-\gamma)]\bigg]^2+(1+6(1+\frac{1}{\mu_F})^2)4\frac{M^2G^2L_f^2}{(1-\gamma)^4}\epsilon'^2\\
	&+(1+6(1+\frac{1}{\mu_F})^2)4\frac{\sigma^2}{N_1}+128(1+\frac{1}{\mu_F})^2L_J\frac{\mathbf{E}[f(\bm{J}_H(\theta^K))-f(\bm{J}_H(\theta^0))]}{K}
	\end{split}
	\end{equation}
	where the step (a) requires the first-order stationary property Eq. \eqref{eq:first_order} and it is proved in the Lemma \ref{lem_fisrt_order} in the Appendix \ref{sec_app_first_order}. Given the fixed $\epsilon$, choose the value for $H,\epsilon',N_1,K$ as follows,
	\begin{equation}
	\frac{1}{4}\bigg(\frac{\epsilon^2}{3G^2}\bigg)\geq (1+2(1+\frac{1}{\mu_F})^2)\frac{4M^2G^2\gamma^{2H}}{(1-\gamma)^4}\bigg[ \sqrt{M}L_f+C[1+H(1-\gamma)]\bigg]^2
	\end{equation}
	\begin{equation}
		\epsilon'^2\leq\frac{1}{4(1+6(1+\frac{1}{\mu_F})^2)}\frac{(1-\gamma)^4}{M^2G^2L_f^2}\cdot\frac{1}{4}\bigg(\frac{\epsilon^2}{3G^2}\bigg)
	\end{equation}
	\begin{equation}
		N_1\geq\frac{(1+6(1+\frac{1}{\mu_F})^2)4\sigma^2}{\frac{1}{4}\bigg(\frac{\epsilon^2}{3G^2}\bigg)}
	\end{equation}
	\begin{equation}
		K\geq\frac{128(1+\frac{1}{\mu_F})^2L_J\mathbf{E}[f(\bm{J}_H(\theta^K))-f(\bm{J}_H(\theta^0))]}{\frac{1}{4}\bigg(\frac{\epsilon^2}{3G^2}\bigg)}
	\end{equation}
	then we have
	\begin{equation}
	\frac{G}{K}\sum_{k=0}^{K-1}\mathbf{E}[\Vert\omega^k-\omega_*^k]\Vert\leq \frac{\epsilon}{3}
	\end{equation}
	Given the choice of $H,N_1,\epsilon',K$, the dependence of $N_1,N_2,K$ and $H$ on $\sigma, \epsilon, 1-\gamma$ are as follows.
	\begin{equation}\label{eq_require1}
		N_1=\mathcal{O}(\frac{\sigma^2}{\epsilon^2})\quad N_2=\mathcal{O}(\frac{M^3}{(1-\gamma)^6\epsilon^2})\quad K=\mathcal{O}(\frac{M}{(1-\gamma)^2\epsilon^2})\quad
		H=\mathcal{O}(\log\frac{M}{(1-\gamma)\epsilon})
	\end{equation}
	\subsection{Bounding the Norm of Estimated Gradient}
	\begin{equation}
	\begin{split}
	\frac{B\eta}{2K}\sum_{k=0}^{K-1}\Vert \omega^k\Vert^2&\leq\frac{B\eta}{2}\bigg[\frac{3}{K}\sum_{k=0}^{K-1}\Vert \omega^k-\tilde{\omega}^k\Vert^2+\frac{3}{K}\sum_{k=0}^{K-1}\Vert \tilde{\omega}^k-\nabla_\theta f(\bm{J}_H^{\pi_\theta})\Vert^2+\frac{3}{K}\sum_{k=0}^{K-1}\Vert \nabla_\theta f(\bm{J}_H^{\pi_\theta}))\Vert^2\bigg]\\
	&\leq \frac{B\eta}{2}\bigg[3\frac{M^2G^2L_f^2}{(1-\gamma)^4}\epsilon'^2+3\frac{\sigma^2}{N_1}+3\frac{\frac{\mathbf{E}[f(\bm{J}_H(\theta^K))-f(\bm{J}_H(\theta^0))]}{K}+(\eta+2L_J\eta^2)[\frac{M^2G^2L_f^2}{(1-\gamma)^4}\epsilon'^2+\frac{\sigma^2}{N_1}]}{\frac{\eta}{2}-L_J\eta^2}\bigg]\\
	&=B\eta\bigg[6\frac{M^2G^2L_f^2}{(1-\gamma)^4}\epsilon'^2+6\frac{\sigma^2}{N_1}+24L_J\frac{\mathbf{E}[f(\bm{J}_H(\theta^K))-f(\bm{J}_H(\theta^0))]}{K}\bigg]
	\end{split}
	\end{equation}
	Given the fixed $\epsilon$, choose the value for $\epsilon',N_1,K$ as follows,
	\begin{equation}
	\epsilon'^2\leq\frac{(1-\gamma)^4}{M^2G^2L_f^2}\cdot\frac{1}{6B\eta}\bigg(\frac{\epsilon}{9}\bigg)
	\end{equation}
	\begin{equation}
	N_1\geq\frac{54\sigma^2}{\epsilon}
	\end{equation}
	\begin{equation}
	K\geq\frac{216L_J\mathbf{E}[f(\bm{J}_H(\theta^K))-f(\bm{J}_H(\theta^0))]}{\epsilon}
	\end{equation}
	then we have
	\begin{equation}
	\frac{1}{K}\sum_{k=0}^{K-1}\mathbf{E}[\Vert\omega^k]\Vert^2\leq \frac{\epsilon}{3}
	\end{equation}
	Given the choice of $\epsilon',N_1,K$, the dependence of $N_1,N_2,K$ and $H$ on $\sigma, \epsilon, 1-\gamma$ are as follows.
	\begin{equation}\label{eq_require2}
	N_1=\mathcal{O}(\frac{\sigma^2}{\epsilon})\quad N_2=\mathcal{O}(\frac{M^3}{(1-\gamma)^6\epsilon})\quad K=\mathcal{O}(\frac{M}{(1-\gamma)^2\epsilon})\quad
	H=\mathcal{O}(\log\frac{M}{(1-\gamma)\epsilon})
	\end{equation}
	\subsection{Bounding the KL Divergence}
	It is obvious if we choose
	\begin{equation}
	K\geq\frac{3\mathbf{E}_{s\sim d_\rho^{\pi^*}}[KL(\pi^*(\cdot\vert s)\Vert\pi_{\theta^0})]}{\eta\epsilon(\cdot\vert s)}
	\end{equation}
	then
	\begin{equation}
	\frac{1}{\eta K}\mathbf{E}_{s\sim d_\rho^{\pi^*}}[KL(\pi^*(\cdot\vert s)\Vert\pi_{\theta^0})]\leq \frac{\epsilon}{3}
	\end{equation}
	In other word, the dependence of $K$ on $\epsilon$ is
	\begin{equation}\label{eq_require3}
		K=\mathcal{O}(\frac{B}{\epsilon})
	\end{equation}
\section{First Order Stationary Result for Policy Gradient}\label{sec_app_first_order}
\begin{lemma}\label{lem_fisrt_order}
	The policy gradient algorithm can achieve first-order stationary. More formally, if we choose the step size $\eta=\frac{1}{4L_J}$ and
	\begin{equation}
		N_1=\mathcal{O}(\frac{\sigma^2}{\epsilon})\quad
		N_2=\mathcal{O}(\frac{M^3}{(1-\gamma)^6\epsilon})\quad
		K=\mathcal{O}(\frac{M}{(1-\gamma)^2\epsilon})
	\end{equation}
	then,
	\begin{equation}
		\frac{1}{K}\sum_{k=0}^{K-1}\mathbf{E}[\Vert\nabla_\theta f(\bm{J}_H(\theta^k))\Vert^2]\leq\epsilon
	\end{equation}
\end{lemma}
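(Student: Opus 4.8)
The quantity to control is the gradient of the \emph{truncated} objective $f(\bm{J}_H^{\pi_\theta})$, and the natural strategy is the standard ascent analysis for smooth non-convex stochastic optimization, adapted to a \emph{biased} gradient oracle. The plan is to use $f(\bm{J}_H^{\pi_{\theta^k}})$ as a Lyapunov function. Since Lemma \ref{lem_smooth} gives $L_J$-smoothness of $f(\bm{J}_H^{\pi_\theta})$, the update $\theta^{k+1}=\theta^k+\eta\omega^k$ yields the one-step ascent inequality
\begin{equation*}
f(\bm{J}_H^{\pi_{\theta^{k+1}}}) \ge f(\bm{J}_H^{\pi_{\theta^k}}) + \eta\langle \nabla_\theta f(\bm{J}_H^{\pi_{\theta^k}}),\omega^k\rangle - \frac{L_J\eta^2}{2}\Vert\omega^k\Vert^2 .
\end{equation*}
Writing $\nabla_k := \nabla_\theta f(\bm{J}_H^{\pi_{\theta^k}})$, the whole argument reduces to lower bounding $\mathbf{E}\langle\nabla_k,\omega^k\rangle$ and upper bounding $\mathbf{E}\Vert\omega^k\Vert^2$ in terms of $\mathbf{E}\Vert\nabla_k\Vert^2$ plus controllable error.

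The key device is the decomposition through the auxiliary direction $\tilde\omega^k=\frac{1}{N_1}\sum_i\tilde{g}(\tau_i^H,\tau_j^H\vert\theta^k)$ introduced before Assumption \ref{ass_bounded_var}, namely $\omega^k = \nabla_k + (\tilde\omega^k-\nabla_k) + (\omega^k-\tilde\omega^k)$. Since $\tilde{g}(\tau_i^H,\tau_j^H\vert\theta^k)$ is an \emph{unbiased} estimator of $\nabla_k$ (as recorded in the proof of Lemma \ref{lem_bound_truncated}) and the $N_1$ trajectories $\tau_i$ are i.i.d., the middle term is conditionally zero-mean with $\mathbf{E}[\Vert\tilde\omega^k-\nabla_k\Vert^2\mid\theta^k]\le\sigma^2/N_1$ by Assumption \ref{ass_bounded_var}. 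The last term is deterministically bounded, on the high-probability event of Lemma \ref{lem_bound_bias1}, by $\Vert\omega^k-\tilde\omega^k\Vert\le b:=\frac{MGL_f}{(1-\gamma)^2}\epsilon'$ once $N_2$ is large enough. Taking conditional expectations, the zero-mean part drops out of $\mathbf{E}\langle\nabla_k,\omega^k\rangle$, and Cauchy--Schwarz plus Young's inequality on the surviving cross terms give $\mathbf{E}\langle\nabla_k,\omega^k\rangle\ge \tfrac12\mathbf{E}\Vert\nabla_k\Vert^2 - \tfrac12 b^2$ together with $\mathbf{E}\Vert\omega^k\Vert^2 \le 3\,\mathbf{E}\Vert\nabla_k\Vert^2 + 3b^2+3\sigma^2/N_1$.

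Substituting into the ascent inequality and choosing $\eta=\frac{1}{4L_J}$ keeps the net coefficient of $\mathbf{E}\Vert\nabla_k\Vert^2$ (of the form $\frac{\eta}{2}-O(L_J\eta^2)$) strictly positive; telescoping over $k=0,\dots,K-1$ collapses the Lyapunov terms to $\big(\mathbf{E}[f(\bm{J}_H^{\pi_{\theta^K}})]-f(\bm{J}_H^{\pi_{\theta^0}})\big)/K$, which is bounded because $f$ is bounded on the compact box $[0,\frac{1}{1-\gamma}]^M$ (Lemma \ref{lem_bound_parital}). After dividing by $K$ one obtains a bound of the form
\begin{equation*}
\frac{1}{K}\sum_{k=0}^{K-1}\mathbf{E}\Vert\nabla_k\Vert^2 \le C_1 L_J\frac{\mathbf{E}[f(\bm{J}_H^{\pi_{\theta^K}})-f(\bm{J}_H^{\pi_{\theta^0}})]}{K} + C_2\Big(b^2 + \frac{\sigma^2}{N_1}\Big),
\end{equation*}
and it remains to balance the three pieces. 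Setting $N_1=\mathcal{O}(\sigma^2/\epsilon)$ controls the variance term, choosing $\epsilon'=\mathcal{O}\big(\tfrac{(1-\gamma)^2}{MGL_f}\sqrt{\epsilon}\big)$ — which through the sample requirement of Lemma \ref{lem_bound_bias1} translates into $N_2=\mathcal{O}\big(\tfrac{M^3}{(1-\gamma)^6\epsilon}\big)$ — controls the bias term, and $K=\mathcal{O}\big(\tfrac{M}{(1-\gamma)^2\epsilon}\big)$ controls the Lyapunov term.

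The main obstacle is precisely the \textbf{bias}: unlike the textbook analysis in which the stochastic gradient is unbiased, here $\omega^k$ is a biased estimate of $\nabla_k$ both because of the empirical value function plugged into $\partial f/\partial J_m^\pi$ and because of the Jensen gap from concavity (Lemma \ref{lem_biased_est}). The auxiliary $\tilde\omega^k$ is exactly what separates the \emph{vanishing} zero-mean fluctuation (killed by $N_1$) from the \emph{non-vanishing} but deterministically small bias (killed by $N_2$, i.e.\ by $\epsilon'$), and Young's inequality must be applied so that the bias enters only additively rather than overwhelming the negative drift $-\frac{L_J\eta^2}{2}\Vert\omega^k\Vert^2$. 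A secondary technical point is that the bound of Lemma \ref{lem_bound_bias1} holds only with probability $1-p$, so the complementary event must be absorbed using the uniform boundedness of all the quantities involved (or $p$ driven down as $N_2$ grows).
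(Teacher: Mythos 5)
Your proposal is correct and follows essentially the same route as the paper's proof: the $L_J$-smoothness ascent inequality from Lemma \ref{lem_smooth}, the decomposition of $\omega^k$ through the auxiliary unbiased direction $\tilde\omega^k$, the bias controlled by Lemma \ref{lem_bound_bias1} via $\epsilon'$ (hence $N_2$), the variance by Assumption \ref{ass_bounded_var} (hence $N_1$), and telescoping with the same choices of $\eta$, $N_1$, $N_2$, $K$. The only differences are cosmetic refinements on your side --- using conditional unbiasedness to drop the zero-mean cross term where the paper instead applies Young's inequality to the full deviation $\omega^k-\nabla_k$, and explicitly flagging the probability-$(1-p)$ caveat of Lemma \ref{lem_bound_bias1}, which the paper applies as if deterministic.
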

\begin{proof}
	Recall the definition of $\omega^k$ and $\tilde{\omega}^k$ in Eq. \eqref{eq:def_omega^k} and \eqref{eq:def_tilde_omega^k}, respectively. By Lemma \ref{lem_smooth}, we have
	\begin{equation}
		\begin{split}
		f(\bm{J}_H(\theta^{k+1}))&\geq f(\bm{J}_H(\theta^{k}))+\left<\nabla_\theta f(\bm{J}_H(\theta^k)),\theta^{k+1}-\theta^k\right>-\frac{L_J}{2}\Vert \theta^{k+1}-\theta^k\Vert^2\\
		&=f(\bm{J}_H(\theta^{k}))+\eta\left<\nabla_\theta f(\bm{J}_H(\theta^k)),\omega^k\right>-\frac{L_J\eta^2}{2}\Vert \omega^k\Vert^2\\
		&\overset{(a)}=f(\bm{J}_H(\theta^{k}))+\eta\left<\nabla_\theta f(\bm{J}_H(\theta^k)),\omega^k-\nabla_\theta f(\bm{J}_H(\theta^k))+\nabla_\theta f(\bm{J}_H(\theta^k))\right>\\
		&\quad-\frac{L_J\eta^2}{2}\Vert \omega^k-\nabla_\theta f(\bm{J}_H(\theta^k))+\nabla_\theta f(\bm{J}_H(\theta^k))\Vert^2\\
		&\overset{(b)}\geq f(\bm{J}_H(\theta^{k}))+\eta\Vert\nabla_\theta f(\bm{J}_H(\theta^k))\Vert^2-\eta\vert\left<\nabla_\theta f(\bm{J}_H(\theta^k)),\omega^k-\nabla_\theta f(\bm{J}_H(\theta^k))\right>\vert\\
		&\quad-L_J\eta^2\bigg(\Vert \omega^k-\nabla_\theta f(\bm{J}_H(\theta^k))\Vert^2+\Vert\nabla_\theta f(\bm{J}_H(\theta^k))\Vert^2\bigg)\\
		&\geq f(\bm{J}_H(\theta^{k}))+\eta\Vert\nabla_\theta f(\bm{J}_H(\theta^k))\Vert^2-\frac{\eta}{2}\Vert\nabla_\theta f(\bm{J}_H(\theta^k))\Vert^2-\frac{\eta}{2}\Vert \omega^k-\nabla_\theta f(\bm{J}_H(\theta^k))\Vert^2\\
		&\quad-L_J\eta^2\bigg(\Vert \omega^k-\nabla_\theta f(\bm{J}_H(\theta^k))\Vert^2+\Vert\nabla_\theta f(\bm{J}_H(\theta^k))\Vert^2\bigg)\\
		&= f(\bm{J}_H(\theta^{k}))+(\frac{\eta}{2}-L_J\eta^2)\Vert\nabla_\theta f(\bm{J}_H(\theta^k))\Vert^2-(\frac{\eta}{2}+L_J\eta^2)\Vert \omega^k-\nabla_\theta f(\bm{J}_H(\theta^k))\Vert^2\\
		&\overset{(c)}\geq f(\bm{J}_H(\theta^{k}))+(\frac{\eta}{2}-L_J\eta^2)\Vert\nabla_\theta f(\bm{J}_H(\theta^k))\Vert^2-(\eta+2L_J\eta^2)\Vert \omega^k-\tilde{\omega}^k\Vert^2\\
		&\quad -(\eta+2L_J\eta^2)\Vert \tilde{\omega}^k-\nabla_\theta f(\bm{J}_H(\theta^k))\Vert^2\\
		&\overset{(d)}\geq f(\bm{J}_H(\theta^{k}))+(\frac{\eta}{2}-L_J\eta^2)\Vert\nabla_\theta f(\bm{J}_H(\theta^k))\Vert^2-(\eta+2L_J\eta^2)\frac{M^2G^2L_f^2}{(1-\gamma)^4}\epsilon'^2\\
		&-(\eta+2L_J\eta^2)\Vert \tilde{\omega}^k-\nabla_\theta f(\bm{J}_H(\theta^k))\Vert^2
		\end{split}
	\end{equation}
	where the step (a) holds by $\theta^{k+1}=\theta^k+\eta \omega^k$. Step (b) and (c) holds by Cauchy-Schwarz Inequality. Step (d) holds by Lemma \ref{lem_bound_bias1}. Then, take expectation with respect to the trajectories $\tau_i,\tau_j$ (Recall that $\theta^k,\theta^{k+1}$ is a function of $\tau_i,\tau_j$), we have
	\begin{equation}\label{eq:first_order_telescope}
		\begin{split}
		\mathbf{E}[f(\bm{J}_H(\theta^{k+1}))]&\geq \mathbf{E}[f(\bm{J}_H(\theta^{k}))]+(\frac{\eta}{2}-L_J\eta^2)\mathbf{E}[\Vert\nabla_\theta f(\bm{J}_H(\theta^k))\Vert^2]-(\eta+2L_J\eta^2)\frac{M^2G^2L_f^2}{(1-\gamma)^4}\epsilon'^2\\
		&\quad-(\eta+2L_J\eta^2)\mathbf{E}[\Vert \tilde{g}^k-\nabla_\theta f(\bm{J}_H(\theta^k))\Vert^2]\\
		&\geq \mathbf{E}[f(\bm{J}_H(\theta^{k}))]+(\frac{\eta}{2}-L_J\eta^2)\mathbf{E}[\Vert\nabla_\theta f(\bm{J}_H(\theta^k))\Vert^2]-(\eta+2L_J\eta^2)\frac{M^2G^2L_f^2}{(1-\gamma)^4}\epsilon'^2\\
		&\quad-(\eta+2L_J\eta^2)\frac{\sigma^2}{N_1}
		\end{split}
	\end{equation}
	where the last step holds by Assumption \ref{ass_bounded_var}. Notice that in Eq. \eqref{eq:first_order_telescope}, $\mathbf{E}[f(\bm{J}_H(\theta^{k+1}))]$ and $\mathbf{E}[f(\bm{J}_H(\theta^{k}))]$ give a recursive form. Thus, telescoping from $k=0$ to $k=K-1$, we have
	\begin{equation}\label{eq:first_order}
		\frac{\mathbf{E}[f(\bm{J}_H(\theta^{K}))-f(\bm{J}_H(\theta^0))]}{K}\geq (\frac{\eta}{2}-L_J\eta^2)\frac{1}{K}\sum_{k=0}^{K-1}\mathbf{E}[\Vert\nabla_\theta f(\bm{J}_H(\theta^k))\Vert^2]-(\eta+2L_J\eta^2)[\frac{M^2G^2L_f^2}{(1-\gamma)^4}\epsilon'^2+\frac{\sigma^2}{N_1}]
	\end{equation}
	and thus
	\begin{equation}
		\frac{1}{K}\sum_{k=0}^{K-1}\mathbf{E}[\Vert\nabla_\theta f(\bm{J}_H(\theta^k))\Vert^2]\leq \frac{\frac{\mathbf{E}[f(\bm{J}_H(\theta^K))-f(\bm{J}_H(\theta^0))]}{K}+(\eta+2L_J\eta^2)[\frac{M^2G^2L_f^2}{(1-\gamma)^4}\epsilon'^2+\frac{\sigma^2}{N_1}]}{\frac{\eta}{2}-L_J\eta^2}
	\end{equation}
	Taking $\eta=\frac{1}{4L_J}$ and letting $N_1=\frac{18\sigma^2}{\epsilon}$, $K=\frac{48L_J\mathbf{E}[\Vert\nabla_\theta f(\bm{J}_H(\theta^K))-\nabla_\theta f(\bm{J}_H(\theta^0))\Vert^2]}{\epsilon}$ and $\epsilon'=\frac{(1-\gamma)^2}{MGL_f}\sqrt{\frac{\epsilon}{6}}$, we have
	\begin{equation}
		\frac{1}{K}\sum_{k=0}^{K-1}\mathbf{E}[\Vert\nabla_\theta f(\bm{J}_H(\theta^k))\Vert^2]\leq\epsilon
	\end{equation}
	Recalling the definition of $N_2$ in the statement of Lemma \ref{lem_bound_bias1}, we have
	\begin{equation}
		N_2=\frac{6M^3G^2L_f^2(1-\gamma^H)^2}{(1-\gamma)^6\epsilon}\log(\frac{2MH}{p})
	\end{equation}
	\if 0
	\begin{equation}
		H=-\log_\gamma(\frac{2MGL_f}{(1-\gamma)^3}\sqrt{\frac{3M}{\epsilon}})
	\end{equation}
	\fi 
	Also, by the definition of $L_J$ in the lemma \ref{lem_smooth}
	\begin{equation}
		K=\frac{48MCB}{(1-\gamma)^2\epsilon}\mathbf{E}[\Vert\nabla_\theta f(\bm{J}_H(\theta^k))\Vert^2]
	\end{equation}
\end{proof}
\section{Further Discussion on all Asumptions}\label{app_discussion}
\begin{itemize}
	\item Assumption 1 is related to the bound for reward and it can always be satisfied by scalarization or shifting.
	\item Assumptions 3 and 4 are about the function class. They require a concave function with local-Lipschitz partial derivatives. As we discussed in the limitation, many function with regularization such as $\log(x), -x^2, \sqrt{x}, \sin(x)$ will satisfy these conditions.
	\item  The remaining 4 assumptions limit the policy parameterization 
	\begin{itemize}
		\item We would like to say assumption 5 can be implied by assumption 2. This is because $\tilde{g}(\tau_i^H,\tau_j^H\vert \theta)$ is bounded under assumption 2 and thus the variance is also bounded. 
		\item The property that the likelihood is smooth and the gradient of it is bounded can be satisfied by Gaussian policy (Appendix C in \cite{Tianbing2017}) and log-linear policy class (Remark 6.7 in \cite{Alekh2020}).
		\item The positive definite property of Fisher matrix can also be satisfied by Gaussian Policy (Appendix B.2 in \cite{Yanli2020}) and log-linear policy class (Assumption 6.5 part 3 in \cite{Alekh2020}).
		\item For the last assumption, the intuition for $\epsilon_{bias}=0$ is that the difference between Eq. 31 and 32 is only the distribution of state and action. What we want here is that Eq. 31 is equal to 0 for any distribution. Using any policy parameterizations with $\theta\in\mathbb{R}^{d}$, we have $|S|\times|A|$ equations (one corresponding to each state-action pair) with $d$ variables. If $d=|S|\times|A|$, we will have $\epsilon_{bias}=0$. Thus, any complete parameterization for tabular case will have $\epsilon_{bias}=0$. For the general case, a linear MDP \cite{jin2020provably} will also give $\epsilon_{bias}=0$ as long as we use the features of the linear MDP (Remark 6.4 in \cite{Alekh2020}) and both Gaussian policy and log-linear policy can be used.
		\item Above all, Gaussian policy and log-linear policy satisfy the above 4 assumptions. 
	\end{itemize}
\end{itemize}

		\bibliographystyle{theapa}
	
		\bibliography{ref}
	
\end{document}